%% file: main.tex
\ifdefined\FABLEDOCUMENTCLASSLOADED
\else
\documentclass[letterpaper]{article} 
\fi
\usepackage[preprint]{aaai2027}
\usepackage[hyphens]{url}  
\usepackage{graphicx} 
\usepackage{natbib}  
\usepackage{caption} 
\usepackage{algorithm}
\usepackage{algorithmic}
\usepackage{booktabs}
\usepackage{amsmath,amssymb,amsfonts,amsthm}
\usepackage{array}

\newtheorem{assumption}{Assumption}
\newtheorem{proposition}{Proposition}
\newtheorem{theorem}{Theorem}
\newtheorem{corollary}{Corollary}
\newtheorem{lemma}{Lemma}
\newtheorem{remark}{Remark}

\newcommand{\R}{\mathbb{R}}
\newcommand{\E}{\mathbb{E}}
\newcommand{\calA}{\mathcal{A}}
\newcommand{\calM}{\mathcal{M}}
\newcommand{\calT}{\mathcal{T}}
\newcommand{\calS}{\mathcal{S}}
\newcommand{\calK}{\mathcal{K}}
\newcommand{\feas}{\mathrm{feas}}
\newcommand{\cost}{\mathrm{cost}}
\newcommand{\argmax}{\operatorname*{arg\,max}}

\newcommand{\Normal}{\mathcal{N}}

\title{Personalizing Large Language Model Agents with Small Policy Models}
\author{
    Dian Jin\textsuperscript{\rm 1},
    Zhi Zhang\textsuperscript{\rm 2},
    Huichao Li\textsuperscript{\rm 1},
    Yihe Pan\textsuperscript{\rm 3},
    Rundong Huang\textsuperscript{\rm 1},
    Doudou Zhou\textsuperscript{\rm 1}\thanks{Corresponding author:
    \texttt{doudouzhou@nus.edu.sg}}
}
\affiliations{
    \textsuperscript{\rm 1}Department of Statistics and Data Science, National University of Singapore\\
    \textsuperscript{\rm 2}Department of Statistics and Data Science, University of California, Los Angeles\\
    \textsuperscript{\rm 3}Department of Data Science, Fudan University
}

\begin{document}

\maketitle

\begin{abstract}
Large language model (LLM) agents can retrieve memory, call tools, ask clarifying questions, and vary response style, yet adapting these execution decisions to an individual user remains difficult. Fine-tuning a separate LLM is costly or impossible for proprietary systems, while prompts and memory primarily expose user information to the agent rather than adapt its execution decisions from feedback. We formulate personalization of a frozen agent as online learning of a per-user execution policy from scalar feedback observed only for the executed action. We propose FABLE (Factorized Adaptive Bandit Layer for Execution), a lightweight policy layer outside a potentially black-box host agent. FABLE factorizes memory, information-acquisition, and response decisions so feedback updates related choices; filters actions through an externally specified feasible set before exploration; and learns user-specific residual preferences relative to a fixed default-and-cost score via Bayesian contextual Thompson sampling. Under a linear residual-reward model, FABLE with a theoretically calibrated
Gaussian sampling scale admits a
$\widetilde O(d^{3/2}\sqrt{n})$ regret guarantee against the best feasible
action at each round. On a four-domain tau2-bench evaluation, FABLE attains
the highest observed personalized reward and synthetic verbosity alignment
while tying the highest task-success mean. Matched comparisons support
factorization, onboarding, and online adaptation for preference-sensitive
metrics, but not task-success improvement.
An interactive demo is available at \url{https://fable-agent.github.io/}.

\end{abstract}

\section{Introduction}

For agents based on large language models (LLMs), personalization concerns both
the user information available to an agent and the execution policy governing
how it acts on that information. This policy determines how the agent uses
memory and tools, when it seeks clarification or confirmation, and how it
presents the final answer. In response to the same travel-planning request, one
user may expect the agent to retrieve relevant past trips, verify current
options online, and confirm before booking, whereas another may prohibit memory
access and prefer a direct answer without follow-up questions. Choosing between
these behaviors for a particular user is a decision problem that the host
model's capabilities alone do not settle.

Existing approaches leave this decision problem open in different ways.
Per-user fine-tuning adapts model parameters, but it is costly and unavailable
when the host is proprietary~\citep{tan2024oppu}. Profiles, prompts, and memory
systems supply user information to a fixed model, yet a rule stated in a prompt
is never revised when the user's later reactions contradict it, and these
mechanisms do not decide when the agent should consult memory, invoke a tool,
or ask a question~\citep{salemi2024lamp,chhikara2025mem0}. Learned decision
layers for frozen agents come closest: they select actions around a fixed host
from offline rollouts or online
feedback~\citep{yi2026harness,yu2026olivia}. However, they learn one shared
policy for task execution rather than a policy per user, and the online variant
conditions on the host's hidden states, which a black-box API does not expose.
Finally, recent personalized-agent methods adapt a single mechanism from
interaction, such as decoding, memory use, or tool
selection~\citep{qu2025tpop,liang2026pahf,yoon2026mpt}. In an agent, however,
the user reacts to the delivered interaction as a whole, and that reaction
reflects the combined effect of the memory, information-acquisition, and
response choices; a method that adapts one mechanism in isolation has no
defined way to attribute this single signal across the choices that produced
it.

This feedback structure is the central difficulty. Each interaction executes
one combination of execution decisions, and the user's reaction or the task
outcome is observed only for that combination; how the alternatives would have
performed is never revealed. Observations may therefore be frequent yet
individually uninformative about most of the action space. The combinations
nevertheless share components. Evidence that a user dislikes unnecessary
clarification should carry over to actions that ask for clarification under
different memory or tool settings, whereas a flat model over complete
combinations would relearn this preference once per combination.

Two further properties of the setting shape our formulation. Onboarding
information is useful but fallible, so it should enter as revisable prior
evidence rather than as a fixed rule. Permissions, tool availability, and
mandatory confirmations are not preferences at all: they determine which
actions may be attempted, and they must be enforced before any exploration. We
therefore formulate personalization of a frozen agent as online learning of a
per-user execution policy over a factorized action space, from scalar feedback
observed only for the selected action, subject to externally specified
feasibility constraints.

\textbf{FABLE} (\emph{Factorized Adaptive Bandit Layer for Execution})
instantiates this formulation as a compact Bayesian policy layer that requires
no access to host weights, gradients, or hidden states and no changes to tool
implementations. Onboarding initializes a revisable residual state. At each
interaction, a context adapter summarizes the request, interaction history,
non-preference background, and hard state; a feasibility filter removes
inadmissible actions; and the policy selects a factorized action over memory
use, information acquisition, and response behavior, which is compiled into
instructions for the host. Expected feedback is decomposed into a prespecified
default score, an operational cost, and a user-specific residual, and Bayesian
Thompson sampling is applied only to the residual, so a single scalar
observation updates the feature directions shared by related actions.

Our contributions are as follows.
\begin{itemize}
\item We formalize per-user execution-policy learning for a frozen,
potentially black-box agent as a feasibility-constrained contextual bandit
over factorized execution actions with selected-action scalar feedback, and
instantiate it as the FABLE algorithm
(Section~\ref{sec:problem-method}).
\item We establish a feasible-oracle regret guarantee for FABLE under a
linear residual-feedback model. For a predictable known default--cost offset
that may vary with context and action, a general Gaussian working
initialization---including the onboarding-informed initialization used by
FABLE---and the calibrated Gaussian sampling schedule
\(\nu_t=\sqrt{9d\log(t/\delta)}\), we adapt linear Thompson-sampling analysis
to obtain \(\widetilde O(d^{3/2}\sqrt n)\) high-probability regret against
the best action in each supplied feasible set, together with an
expected-regret guarantee of the same order
(Theorem~\ref{thm:fable-ts-regret}). Supplementary analyses further
characterize exactly which
preference directions remain identifiable when feasibility constraints
persistently exclude actions
(Supplementary Proposition~2 and Corollary~1), and provide an anytime-valid
confidence-sequence rule that controls false promotion of learned preferences
(Appendix~G).
\item We evaluate FABLE in a four-domain tau2-bench protocol
(Section~\ref{sec:experiments}). The full policy has the highest observed
personalized reward and alignment and ties the highest task-success mean.
Relative to the host, its paired gains are \(+0.077\) reward and \(+0.281\)
alignment, both with positive 95\% CIs; task success remains unresolved.
Matched controls further support cross-action sharing, revisable onboarding,
and online adaptation for preference-sensitive metrics. Appendix~C reports
complementary PAHF and Math500 studies and negative cases.
\end{itemize}

\section{Related Work}
\label{sec:related-work}

\paragraph{Execution control for fixed agents.}
ReAct interleaves language reasoning with environment actions, while Toolformer
trains a language model to invoke external tools
\citep{yao2023react,schick2023toolformer}. These approaches place execution
decisions within the generation process. A complementary line of work learns a
lightweight controller around a fixed agent. \citet{yi2026harness} train an
external controller for a frozen agent harness from offline rollouts. OLIVIA
places a contextual linear bandit at the final action-selection layer of a
frozen ReAct agent and updates it online from action-level feedback, using the
agent's hidden states as contexts~\citep{yu2026olivia}. Concurrent work
MemCon wraps a fixed memory backend with an online contextual-bandit controller
that selects retrieval, plan injection, consolidation, and forgetting
operations from task-level binary feedback~\citep{jiang2026memcon}. OLIVIA
adapts local ReAct action selection, while MemCon controls memory operations
across tasks. FABLE instead maintains a per-user posterior over joint
memory, information-acquisition, and response actions. It requires no access to
host weights, gradients, or hidden states and restricts exploration to an
externally supplied feasible set.

\paragraph{User state and adaptive memory.}
Profiles and long-term memory determine what personal information can be made
available to an agent. LaMP benchmarks profile-conditioned personalization
\citep{salemi2024lamp}, while MemoryBank, MemGPT, A-MEM, and Mem0 develop
mechanisms for storing and retrieving information across interactions
\citep{zhong2023memorybank,packer2023memgpt,xu2025mem,chhikara2025mem0}.
Memory management can itself be adaptive: Reflective Memory Management uses
online reinforcement learning to refine retrieval for long-term personalized
dialogue~\citep{tan2025rmm}. VARS updates per-user retrieval vectors online from
weak scalar feedback~\citep{hao2026vars}, PURPLE uses a contextual bandit to
construct query-specific profiles from user records~\citep{du2026purple}, and
MemToolAgent distills user and environment feedback into memories that guide
later tool use~\citep{er2026memtoolagent}. PersonaAgent connects remembered user
information with downstream actions~\citep{zhang2026personaagent}, whereas
SAGER evolves a user-specific natural-language policy skill for recommendation
reasoning~\citep{tao2026sager}. These methods adapt memory content, retrieval, or
the reasoning policy itself. FABLE holds those mechanisms fixed and treats
memory mode as one component of a broader joint execution action.

\paragraph{Interactive personalization and information acquisition.}
Personalization can occur at test time through explicit preferences or
continuing user interaction. Amulet realigns a frozen LLM from an explicit
preference prompt by treating each token distribution as an online
optimization problem and applying a closed-form proximal-FTRL update
\citep{zhang2025amulet}. T-POP instead learns a neural reward model from
online pairwise preference feedback and combines test-time alignment with
neural dueling-bandit exploration~\citep{qu2025tpop}. PAHF and
MultiSessionCollab study preferences revealed over repeated interactions
\citep{liang2026pahf,mehri2026multisessioncollab}, while PrefDisco studies
just-in-time preference discovery for personalized reasoning
\citep{li2025prefdisco}. Some methods acquire missing preferences or
specifications by questioning the user directly. ADAPT evaluates active
preference elicitation in underspecified long-horizon tasks and introduces
Reflection-DPO to train this behavior~\citep{patel2025adapt}, while
\citet{zhang2025clarify} study when ambiguity and user tolerance warrant a
clarifying question. User preferences also affect tool selection:
ToolSpectrum evaluates tool use conditioned on user profiles and
environmental factors~\citep{cheng2025toolspectrum}, while MPT models latent
preferences for cross-session tool calling~\citep{yoon2026mpt}. Amulet and
T-POP intervene directly in token-level decoding, whereas the remaining
methods specialize in particular interaction channels. FABLE instead learns
an external execution policy over joint memory, information-acquisition, and
response actions from scalar feedback on the executed action. It leaves
generation to a fixed, potentially black-box host and applies externally
specified feasibility constraints before exploration.
\paragraph{Model-level personalization.}
Per-user fine-tuning and personalized alignment adapt model parameters or
decoding behavior to user-specific data or heterogeneous preferences
\citep{tan2024oppu,poddar2024personalizing,chen2024pal,park2024rlhf,chen2024pad}.
Preference Agents use a small local model to generate natural-language rules
that steer a larger fixed model~\citep{shashidhar2024preferenceagents}, while
neural-bandit personalization updates soft instruction embeddings of a
white-box LLM from online feedback~\citep{chen2024neuralbandits}. Behavioral
feedback in FABLE instead updates an external posterior over explicit
execution actions; the compiler and host remain fixed.

\paragraph{Structured contextual bandits and constrained action sets.}
Contextual bandits formalize learning from feedback observed only for the
selected action. LinUCB applies this framework to personalized recommendation
\citep{li2010contextual}, while linear-bandit analyses and linear Thompson
sampling characterize exploration and regret under linear reward models
\citep{abbasi2011improved,agrawal2013thompson,abeille2017linear}. Large-action,
factored, and contextual combinatorial bandits exploit structure in the action
space, including settings with only scalar feedback for the selected joint
action
\citep{zhu2022contextual,zimmert2018factored,zierahn2023nonstochastic}. FABLE
uses factorized context--action features to share information across related
joint execution choices. Influence-diagram bandits represent general
action--latent--observation dependencies and apply structured posterior
sampling~\citep{yu2020graphical}. Mixed-effect Thompson sampling similarly
relates actions through shared effect parameters~\citep{aouali2023mixed}, while
IntelligentPooling uses partial pooling to learn personalized policies when
each user contributes little data~\citep{tomkins2021intelligentpooling}.
FABLE uses a simpler linear representation tied to interpretable
agent-execution components and currently maintains independent posteriors
across users.

Baseline-adjusted bandit models separate an action effect from a flexible,
action-independent baseline
\citep{greenewald2017action,krishnamurthy2018semiparametric}. The offset in
FABLE has a different role: it is a known, action-dependent default--cost score,
and the learned residual represents the target user's departure from that
score. Warm-start contextual bandits combine supervised examples with
subsequent bandit feedback~\citep{zhang2019warmstarting}. In FABLE, soft
onboarding supplies finite-precision prior information rather than supervised
action labels; preference-bearing onboarding is not reused as a per-round
context feature.

Sleeping-bandit models allow the available action set to vary
\citep{kleinberg2010sleeping}. Conservative bandits impose
baseline-performance requirements, whereas safe linear bandits learn under
uncertain safety constraints
\citep{kazerouni2017conservative,amani2019linear,moradipari2020safe}. FABLE
assumes that the surrounding system supplies a predictable, nonempty feasible
set before each decision. The learner neither estimates nor expands this set,
and its comparator is the action with the largest conditional mean under the
personalized reward model within the same context-dependent feasible set.
Accordingly, the theory concerns learning and identifiability under the supplied
constraints, not the validity or estimation of the constraints themselves.

\section{Learning Personalized Execution Policies}
\label{sec:problem-method}

The central difficulty is to learn a \emph{joint} execution policy from sparse
selected-action feedback when the host itself is frozen. Each round reveals one
scalar outcome for a complete execution choice, so the learner must decide
which other choices inherit that evidence. At the same time, it should retain
the host's useful generic behavior, treat stated preferences as revisable
rather than permanent, and never explore an action excluded by external
constraints. These requirements cannot be solved independently: the
representation determines what both onboarding and online feedback mean, and
the feasible set determines where posterior uncertainty may be expressed.

FABLE constructs one bandit policy around this dependency. Sparse joint
feedback is projected onto factorized residual coordinates; onboarding
initializes uncertainty in those same coordinates; posterior sampling turns
the remaining uncertainty into choices only within the supplied feasible set;
and the resulting choices adapt behavior while the compiler and host remain
fixed. Thus each stage consumes the object produced by the preceding stage
rather than contributing a detachable module. Figure~\ref{fig:fable-system-workflow}
shows the resulting recurrent loop.

\begin{figure}[t]
\centering
\includegraphics[width=\columnwidth]{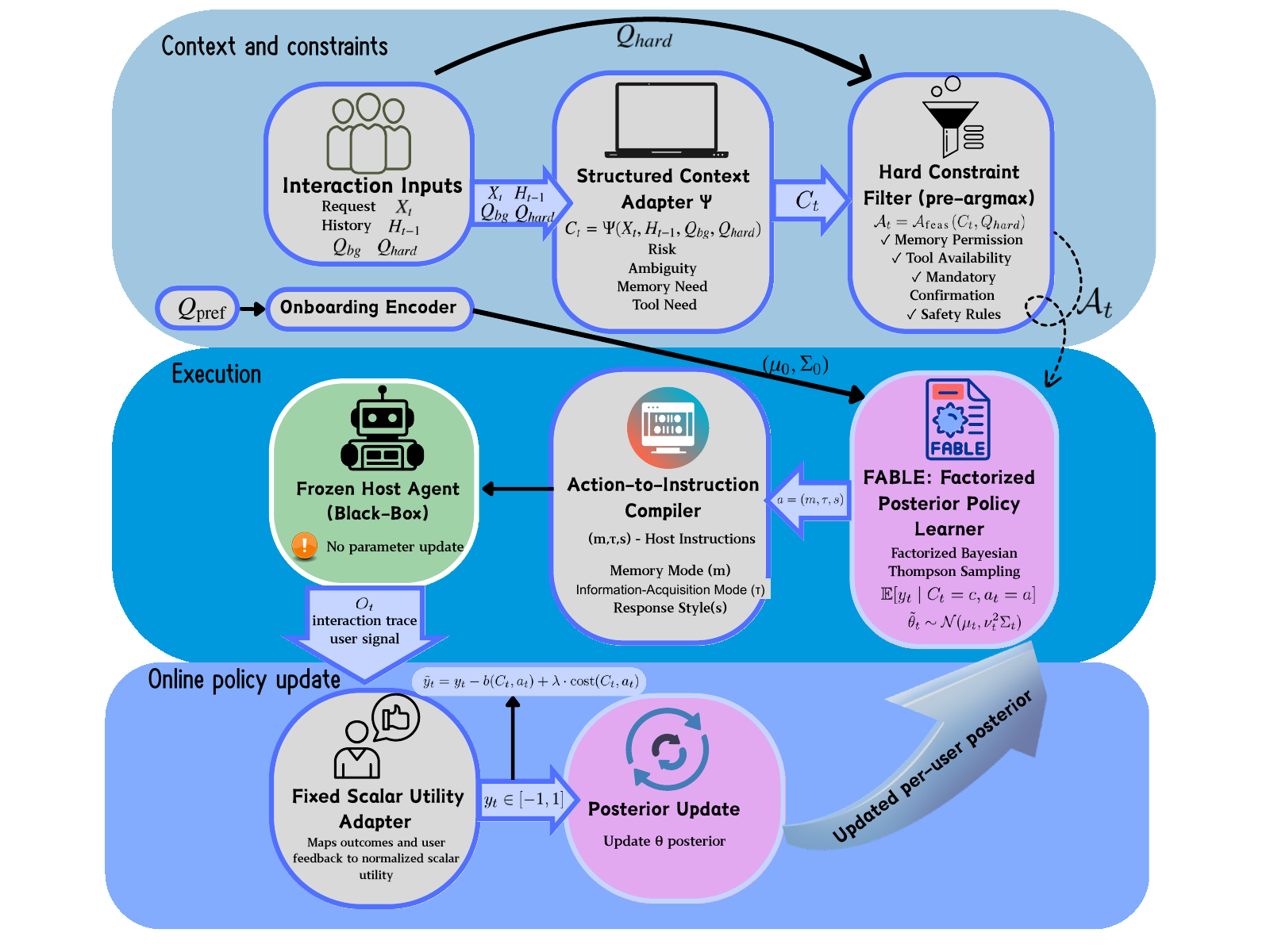}
\caption{FABLE around a frozen host: onboarding initializes the residual-
preference posterior, hard constraints restrict selection to \(\calA_t\),
and residualized feedback updates it online.}
\label{fig:fable-system-workflow}
\end{figure}

\subsection{Sparse Feedback on Joint Execution}

We consider repeated interactions with one target user and suppress the user
index. Because the host may be frozen or black-box, the learned object is an
external execution policy, not a modification of LLM parameters.

An execution decision is the complete joint choice
\(a=(m,\tau,s)\in\calM\times\calT\times\calS\), where \(m\) is the memory
mode, \(\tau\) is the information-acquisition mode, and \(s\) is the response
behavior. The value \(\tau=\text{ask user}\) is an intermediate acquisition
action for obtaining missing information, whereas
\(s=\text{ask clarification}\) is a response or termination protocol.
Appendix~E gives the concrete agent-execution
catalog. Let \(\calA=\calM\times\calT\times\calS\) denote the complete
catalog.

Let \(\mathcal C_{\mathrm{ctx}}\) be the policy-context space, and let
\(c\in\mathcal C_{\mathrm{ctx}}\) denote a generic context used in
\(\phi(c,a)\), \(b(c,a)\), and \(\cost(c,a)\).

Before online interaction, preference-bearing onboarding information
\(Q^{\mathrm{pref}}\) is used only to initialize the residual posterior.
At round \(t\), the policy observes a request \(X_t\), host-readable state
\(H_{t-1}\), optional non-preference background information
\(Q^{\mathrm{bg}}\), and round-specific hard state
\(Q_t^{\mathrm{hard}}\). The hard state records permissions, resource
availability, and mandatory requirements. Keeping \(Q^{\mathrm{pref}}\) out of
the per-round context adapter avoids counting the same stated preference both
as prior evidence and as a recurring context feature.

The fixed context adapter maps the online inputs to
\[
\begin{aligned}
C_t&=(k_t,r_t,g_t,p_t,q_t)\in\mathcal C_{\mathrm{ctx}},\\
C_t&=\Psi(X_t,H_{t-1},Q^{\mathrm{bg}},Q_t^{\mathrm{hard}}),
\end{aligned}
\]
where \(k_t\in\calK\) is the task type and
\(r_t,g_t,p_t,q_t\in[0,1]\) summarize risk, ambiguity, memory need, and
information-acquisition need. These coordinates form a policy-facing summary
rather than a complete representation of the request.

Given \(C_t\), the surrounding system supplies
\(\calA_t=\calA_{\feas}(C_t,Q_t^{\mathrm{hard}})\subseteq\calA\).
Permissions, tool availability, mandatory confirmation, and incompatible
component combinations define feasibility rather than reward.

After the policy selects an action \(a_t\in\calA_t\), the host produces
\(O_t=\mathcal G(X_t,H_{t-1},a_t,\xi_t)\), where \(\xi_t\) represents
execution randomness. The map \(\mathcal G\) combines the
action-to-instruction compiler and the host agent. A feedback adapter maps the
outcome, execution trace, and available user or benchmark signal to scalar
feedback \(y_t\in[-1,1]\). Feedback is observed only for the selected action.
Action semantics and the feedback scale remain stable across rounds. Thus, one
interaction follows
\((X_t,H_{t-1},Q^{\mathrm{bg}},Q_t^{\mathrm{hard}})
\to C_t\to a_t\to O_t\to y_t\). Crucially, \(y_t\) evaluates only the
selected complete action: the learner observes neither separate rewards for
\(m\), \(\tau\), and \(s\), nor outcomes for unselected combinations. A flat
arm model would discard repeated component structure, while three independent
policies would discard complementarities among components. This sparse joint
feedback dictates the representation that follows.

\subsection{Factorized Residual Preference Model}

The model must both transfer one selected-action outcome across related joint
choices and isolate the user's departure from behavior the frozen system
already supplies. Factorization and residualization address these demands in
one preference model.

FABLE uses a fixed feature map \(\phi(c,a)\in\R^d\) with the low-order block
structure
\[
\begin{aligned}
\phi(c,a)=\big[&
\phi_M(m),\ \phi_T(\tau),\ \phi_S(s),\\
&\phi_{C\times M}(c,m),\ \phi_{C\times T}(c,\tau),\
\phi_{C\times S}(c,s),\\
&\phi_{M\times T}(m,\tau),\ \phi_{M\times S}(m,s),\
\phi_{T\times S}(\tau,s)
\big].
\end{aligned}
\]
The learner still chooses one complete action. Main effects share its scalar
outcome across combinations; context--component interactions retain
context-dependent effects; and component--component interactions retain
low-order complementarities. Higher-order interactions would approach a flat
action table and lose this transfer. Identification still depends on variation
among observed feasible actions. Appendix~E
specifies the dictionary.

FABLE models expected feedback as
\begin{equation}
\E[y_t\mid C_t=c,a_t=a]
=
b(c,a)+\phi(c,a)^\top\theta_\star-\lambda\,\cost(c,a),
\label{eq:reward-model}
\end{equation}
where \(b(c,a)\) is a fixed default score, \(\cost(c,a)\ge 0\) is a fixed
operational cost, \(\lambda\ge 0\) is the cost weight, and
\(\theta_\star\in\R^d\) is the target user's true residual preference
parameter. The default and cost are prespecified rather than learned
components: they preserve generic behavior and expose operational trade-offs
for audit. The Gaussian state below learns only the user's departure from that
reference rather than asking sparse per-user feedback to relearn the full
execution utility from scratch.

The implementation uses additive default and cost components for memory,
information acquisition, ambiguity, risk, task type, and response behavior;
Appendix~F gives their concrete specification. Oracle
comparisons use the best action under Equation~\eqref{eq:reward-model} in the
same feasible set \(\calA_t\). Define the known reference score and its
round-specific form by
\[
\bar b(c,a)=b(c,a)-\lambda\cost(c,a),
\qquad
\bar b_t(a)=\bar b(C_t,a),
\]
and write \(x_t(a)=\phi(C_t,a)\). Thus every selected-action outcome is first
centered against the frozen system's reference behavior and then attributed to
feature directions shared by related joint choices. The next problem is cold
start: any onboarding signal must initialize these very directions, or
onboarding and online feedback would describe different preference objects.

\subsection{Revisable Onboarding in the Same Coordinates}

The residual model makes later feedback reusable but does not remove the
cold-start problem. Because stated preferences may be wrong, hard onboarding
would prevent adaptation. FABLE instead treats it as finite-precision evidence
about the Gaussian working-posterior variable \(\theta\) whose coordinates are
defined by the factorized residual model.

Let \(J\in\mathbb N_0\) be the number of retained onboarding
pseudo-observations. An onboarding encoder maps \(Q^{\mathrm{pref}}\) to
\(J\) tuples
\((v_j,u_j,\kappa_j)\), where \(v_j\in\R^d\) is a preference direction,
\(u_j\in[-1,1]\) is its signed response, and \(\kappa_j\ge 0\) is the assigned
precision. Each tuple represents the pseudo-observation
\(u_j=v_j^\top\theta+\epsilon_j\), with
\(\epsilon_j\sim\Normal(0,\kappa_j^{-1})\). A tuple with \(\kappa_j=0\) is
omitted. The no-onboarding case is \(J=0\); the tuple collection and the sums
below are then empty.

Starting from the base prior
\(\theta\sim\Normal(\mu_{\mathrm{base}},\Sigma_{\mathrm{base}})\), where
\(\Sigma_{\mathrm{base}}\succ0\), the initial Gaussian state is
\begin{align}
\Sigma_0^{-1}
&=
\Sigma_{\mathrm{base}}^{-1}
+
\sum_{j=1}^J \kappa_jv_jv_j^\top,
\label{eq:init-precision}\\
\mu_0
&=
\Sigma_0
\left(
\Sigma_{\mathrm{base}}^{-1}\mu_{\mathrm{base}}
+
\sum_{j=1}^J \kappa_jv_ju_j
\right).
\label{eq:init-mean}
\end{align}
When \(J=0\), this construction recovers the base prior. Because the
onboarding precision is finite and the tuples use the online residual
coordinates, behavioral feedback can revise the cold-start bias without a
coordinate translation. Any fixed elicitation procedure that supplies
\((\mu_0,\Sigma_0)\) in the same feature space may replace the
pseudo-observation construction. The initial state is thus the first state of
the same learner that receives selected-action residual feedback, not a
separate preference store. Because its precision is finite, the initialized
state deliberately retains uncertainty; the policy must decide how to resolve
that uncertainty through feasible online behavior.

\subsection{Posterior Sampling Within the Feasible Set}

Finite onboarding precision leaves uncertainty for behavior to resolve, but
hard constraints determine where that uncertainty may be explored. The
surrounding system therefore supplies the nonempty set
\(\calA_t=\calA_{\feas}(C_t,Q_t^{\mathrm{hard}})\) before FABLE scores an
action. Encoding a prohibition as a large cost would not preserve this order:
an optimistic posterior draw could still select the prohibited action.
Feasibility restricts the decision domain, whereas the default and cost remain
known terms inside the score.

Let \(\mathcal F_t\) denote the sigma-field available immediately before the
round-\(t\) Thompson sample is drawn. It contains observations from rounds
\(1,\ldots,t-1\), the current context \(C_t\), the supplied set \(\calA_t\),
and the known quantities
\[
\{\phi(C_t,a),\bar b(C_t,a):a\in\calA_t\}.
\]
For \(t\geq1\), FABLE maintains the Gaussian working state
\(\theta\mid\mathcal F_t\sim\Normal(\mu_t,\Sigma_t)\), with
\(\Lambda_t=\Sigma_t^{-1}\) and \(h_t=\Lambda_t\mu_t\). The onboarding state
initializes the first decision through
\(\Lambda_1=\Sigma_0^{-1}\), \(h_1=\Lambda_1\mu_0\), and hence
\((\mu_1,\Sigma_1)=(\mu_0,\Sigma_0)\).

FABLE samples only the uncertain residual parameter,
\[
\widetilde\theta_t
\sim
\Normal(\mu_t,\nu_t^2\Sigma_t),
\]
where \(\nu_t>0\) is a specified sampling scale, and selects
\begin{equation}
a_t
=
\operatorname*{arg\,max}_{a\in\calA_t}
\left[
\bar b(C_t,a)+\phi(C_t,a)^\top\widetilde\theta_t
\right].
\label{eq:ts-score}
\end{equation}
Ties are resolved by taking the earliest action in a predeclared fixed catalog
order. The default and cost are not sampled because they are prespecified. The
maximization is restricted to \(\calA_t\), so the selected action belongs to
the supplied feasible set. Let
\[
\mathcal F_t^{\mathrm{act}}
:=
\sigma(\mathcal F_t,\widetilde\theta_t,a_t)
\]
denote the information available after sampling and action selection but before
round-\(t\) feedback is observed.

After execution, FABLE forms the residual feedback
\(\widetilde y_t=y_t-\bar b(C_t,a_t)\). Let
\(\phi_t=\phi(C_t,a_t)\). Under a Gaussian working likelihood
\(\widetilde y_t=\phi_t^\top\theta+\varepsilon_t\), with working noise scale
\(\sigma^2\), the precision and information vector are updated as
\begin{align}
\Lambda_{t+1}
&=
\Lambda_t+\sigma^{-2}\phi_t\phi_t^\top,
\label{eq:precision-update}\\
h_{t+1}
&=
h_t+\sigma^{-2}\phi_t\widetilde y_t.
\label{eq:info-update}
\end{align}
The next Gaussian state is recovered through
\(\Sigma_{t+1}=\Lambda_{t+1}^{-1}\) and
\(\mu_{t+1}=\Sigma_{t+1}h_{t+1}\). Under conditional Gaussian noise, this is
a conjugate posterior update. Under bounded or conditionally sub-Gaussian
feedback, it is the Gaussian working posterior used by FABLE. Crucially, the
same factorized residual coordinates that received finite-precision onboarding
now receive selected-action evidence. Theorem~\ref{thm:fable-ts-regret}
specifies a calibrated choice of \(\nu_t\) for the feasible-oracle regret
guarantee.

\subsection{Adaptive Policy Around a Frozen Agent}

Algorithm~\ref{alg:fable} places this statistical loop around the
compiler-and-host map \(\mathcal G\). Onboarding and online feedback update one
residual state, which affects the next joint choice only after feasibility
filtering; the host realizes that choice without changing its parameters or
execution semantics.

\begin{algorithm}[!b]
\caption{FABLE: Factorized Adaptive Bandit Layer for Execution}
\label{alg:fable}
\small
\begin{algorithmic}[1]
\REQUIRE Action spaces \(\calM,\calT,\calS\); context adapter \(\Psi\);
feasibility filter \(\calA_{\feas}\); feature map \(\phi\); default score \(b\);
cost \(\cost\); \(\lambda\); sampling schedule \((\nu_t)_{t\ge1}\);
\(\sigma^2\); base prior; onboarding and feedback adapters; compiler-and-host
map \(\mathcal G\).
\STATE Observe \(Q^{\mathrm{pref}}\), encode
\(\{(v_j,u_j,\kappa_j)\}_{j=1}^J\), and initialize
\((\mu_0,\Sigma_0)\) by
Equations~\eqref{eq:init-precision}--\eqref{eq:init-mean}.
\STATE Set \(\Lambda_1=\Sigma_0^{-1}\) and \(h_1=\Lambda_1\mu_0\).
\FOR{round \(t=1,2,\ldots\)}
    \STATE Observe \(X_t\), \(H_{t-1}\), \(Q^{\mathrm{bg}}\), and
    \(Q_t^{\mathrm{hard}}\).
    \STATE Set
    \(C_t=\Psi(X_t,H_{t-1},Q^{\mathrm{bg}},Q_t^{\mathrm{hard}})\) and
    \(\calA_t=\calA_{\feas}(C_t,Q_t^{\mathrm{hard}})\).
    \STATE Recover \((\mu_t,\Sigma_t)\), sample
    \(\widetilde\theta_t\sim\Normal(\mu_t,\nu_t^2\Sigma_t)\), and select
    \(a_t\) by Equation~\eqref{eq:ts-score}.
    \STATE Execute
    \(O_t=\mathcal G(X_t,H_{t-1},a_t,\xi_t)\), obtain \(y_t\), and form
    \(\widetilde y_t\).
    \STATE Set \(\phi_t=\phi(C_t,a_t)\) and update
    \((\Lambda_{t+1},h_{t+1})\) by
    Equations~\eqref{eq:precision-update}--\eqref{eq:info-update}.
    \STATE Update \(H_t\) with \((X_t,a_t,O_t,y_t)\).
\ENDFOR
\end{algorithmic}
\end{algorithm}
\FloatBarrier

\paragraph{Fixed semantics, adaptive policy state.}

The context adapter, action semantics, feature map, default--cost offset,
feasible-set rule, compiler-and-host interface, and feedback adapter remain
fixed during online learning, as do the host-agent parameters. FABLE updates
only the per-user statistical state and the evolving interaction history.
The feedback scale and catalog order are fixed as well. Otherwise, an old
observation would no longer correspond to the same residual coordinate,
likelihood, action meaning, or comparator as a new observation.

The posterior already affects current action selection. Writing a learned
preference into persistent host-readable state has a different risk because it
can alter future context construction. FABLE therefore treats promotion as an
optional confidence-controlled writeback of a prespecified, identifiable
contrast. Promotion adds no reward observation and does not modify
Equations~\eqref{eq:precision-update}--\eqref{eq:info-update}; its full
anytime-valid error guarantee remains in Appendix~G. A contrast is written
back only when its anytime confidence sequence excludes zero; otherwise no
promotion occurs.

Taken together, this is one policy rather than a sum of techniques. Sparse
joint feedback requires shared factorized coordinates; retaining generic
behavior requires learning only a residual in those coordinates; fallible
onboarding remains revisable by initializing the same state with finite
precision; and posterior sampling converts the remaining uncertainty into
actions only after feasibility has restricted the domain. Because only the
posterior and interaction history evolve, the resulting behavior adapts around
the frozen host. Each element resolves a necessary consequence of the original
learning problem.
\section{Theoretical Results}
\label{sec:theory}

For \(a\in\calA_t\), let
\[
\begin{aligned}
x_t(a)&=\phi(C_t,a),&
\bar b_t(a)&=\bar b(C_t,a),\\
f_t(a)&=\bar b_t(a)+x_t(a)^\top\theta_\star .
\end{aligned}
\]
The comparator maximizes \(f_t\) over the same supplied set \(\calA_t\).
If the dictionary is redundant, \(\theta_\star\) denotes the score-equivalent
representative minimizing \(\|\theta-\mu_0\|_{\Lambda_0}\); this fixes the
radius below without changing any action score.

\subsection{Assumptions}

\begin{assumption}[Residual linear feedback]
\label{ass:residual-feedback}
For every round and feasible action,
\[
\begin{gathered}
y_t=\bar b_t(a_t)+x_t(a_t)^\top\theta_\star+\varepsilon_t,\qquad
f_t(a)\in[-1,1],\\
\E[\varepsilon_t\mid\mathcal F_t^{\mathrm{act}}]=0,\\
\E[e^{u\varepsilon_t}\mid\mathcal F_t^{\mathrm{act}}]
\le e^{u^2\sigma^2/2}\quad(\forall u\in\R).
\end{gathered}
\]
We normalize \(\sigma^2=1\) for regret. The general-scale posterior recursion
uses \(x_t(a_t)/\sigma\); exact conjugacy additionally uses the working model
\(\varepsilon_t\mid\mathcal F_t^{\mathrm{act}}\sim\Normal(0,\sigma^2)\).
\end{assumption}

\begin{assumption}[Bounded features and prior-centered residual radius]
\label{ass:bounded-residual}
For constants \(L_x,\lambda_0,R_b>0\),
\[
\begin{gathered}
\|x_t(a)\|_2\le L_x\quad(\forall t,a\in\calA_t),\\
\Lambda_0\succeq\lambda_0 I_d,\qquad
\|\theta_\star-\mu_0\|_{\Lambda_0}\le R_b .
\end{gathered}
\]
Here \(\Lambda_0=\Sigma_0^{-1}\) and
\(\|v\|_{\Lambda_0}=\sqrt{v^\top\Lambda_0v}\).
\end{assumption}

\begin{assumption}[Predictable nonempty feasible sets]
\label{ass:predictable-feasible-set}
For every \(t\),
\(\varnothing\ne\calA_t=\calA_{\feas}(C_t,Q_t^{\mathrm{hard}})\), and
\(\calA_t\) is measurable before \(a_t\) is selected.
\end{assumption}

\subsection{Regret of calibrated Thompson-style exploration}

With deterministic tie-breaking, define
\[
\displaystyle
a_t^\star\in\argmax_{a\in\calA_t}f_t(a),\qquad
R_n=\sum_{t=1}^n[f_t(a_t^\star)-f_t(a_t)] .
\]
This realized-context comparator uses the context and feasible-set sequence
generated along FABLE's trajectory, not a counterfactual trajectory.

\begin{theorem}[Feasible-oracle regret of FABLE]
\label{thm:fable-ts-regret}
Suppose Assumptions~\ref{ass:residual-feedback}--%
\ref{ass:predictable-feasible-set} hold.
Initialize the algorithmic state by
\((\mu_1,\Sigma_1)=(\mu_0,\Sigma_0)\) and
\(\Lambda_1=\Sigma_1^{-1}=\Lambda_0:=\Sigma_0^{-1}
\succeq\lambda_0I_d\).
At round \(t\), draw
\(\widetilde\theta_t\sim\Normal(\mu_t,\nu_t^2\Sigma_t)\), select by
Equation~\eqref{eq:ts-score}, and update by
Equations~\eqref{eq:precision-update}--\eqref{eq:info-update}.
For \(n\ge2\), \(\delta\in(0,1/2]\), use
\[
\nu_t=\sqrt{9d\log(t/\delta)} .
\]
Then, with probability at least \(1-\delta\),
\[
\displaystyle R_n=\widetilde O(d^{3/2}\sqrt n).
\]
If \(n\) is known and \(\delta=n^{-2}\), then
\[
\displaystyle \E[R_n]=\widetilde O(d^{3/2}\sqrt n).
\]
The notation suppresses logarithmic factors in \(n,d,1/\delta\) and fixed
constants determined by the feature bound, prior-centered radius, and
Gaussian initialization.
\end{theorem}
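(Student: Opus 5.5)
The proof adapts the Agrawal--Goyal (2013) / Abeille--Lazaric (2017) analysis of linear Thompson sampling to three features of FABLE: the known offset \(\bar b_t\), the general Gaussian initialization \((\mu_0,\Lambda_0)\), and the time-varying feasible sets \(\calA_t\).

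\textbf{Step 1: residualize the offset.} Since \(\bar b_t(a)\) is \(\mathcal F_t\)-measurable, the residual \(\widetilde y_t=y_t-\bar b_t(a_t)=x_t(a_t)^\top\theta_\star+\varepsilon_t\) is an exact linear observation with conditionally 1-sub-Gaussian noise. Unrolling Equations~\eqref{eq:precision-update}--\eqref{eq:info-update} gives
\[
\mu_t=\Lambda_t^{-1}\Bigl(\Lambda_0\mu_0+\sum_{s<t}x_s\widetilde y_s\Bigr),
\]
and therefore
\[
\mu_t-\theta_\star=\Lambda_t^{-1}\Bigl(\Lambda_0(\mu_0-\theta_\star)+\sum_{s<t}x_s\varepsilon_s\Bigr).
\]

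\textbf{Step 2: confidence ellipsoid.} The self-normalized martingale bound of Abbasi-Yadkori et al.\ (2011), applied with regularizer \(\Lambda_0\succeq\lambda_0I\), gives
\[
\|\mu_t-\theta_\star\|_{\Lambda_t}\le \beta_t(\delta):=R_b+\sqrt{2\log(1/\delta)+\log\det(\Lambda_t)/\det(\Lambda_0)},
\]
simultaneously for all \(t\), with probability \(1-\delta/2\). The determinant ratio is at most \(d\log(1+tL_x^2/(d\lambda_0))\), so \(\beta_t=\widetilde O(\sqrt d)\). The redundant-dictionary convention lets Assumption~\ref{ass:bounded-residual} be used directly.

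\textbf{Step 3: sample concentration.} For \(\widetilde\theta_t-\mu_t=\nu_t\Lambda_t^{-1/2}Z\) with \(Z\sim\Normal(0,I_d)\), we have \(\|\widetilde\theta_t-\mu_t\|_{\Lambda_t}\le \nu_t\sqrt{2d\log(2dt^2/\delta)}=:\gamma_t=\widetilde O(d)\) for all \(t\) with probability \(1-\delta/2\). The choice \(\nu_t=\sqrt{9d\log(t/\delta)}\) makes \(\nu_t\ge\beta_t\) up to constants, which is the calibration the next step needs.

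\textbf{Step 4: optimism and regret decomposition.} Since \(a_t^\star\in\calA_t\) by Assumption~\ref{ass:predictable-feasible-set}, the Thompson sample is optimistic with constant probability \(p\): \(\bar b_t(a^\star_t)+x_t(a_t^\star)^\top\widetilde\theta_t\ge f_t(a_t^\star)\). Indeed the projection of \(\widetilde\theta_t-\mu_t\) on \(x_t(a_t^\star)\) is a Gaussian with standard deviation \(\nu_t\|x_t(a_t^\star)\|_{\Lambda_t^{-1}}\ge\beta_t\|x_t(a_t^\star)\|_{\Lambda_t^{-1}}\), so \(p\ge 1/(4\sqrt{e\pi})\).

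This anti-concentration step is the main obstacle. The known offset is common to the sample and to \(f_t\), so it cancels. The feasible set enters only through \(a_t^\star\) and the argmax domain, both of which are \(\mathcal F_t\)-measurable, so the Agrawal--Goyal saturated/unsaturated argument carries over verbatim with the arm set replaced by \(\calA_t\).

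Defining saturated actions \(\{a\in\calA_t: \Delta_t(a)>g_t\|x_t(a)\|_{\Lambda_t^{-1}}\}\) with \(g_t=\beta_t+\gamma_t\), the standard argument bounds the instantaneous regret conditionally on \(\mathcal F_t\):
\[
\E[\Delta_t(a_t)\mid\mathcal F_t]\le \frac{c\,g_t}{p}\,\E\bigl[\|x_t(a_t)\|_{\Lambda_t^{-1}}\mid\mathcal F_t\bigr]+O(\delta/t^2),
\]
with instantaneous regret clipped at 2 because \(f_t\in[-1,1]\).

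\textbf{Step 5: summing.} Apply Azuma--Freedman to the supermartingale differences, which are bounded by \(O(g_t/p)\), to pass to realized regret. Then apply the elliptical potential lemma,
\[
\sum_t\min\bigl(1,\|x_t(a_t)\|_{\Lambda_t^{-1}}\bigr)\le\sqrt{2nd\log(1+nL_x^2/(d\lambda_0))},
\]
handling the \(\min\) with 1 via the \(\lambda_0\)-dependent constant. This gives
\[
R_n=O\Bigl(g_n\sqrt{nd\log n}+g_n\sqrt{n\log(1/\delta)}\Bigr)=\widetilde O(d\cdot\sqrt{d n})=\widetilde O(d^{3/2}\sqrt n)
\]
with probability \(1-\delta\), after a union bound over the events in Steps 2, 3 and the Azuma step, splitting \(\delta\) into thirds.

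For the expectation, take \(\delta=n^{-2}\) and use \(R_n\le 2n\) on the failure event. This adds \(2n\cdot n^{-2}=O(1)\), and \(\log(1/\delta)=2\log n\) is absorbed into the \(\widetilde O\).
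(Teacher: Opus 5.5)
Your proposal follows the paper's proof essentially step for step: the offset cancels, the self-normalized radius uses regularizer \(\Lambda_0\), and then come Gaussian concentration and anti-concentration inside the Agrawal--Goyal saturated-action argument, Azuma, the elliptical potential, and \(\delta=n^{-2}\) with \(R_n\le 2n\) for the expectation (the paper instead whitens by \(\Lambda_0^{-1/2}\) and absorbs \(\mu_0\) into the known score, which is equivalent). The one step to repair is the optimism constant: \(\nu_t=\sqrt{9d\log(t/\delta)}\) does not involve \(R_b\), \(L_x\), or \(\lambda_0\), so you only get \(\beta_t\le\kappa\nu_t\) with \(\kappa=\sup_t\beta_t/\nu_t<\infty\), not \(\beta_t\le\nu_t\) (\(\kappa\) is finite because \(\delta\le 1/2\) gives \(\nu_t\ge\sqrt{9d\log 2}\) and both sequences grow like \(\sqrt{d\log t}\)); hence the optimism probability is \(p_0=\Pr(Z\ge\kappa)\) with \(Z\sim\Normal(0,1)\) rather than \(1/(4\sqrt{e\pi})\), and, exactly as in the paper, you need a burn-in \(t_0\) after which the per-round sampling-failure probability is at most \(p_0/2\), which leaves the rate unchanged since \(p_0\) and \(t_0\) are among the suppressed constants.
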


The proof is deferred to Appendix~A.

\section{Experiments}
\label{sec:experiments}

The main-text evaluation uses tau2-bench because it tests the complete policy
layer before a frozen tool-using agent with executable domain tools and a
native task-success metric. It asks whether the integrated policy changes the
declared personalized objective, whether matched controls support the roles of
factorization, onboarding, and online updating, and whether these changes
preserve end-to-end task performance. Appendix~C reports the complementary
PAHF and Math500 studies and additional paired
tau2-bench analysis.

\subsection{tau2-bench: Executable Tool-Use Evaluation}
\label{subsec:experiments-tau2}

We test FABLE in tau2-bench~\citep{barres2025tau2}, an executable
customer-service benchmark with native tools and mutable state. Each task is
one bandit round. For every seed--domain shard across Airline, Retail, Telecom,
and Banking Knowledge, 10 tasks are excluded for calibration, followed by 20
online-learning and 20 frozen-evaluation tasks. Formal evaluation contains 80
unique domain--task--profile clusters and 240 episodes per policy.

\begin{table*}[!t]
\centering
\captionsetup{skip=3pt}
\scriptsize
\setlength{\tabcolsep}{4.5pt}
\begin{tabular}{lccc}
\toprule
Comparator
& \(\Delta\) personalized reward [95\% CI]
& \(\Delta\) alignment [95\% CI]
& \(\Delta\) task success [95\% CI] \\
\midrule
Host baseline
& \(+0.077\ \mathbf{[0.027,\,0.127]}\)
& \(+0.281\ \mathbf{[0.210,\,0.354]}\)
& \(+0.017\ [-0.046,\,0.079]\) \\
Flat complete-action LinTS
& \(+0.009\ [-0.026,\,0.045]\)
& \(+0.022\ \mathbf{[0.004,\,0.038]}\)
& \(+0.004\ [-0.046,\,0.054]\) \\
FABLE (frozen)
& \(+0.005\ [-0.041,\,0.051]\)
& \(+0.024\ \mathbf{[0.003,\,0.045]}\)
& \(-0.000\ [-0.063,\,0.058]\) \\
FABLE (no onboarding)
& \(+0.071\ \mathbf{[0.026,\,0.116]}\)
& \(+0.163\ \mathbf{[0.128,\,0.199]}\)
& \(+0.042\ [-0.017,\,0.100]\) \\
FABLE (no promotion)
& \(+0.026\ [-0.015,\,0.066]\)
& \(+0.031\ \mathbf{[0.012,\,0.048]}\)
& \(+0.025\ [-0.029,\,0.079]\) \\
FABLE (no cost)
& \(+0.036\ [-0.002,\,0.076]\)
& \(+0.015\ [-0.000,\,0.030]\)
& \(+0.042\ [-0.008,\,0.096]\) \\
\bottomrule
\end{tabular}
\caption{Paired frozen-evaluation differences (FABLE full minus comparator).
The 95\% cluster-bootstrap CIs use 10,000 resamples of 80
domain--task--profile clusters after seed averaging; bold excludes zero.
Pairing is by seed, domain, task ID, profile, and phase.}
\label{tab:experiments-tau2-v4-paired}
\end{table*}

Efficient and guided profiles prefer concise outcome-first responses and
explanation without excessive verbosity, respectively. Balanced assignment
yields 24 domain--profile--seed trajectories. FABLE sees only the public domain
and assigned profile, never hidden simulator instructions, evaluator criteria,
reference actions, target state, or model output. The action space is
\[
\begin{aligned}
\mathcal{M}&=\{\text{current turn},\text{conversation}\},\\
\mathcal{T}&=\{\text{standard},\text{verify},\text{ask if missing}\},\\
\mathcal{S}&=\{\text{direct},\text{concise},\text{guided}\}.
\end{aligned}
\]
Thus \(a_t\in\mathcal M\times\mathcal T\times\mathcal S\) ranges over 18
complete actions represented by main effects and pairwise interactions. The
selected action becomes a fixed prompt suffix; tau2's tools, environment,
customer, evaluator, and parser remain unchanged.

\paragraph{Comparators.}
The host omits the policy layer, and Rule-Only uses only the fixed
default-and-cost score. Global linear Thompson sampling (LinTS) shares a
factorized state across profiles within each domain; Per-User LinTS uses an
uninformed one per domain--profile pair. Flat complete-action LinTS matches
full FABLE except for assigning one coordinate per complete action. FABLE
(frozen) never updates or promotes; the other ablations remove onboarding,
promotion, or training cost.

\paragraph{Metrics.}
Let \(R_t\) be tau2's native reward, \(A_t\) the prespecified deterministic
verbosity-alignment score for the independently assigned profile, and
\(c(a_t)\) the prespecified action cost. The common evaluation signal and
reportable personalized reward are
\[
\begin{aligned}
y_t^{\mathrm{eval}}
&=\operatorname{clip}_{[-1,1]}
\left(2[0.75R_t+0.25A_t]-1-0.2c(a_t)\right),\\
U_t&=(y_t^{\mathrm{eval}}+1)/2.
\end{aligned}
\]
All policies use the same evaluation formula and cost weight; the no-cost arm
sets it to zero only during training. We report \(U_t\), \(A_t\), and binary
native task success separately. \(U_t\) is the declared joint objective, while
\(A_t\) measures one controlled synthetic verbosity preference rather than
general preference alignment.

\begin{table}[t]
\centering
\captionsetup{skip=3pt}
\scriptsize
\setlength{\tabcolsep}{1.5pt}
\begin{tabular}{@{}lccc@{}}
\toprule
Policy
& Pers. reward \(\uparrow\)
& Align. \(\uparrow\)
& Success \(\uparrow\) \\
\midrule
Host baseline          & \(0.563 \pm 0.021\) & \(0.435 \pm 0.003\) & \(0.608 \pm 0.029\) \\
Rule-Only              & \(0.558 \pm 0.022\) & \(0.463 \pm 0.023\) & \(0.596 \pm 0.036\) \\
Global LinTS           & \(0.552 \pm 0.043\) & \(0.481 \pm 0.018\) & \(0.588 \pm 0.057\) \\
Per-User LinTS         & \(0.580 \pm 0.014\) & \(0.579 \pm 0.072\) & \(0.592 \pm 0.040\) \\
Flat complete-action LinTS
                       & \(0.631 \pm 0.023\) & \(0.695 \pm 0.016\) & \(0.621 \pm 0.026\) \\
FABLE (no onboarding) & \(0.569 \pm 0.020\) & \(0.553 \pm 0.060\) & \(0.583 \pm 0.007\) \\
FABLE (no promotion)  & \(0.614 \pm 0.010\) & \(0.686 \pm 0.004\) & \(0.600 \pm 0.013\) \\
FABLE (no cost)       & \(0.604 \pm 0.041\) & \(0.702 \pm 0.038\) & \(0.583 \pm 0.047\) \\
FABLE (frozen)        & \(0.635 \pm 0.037\) & \(0.693 \pm 0.014\) & \(\mathbf{0.625 \pm 0.045}\) \\
FABLE (full)          & \(\mathbf{0.640 \pm 0.036}\)
                       & \(\mathbf{0.717 \pm 0.013}\)
                       & \(\mathbf{0.625 \pm 0.043}\) \\
\bottomrule
\end{tabular}
\caption{Tau2 frozen evaluation (240 episodes/policy): mean \(\pm\) SD over
three seed means. Bold marks the highest observed mean, including ties.}
\label{tab:experiments-tau2-v4-main}
\end{table}

FABLE (full) is the only policy at the top of all three columns in
Table~\ref{tab:experiments-tau2-v4-main}: it has the highest observed
personalized reward and alignment and ties FABLE (frozen) for the highest
observed task success. Relative to the host, personalized reward increases by
\(0.077\) (95\% CI \([0.027,0.127]\)) and alignment by \(0.281\)
(\([0.210,0.354]\)). Task success increases by \(0.017\)
(\([-0.046,0.079]\)), but this interval includes zero; we therefore do not
claim a statistically significant task-success improvement.

Matched contrasts support specific links in the policy rather than uniform
superiority. Relative to flat complete-action LinTS, factorization improves
alignment by \(0.022\) (95\% CI \([0.004,0.038]\)); reward and task success
remain unresolved. Relative to FABLE (frozen), online adaptation improves
alignment by \(0.024\) (\([0.003,0.045]\)) with unchanged aggregate task
success. Removing onboarding produces the largest measured loss: full minus
no-onboarding is \(+0.163\) alignment (\([0.128,0.199]\)) and \(+0.071\)
personalized reward (\([0.026,0.116]\)).
Table~\ref{tab:experiments-tau2-v4-paired} reports the prespecified matched
contrasts used for component attribution;
Appendix~C gives the corresponding detailed analysis.

Task success is lower than the host on Airline, tied on Retail, and higher on
Telecom and Banking Knowledge. Host noninferiority is not established because
the lower confidence endpoint falls below the prespecified \(-0.02\) margin.
Thus the experiment supports controlled synthetic verbosity adaptation and
executable integration, but not significant task-success improvement, general
human preference alignment, or uniform domain-level gains.

\section{Conclusion}

FABLE forms one constrained policy around a frozen host: factorized residuals
share sparse joint feedback, revisable onboarding initializes the same
coordinates, and posterior sampling follows feasibility filtering.
Theorem~\ref{thm:fable-ts-regret} bounds feasible-oracle regret. Tau2 matched
controls support these links for preference-sensitive metrics, but neither
task-success improvement nor uniform domain gains; these claims require fixed
semantics, stable feedback, and valid constraints.

\clearpage

\appendix

\input{appendix/fable_theorem1_proof}
\input{appendix/additional_theory}
\input{appendix/additional_experiments}


\section{Optional Coordinate-Sparse Subfactorization}
\label{subsec:pure-spca-subfactorization}

The main analysis uses the full factorized context--action feature map. When
the initial interaction budget is small, an optional cold-start variant can
instead restrict learning to a subset of its existing coordinates. Let
\[
    \phi_{\mathrm{fac}}(C,a)\in\R^{d_f}
\]
denote the current factorized feature map, such as the feature map in
Equation~\eqref{eq:factorized-feature}.  Its coordinates already have semantic
meanings: memory-mode effects, tool-mode effects, answer-style effects,
task-style interactions, task-tool interactions, need interactions, and
action-component interactions. Full FABLE retains all coordinates; the optional
variant, denoted by \textbf{FABLE-SPCA}, chooses a
binary mask
\[
    z\in\{0,1\}^{d_f},
    \qquad
    d_z=\mathbf 1^\top z<d_f,
\]
and runs the bandit only on the selected coordinates.

Throughout this appendix, let
\[
Q_{\mathrm{init}}
=
\bigl(Q^{\mathrm{pref}},Q_{\mathrm{persist}}^{\mathrm{hard}}\bigr)
\]
denote the information available when the mask is chosen, where
\(Q_{\mathrm{persist}}^{\mathrm{hard}}\) contains only constraints assumed to
remain fixed over the online horizon. For brevity, we write \(Q\) for
\(Q_{\mathrm{init}}\) below. Its preference component determines the onboarding
posterior, while its persistent hard component may constrain the mask. The
round-specific state \(Q_t^{\mathrm{hard}}\) continues to enter the online feasibility
filter but does not change the fixed mask.

The coordinate-preserving subfactorization is
\[
    \phi_z(C,a)=S_z^\top\phi_{\mathrm{fac}}(C,a),
\]
where \(S_z\) is a coordinate selection matrix. The selection objective
preserves as much onboarding-weighted personalized prediction variance as
possible. The bandit estimation term is not part of this coordinate-sparse PCA
objective; it enters the regret and sample-efficiency analysis below.\paragraph{Parameterization convention.}
The implementation feature dictionary may contain linearly dependent columns.
Let
\[
    \mathcal S_{\mathrm{dict}}
    =
    \operatorname{span}
    \left\{
        \phi(c,a):(c,a)\text{ is admissible}
    \right\}.
\]
Here and below, admissibility is with respect to the fixed dictionary.
Two parameters \(\theta\) and \(\theta'\) are \emph{score-equivalent} if
\[
    \phi(c,a)^\top\theta
    =
    \phi(c,a)^\top\theta',
    \qquad
    \forall (c,a)\text{ admissible}.
\]
When the true residual score has more than one parameter representation, we use
its unique prior-centered representative
\[
    \theta_\star
    \in
    \operatorname*{arg\,min}_{\theta\in\Theta_\star}
    \|\theta-\mu_0\|_{\Lambda_0},
\]
where \(\Theta_\star\) is the affine set of score-equivalent true parameters.
Uniqueness follows from \(\Lambda_0\succ0\). Coefficient-level preference
functionals are used only for directions in \(\mathcal S_{\mathrm{dict}}\), so
their values are invariant to score-equivalent reparameterizations. If the
feature dictionary is nonredundant, this convention has no effect.

Under the residual model below, the conditional mean reward is
\[
    f_t(a)
    =
    \bar b_t(a)+x_t(a)^\top\theta_\star.
\]
Choose a deterministic tie-breaking rule and let
\[
    a_t^\star
    \in
    \operatorname*{arg\,max}_{a\in\calA_t} f_t(a).
\]
For horizon \(n\), we use the feasible-oracle regret \(R_n\) defined in
Section 4 of the main paper.
The oracle is restricted to the same supplied feasible set \(\calA_t\) as the
algorithm.

\paragraph{Onboarding posterior in the existing factorized coordinates.}
Because the FABLE coordinate dictionary is already fixed, the onboarding prior
is constructed exactly as in Equations (2)--(3) of the main paper.
The LLM parser maps onboarding text and user-provided initialization information
into sparse semantic preference directions
\[
\begin{aligned}
    v_j&\in\R^{d_f},
    &u_j&\in[-1,1],\\
    \kappa_j&\ge 0,
    &j&=1,\ldots,J.
\end{aligned}
\]
For example,
\[
    v_j=e_{\mathrm{style:concise}}-e_{\mathrm{style:detailed}}
\]
encodes a preference for concise over detailed answers.  A coding-specific style
preference may use an interaction direction such as
\[
    v_j=
    e_{\substack{\mathrm{task:coding}\\
                 {}\times\mathrm{style:step\mbox{-}by\mbox{-}step}}}
    -
    e_{\substack{\mathrm{task:coding}\\
                 {}\times\mathrm{style:direct}}}.
\]

With base prior
\[
    \theta\sim\Normal(\mu_{\mathrm{base}},\Sigma_{\mathrm{base}}),
    \qquad
    \Sigma_{\mathrm{base}}^{-1}=\lambda_0I_{d_f},
\]
and Gaussian pseudo-observations
\[
    u_j=v_j^\top\theta+\epsilon_j,
    \qquad
    \epsilon_j\sim\Normal(0,\kappa_j^{-1}),
\]
the full-factorized onboarding posterior is
\[
    \theta\mid Q\sim\Normal(\mu_Q,\Sigma_Q),
\]
where
\begin{equation}
\label{eq:pure-spca-prior-precision}
    \Sigma_Q^{-1}
    =
    \Sigma_{\mathrm{base}}^{-1}
    +
    \sum_{j=1}^J\kappa_jv_jv_j^\top,
\end{equation}
and
\begin{equation}
\label{eq:pure-spca-prior-mean}
    \mu_Q
    =
    \Sigma_Q
    \left(
        \Sigma_{\mathrm{base}}^{-1}\mu_{\mathrm{base}}
        +
        \sum_{j=1}^J\kappa_jv_ju_j
    \right).
\end{equation}
Define the posterior second moment
\begin{equation}
\label{eq:pure-spca-second-moment}
    M_Q=\Sigma_Q+\mu_Q\mu_Q^\top.
\end{equation}
This matrix measures which already-factorized FABLE coordinates are likely to be
important for this user after onboarding.

\paragraph{LLM-estimated early-context distribution.}
The subfactorization should preserve coordinates that are both user-relevant and
likely to be activated in the user's near-term requests.  We use an
onboarding-conditioned early-context distribution \(\nu_Q\).  In practice, an LLM
constructs a small set of structured context prototypes
\[
\begin{aligned}
    \widehat C^{(\ell)}
    &=
    (\widehat k_\ell,\widehat r_\ell,\widehat g_\ell,
     \widehat p_\ell,\widehat q_\ell),\\
    \omega_\ell&\ge 0,
    &\sum_{\ell=1}^{L_Q}\omega_\ell&=1,
\end{aligned}
\]
from the user's textual prior and current query.  These are the same structured
context variables used by the existing FABLE feature map.  The LLM does not
choose the mask directly; it only estimates structured context scores. Given a
prototype \(\widehat C^{(\ell)}\), the algorithm forms the feasible action set
\[
    \widehat{\mathcal A}^{(\ell)}
    =
    \mathcal A_{\feas}
    (\widehat C^{(\ell)},Q_{\mathrm{persist}}^{\mathrm{hard}})
\]
and a default action distribution, for example
\[
    \pi_0(a\mid \widehat C^{(\ell)})
    =
    \frac{\exp\{\beta_{\mathrm{temp}}\bar b_\ell(a)\}}
    {\sum_{a'\in\widehat{\mathcal A}^{(\ell)}}
     \exp\{\beta_{\mathrm{temp}}\bar b_\ell(a')\}}.
\]
Here \(\beta_{\mathrm{temp}}\ge0\) is a prespecified inverse temperature, and
\[
\begin{aligned}
    \bar b_\ell(a)
    &=
    b(\widehat C^{(\ell)},a)\\
    &\quad-\lambda\cost(\widehat C^{(\ell)},a).
\end{aligned}
\]
Write
\(\phi_\ell(a)=\phi_{\mathrm{fac}}(\widehat C^{(\ell)},a)\).
The empirical feature covariance is
\begin{equation}
\label{eq:pure-spca-Ghat}
    \widehat G_Q
    =
    \sum_{\ell=1}^{L_Q}\omega_\ell
    \sum_{a\in\widehat{\mathcal A}^{(\ell)}}
    \pi_0(a\mid \widehat C^{(\ell)})
    \phi_\ell(a)\phi_\ell(a)^\top.
\end{equation}
For the population theory below, write
\begin{equation}
\label{eq:pure-spca-GQ}
    G_Q
    =
    \E_{C\sim\nu_Q,\ a\sim\pi_0(\cdot\mid C)}
    \left[
        \phi_{\mathrm{fac}}(C,a)\phi_{\mathrm{fac}}(C,a)^\top
    \right].
\end{equation}

\paragraph{Selection matrix and coordinate-preserving subfactorization.}
For a binary mask \(z\in\{0,1\}^{d_f}\), let
\[
    \mathcal I(z)=\{i:z_i=1\},
    \qquad
    d_z=|\mathcal I(z)|.
\]
Let
\[
    S_z=[e_i:i\in\mathcal I(z)]\in\{0,1\}^{d_f\times d_z}
\]
be the selection matrix and let
\begin{equation}
\label{eq:pure-spca-mask}
    Z=S_zS_z^\top=\operatorname{diag}(z)
\end{equation}
be the diagonal coordinate mask.  The active feature vector and active user vector
are
\begin{equation}
\label{eq:pure-spca-feature-user}
    \phi_z(C,a)=S_z^\top\phi_{\mathrm{fac}}(C,a)\in\R^{d_z},
    \qquad
    \theta_{\star,z}=S_z^\top\theta_\star\in\R^{d_z}.
\end{equation}
Thus FABLE-SPCA compresses from the existing FABLE dimension \(d_f\) to a smaller
semantic dimension \(d_z\), but it never rotates or mixes coordinates.

\begin{lemma}[Subfactorization preserves semantic coordinates]
\label{lem:pure-spca-coordinate-preservation}
For any binary mask \(z\), the active feature vector \(\phi_z(C,a)\) and the
active true user vector \(\theta_{\star,z}\) have the same dimension. Their \(r\)-th
coordinates correspond to the same coordinate of the original FABLE feature map.
Moreover,
\[
    \phi_z(C,a)^\top\theta_{\star,z}
    =
    \phi_{\mathrm{fac}}(C,a)^\top Z\theta_\star.
\]
\end{lemma}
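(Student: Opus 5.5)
The argument is direct linear algebra on the selection matrix \(S_z=[e_i:i\in\mathcal I(z)]\). I would fix an enumeration \(i_1<\cdots<i_{d_z}\) of \(\mathcal I(z)\), so that the \(r\)-th column of \(S_z\) is \(e_{i_r}\). The dimension claim is then immediate. By Equation~\eqref{eq:pure-spca-feature-user}, both \(\phi_z(C,a)=S_z^\top\phi_{\mathrm{fac}}(C,a)\) and \(\theta_{\star,z}=S_z^\top\theta_\star\) are images of vectors in \(\R^{d_f}\) under the same \(d_z\times d_f\) matrix \(S_z^\top\). Hence both lie in \(\R^{d_z}\).

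For coordinate preservation, I would compute the \(r\)-th entries. They are \(e_{i_r}^\top\phi_{\mathrm{fac}}(C,a)=[\phi_{\mathrm{fac}}(C,a)]_{i_r}\) and \(e_{i_r}^\top\theta_\star=[\theta_\star]_{i_r}\). So the \(r\)-th active feature and the \(r\)-th active user coordinate both read off original coordinate \(i_r\), which carries the same semantic label in the fixed FABLE dictionary. The key point is that \(S_z\) has exactly one unit entry per column. No rotation or mixing occurs, in contrast with a general PCA projection.

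For the identity, I would write
\[
\phi_z(C,a)^\top\theta_{\star,z}=\phi_{\mathrm{fac}}(C,a)^\top S_zS_z^\top\theta_\star ,
\]
and then substitute \(Z=S_zS_z^\top\) from Equation~\eqref{eq:pure-spca-mask}. That equation requires \(S_zS_z^\top=\sum_{r}e_{i_r}e_{i_r}^\top=\operatorname{diag}(z)\), which holds because the columns \(e_{i_r}\) are distinct standard basis vectors.

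No step is a genuine obstacle. The only care needed is the convention on \(\theta_\star\): if the dictionary is redundant, \(\theta_\star\) denotes the prior-centered representative fixed in the parameterization convention. The identity is then stated for that specific vector and needs no score-equivalence argument, since it is purely algebraic. It holds pointwise for every \((C,a)\) and every mask \(z\), including \(d_z=0\), where both sides are zero.
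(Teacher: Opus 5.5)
Your proposal is correct and takes essentially the same approach as the paper. You enumerate \(\mathcal I(z)\) so that the \(r\)-th column of \(S_z\) is \(e_{i_r}\), read off the matching coordinates of \(\phi_{\mathrm{fac}}(C,a)\) and \(\theta_\star\), and substitute \(S_zS_z^\top=Z\); your explicit check that \(S_zS_z^\top=\operatorname{diag}(z)\) and your remarks on the redundant-dictionary convention and the \(d_z=0\) case are harmless additions.
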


\begin{proof}
Write \(\mathcal I(z)=\{i_1,\ldots,i_{d_z}\}\).  The \(r\)-th column of \(S_z\)
is \(e_{i_r}\).  Hence
\[
    \phi_z(C,a)[r]=\phi_{\mathrm{fac}}(C,a)[i_r],
    \qquad
    \theta_{\star,z}[r]=\theta_\star[i_r].
\]
Thus the two \(r\)-th coordinates inherit the same semantic name from the original
FABLE dictionary.  Substituting the definitions and using
\(S_zS_z^\top=Z\) gives
\[
\begin{aligned}
    \phi_z(C,a)^\top\theta_{\star,z}
    &=
    \phi_{\mathrm{fac}}(C,a)^\top
    S_zS_z^\top\theta_\star\\
    &=
    \phi_{\mathrm{fac}}(C,a)^\top Z\theta_\star.
\end{aligned}
\]
\end{proof}

\begin{lemma}[Active prior induced by full-factorized onboarding]
\label{lem:pure-spca-active-prior}
If
\[
    \theta\mid Q\sim\Normal(\mu_Q,\Sigma_Q)
\]
is the onboarding posterior in the original \(d_f\)-dimensional FABLE coordinate
space, then the subfactorized parameter \(\theta_z=S_z^\top\theta\) satisfies
\[
    \theta_z\mid Q\sim\Normal(\mu_{0,z},\Sigma_{0,z}),
\]
where
\begin{equation}
\label{eq:pure-spca-active-prior-mean}
    \mu_{0,z}=S_z^\top\mu_Q,
\end{equation}
and
\begin{equation}
\label{eq:pure-spca-active-prior-cov}
    \Sigma_{0,z}=S_z^\top\Sigma_QS_z.
\end{equation}
The online Bayesian update for FABLE-SPCA is then the usual update in
\(\R^{d_z}\), using \(\phi_z\), \(\mu_{0,z}\), and \(\Sigma_{0,z}\).
\end{lemma}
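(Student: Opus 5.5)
The claim is the standard fact that a linear image of a Gaussian vector is Gaussian; the plan is to apply this with the coordinate selection map \(S_z^\top\). The only subtle part is saying what ``the usual update in \(\R^{d_z}\)'' means.

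\textbf{Marginal law.} Conditional on \(Q\), \(\theta\) is Gaussian with mean \(\mu_Q\) and covariance \(\Sigma_Q\), where these come from Equations~\eqref{eq:pure-spca-prior-precision}--\eqref{eq:pure-spca-prior-mean}. Since \(\theta_z=S_z^\top\theta\) is a fixed linear map of \(\theta\), its characteristic function is
\[
\E[e^{\mathrm i s^\top\theta_z}\mid Q]
=\E[e^{\mathrm i (S_zs)^\top\theta}\mid Q]
=\exp\{\mathrm i s^\top S_z^\top\mu_Q-\tfrac12 s^\top S_z^\top\Sigma_QS_z s\}.
\]
This identifies the law of \(\theta_z\) given \(Q\) as \(\Normal(S_z^\top\mu_Q,S_z^\top\Sigma_QS_z)\), which gives Equations~\eqref{eq:pure-spca-active-prior-mean}--\eqref{eq:pure-spca-active-prior-cov}.

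\textbf{Positive definiteness of the active covariance.} \(S_z\) has orthonormal columns and hence full column rank, and \(\Sigma_Q\succ0\) because \(\Sigma_{\mathrm{base}}^{-1}=\lambda_0I\succ0\). Therefore \(\Sigma_{0,z}=S_z^\top\Sigma_QS_z\succ0\). It is the principal submatrix of \(\Sigma_Q\) indexed by \(\mathcal I(z)\), so it is invertible and its inverse can serve as the initial precision \(\Lambda_{0,z}\).

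\textbf{Online update.} We then show that the online update is the ordinary recursion, Equations~\eqref{eq:precision-update}--\eqref{eq:info-update}, started from \((\mu_{0,z},\Sigma_{0,z})\) with features \(\phi_z\). FABLE-SPCA adopts the working likelihood \(\widetilde y_t=\phi_z(C_t,a_t)^\top\theta_z+\varepsilon_t\) with Gaussian noise. By Lemma~\ref{lem:pure-spca-coordinate-preservation}, \(\phi_z^\top\theta_z=\phi_{\mathrm{fac}}^\top Z\theta\), so this likelihood involves \(\theta\) only through \(\theta_z\). Bayes' rule applied to the Gaussian prior \(\Normal(\mu_{0,z},\Sigma_{0,z})\) on \(\theta_z\) then gives the standard conjugate result: the precision increases by \(\sigma^{-2}\phi_z\phi_z^\top\) and the information vector by \(\sigma^{-2}\phi_z\widetilde y_t\). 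Induction over \(t\) completes this step.

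\textbf{Main obstacle.} The delicate point is interpretive, not computational. The update is exact Bayes only under the masked working model, in which inactive coordinates contribute nothing to the score. The proof will state this explicitly. It will add that, relative to the full-coordinate model, the result is the working posterior used by FABLE-SPCA rather than the exact marginal of the full posterior: the full posterior marginal would differ whenever \(\Sigma_Q\) couples active and inactive coordinates and the inactive features are nonzero. No additional assumptions are needed beyond \(\Sigma_Q\succ0\).
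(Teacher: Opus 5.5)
Your proposal is correct and follows the paper's proof, which consists only of the observation that \(\theta_z=S_z^\top\theta\) is a linear image of a Gaussian vector and is therefore \(\Normal(S_z^\top\mu_Q,S_z^\top\Sigma_QS_z)\); the paper gives no argument for the online-update clause, so your positive-definiteness check and your conjugacy argument under the masked working likelihood are sound elaborations of the same approach, not a different route. One small refinement to your closing caveat: coupling in \(\Sigma_Q\) is not needed for the masked update to depart from the exact full-model marginal, because nonzero inactive features already make the full likelihood depend on coordinates outside \(\mathcal I(z)\), which in general shifts the mean and inflates the variance of the marginal likelihood for \(\theta_z\).
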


\begin{proof}
The vector \(\theta_z=S_z^\top\theta\) is a linear transformation of a Gaussian
random vector.  Therefore it is Gaussian with mean
\[
    \E[\theta_z\mid Q]=S_z^\top\mu_Q
\]
and covariance
\[
    \operatorname{Cov}(\theta_z\mid Q)=S_z^\top\Sigma_QS_z.
\]
\end{proof}

\begin{lemma}[Compatibility with the original active-space onboarding update]
\label{lem:pure-spca-onboarding-compatibility}
Suppose the base prior is isotropic and the mask is onboarding-closed:
\[
    \operatorname{supp}(v_j)\subseteq\mathcal I(z),
    \qquad
    \forall j \text{ with }\kappa_j>0.
\]
Let
\[
    v_{j,z}=S_z^\top v_j.
\]
Then the active prior in Lemma~\ref{lem:pure-spca-active-prior} is the same
posterior that would be obtained by applying the original onboarding equations
directly in the \(d_z\)-dimensional subfactorized space:
\[
    \Sigma_{0,z}^{-1}
    =
    \lambda_0I_{d_z}
    +
    \sum_{j=1}^J\kappa_jv_{j,z}v_{j,z}^\top,
\]
and
\[
\begin{aligned}
    \mu_{0,z}
    &=
    \Sigma_{0,z}
    \left(
        \lambda_0\mu_{\mathrm{base},z}
        +
        \sum_{j=1}^J\kappa_jv_{j,z}u_j
    \right).
\end{aligned}
\]
Here \(\mu_{\mathrm{base},z}=S_z^\top\mu_{\mathrm{base}}\).
\end{lemma}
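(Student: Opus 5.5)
The plan is to show that, under onboarding-closedness and an isotropic base prior, the full onboarding precision \(\Sigma_Q^{-1}\) is block diagonal with respect to the split of coordinates into \(\mathcal I(z)\) and its complement. Both identities then follow from Lemma~\ref{lem:pure-spca-active-prior}, i.e.\ from \(\mu_{0,z}=S_z^\top\mu_Q\) and \(\Sigma_{0,z}=S_z^\top\Sigma_QS_z\).

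\emph{Precision.} Let \(P_z=[S_z,\,S_{\bar z}]\) be the permutation matrix that lists the active coordinates first, where \(S_{\bar z}\) selects the coordinates outside \(\mathcal I(z)\). Since \(\operatorname{supp}(v_j)\subseteq\mathcal I(z)\), we have \(Zv_j=v_j\), so \(v_j=S_zS_z^\top v_j=S_zv_{j,z}\) and \(S_{\bar z}^\top v_j=0\). Substituting into Equation~\eqref{eq:pure-spca-prior-precision} with \(\Sigma_{\mathrm{base}}^{-1}=\lambda_0I_{d_f}\) gives
\[
P_z^\top\Sigma_Q^{-1}P_z=\begin{pmatrix}\lambda_0I_{d_z}+\sum_j\kappa_jv_{j,z}v_{j,z}^\top & 0\\ 0 & \lambda_0 I_{d_f-d_z}\end{pmatrix}.
\]
The off-diagonal blocks are \(S_{\bar z}^\top(\cdot)S_z\). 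They vanish because \(S_{\bar z}^\top S_z=0\) handles the identity term, and \(S_{\bar z}^\top v_j=0\) handles each rank-one term. The inverse of a block-diagonal matrix is block diagonal, with blocks equal to the inverses of the original blocks. Hence
\[
S_z^\top\Sigma_QS_z=\Bigl(\lambda_0I_{d_z}+\sum_j\kappa_jv_{j,z}v_{j,z}^\top\Bigr)^{-1}.
\]
The left side is \(\Sigma_{0,z}\) by Equation~\eqref{eq:pure-spca-active-prior-cov}, which is the first claim.

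\emph{Mean.} The same block structure gives \(S_z^\top\Sigma_QS_{\bar z}=0\). Writing \(I_{d_f}=S_zS_z^\top+S_{\bar z}S_{\bar z}^\top\), we obtain
\[
S_z^\top\Sigma_Q=S_z^\top\Sigma_QS_zS_z^\top=\Sigma_{0,z}S_z^\top.
\]
Applying this to Equation~\eqref{eq:pure-spca-prior-mean} yields
\[
\mu_{0,z}=S_z^\top\mu_Q=\Sigma_{0,z}S_z^\top\Bigl(\lambda_0\mu_{\mathrm{base}}+\sum_j\kappa_jv_ju_j\Bigr)=\Sigma_{0,z}\Bigl(\lambda_0\mu_{\mathrm{base},z}+\sum_j\kappa_jv_{j,z}u_j\Bigr),
\]
which is the stated formula. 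Tuples with \(\kappa_j=0\) are omitted, and the \(J=0\) case is the trivially block-diagonal \(\lambda_0I_{d_f}\).

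The main step requiring care is verifying that the cross blocks vanish. This is exactly where both hypotheses enter. Isotropy, or more generally a block-diagonal base precision, ensures the base term does not couple the two blocks. Closedness ensures no onboarding direction couples them. Without either hypothesis, marginalizing would produce a Schur complement rather than the directly restricted precision, and the two constructions would differ.
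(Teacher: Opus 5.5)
Your proposal is correct and follows the same route as the paper: isotropy and onboarding-closedness make the full onboarding precision block diagonal across $\mathcal I(z)$ and its complement, and you then restrict the information vector. You also make explicit a step the paper leaves implicit: the vanishing cross blocks of $\Sigma_Q$ give $S_z^\top\Sigma_Q=\Sigma_{0,z}S_z^\top$, which is what justifies obtaining the active mean by restricting $h_Q$.
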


\begin{proof}
Under the onboarding-closed condition, each \(v_j\) has zero coordinates outside
\(\mathcal I(z)\).  Hence the full-factorized precision matrix
\[
    \Sigma_Q^{-1}=\lambda_0I_{d_f}+\sum_{j=1}^J\kappa_jv_jv_j^\top
\]
is block diagonal with respect to the selected coordinates and their complement.
Its selected-coordinate block is
\[
    \lambda_0I_{d_z}
    +
    \sum_{j=1}^J\kappa_jv_{j,z}v_{j,z}^\top.
\]
Define the full-space information vector
\[
    h_Q
    =
    \lambda_0\mu_{\mathrm{base}}
    +
    \sum_{j=1}^J\kappa_jv_ju_j.
\]
It restricts in the same way:
\[
\begin{aligned}
    S_z^\top h_Q
    =
    \lambda_0\mu_{\mathrm{base},z}
    +
    \sum_{j=1}^J\kappa_jv_{j,z}u_j.
\end{aligned}
\]
Thus updating in the original \(d_f\)-dimensional factorized space and then
restricting to \(\mathcal I(z)\) gives the same selected-coordinate posterior as
updating directly in the subfactorized active space.
\end{proof}

\paragraph{Coordinate-sparse PCA objective.}
A subfactorization omits some existing FABLE coordinates.  The omitted-coordinate
contribution is measured in the current factorized coordinate system:
Define
\[
    \eta_z(C,a,\theta)
    =
    \phi_{\mathrm{fac}}(C,a)^\top(I_{d_f}-Z)\theta.
\]
\begin{equation}
\label{eq:pure-spca-omitted-error}
    \mathcal E_Q(z)
    =
    \E_{\theta\sim\Normal(\mu_Q,\Sigma_Q)}
    \E_{C\sim\nu_Q,\ a\sim\pi_0(\cdot\mid C)}
    \left[\eta_z(C,a,\theta)^2\right].
\end{equation}
This is the PCA-style reconstruction error, but for the scalar personalized
reward contribution rather than for the feature vector alone.

\begin{lemma}[Hadamard form of the subfactorization error]
\label{lem:pure-spca-hadamard-error}
Let \(G_Q\) be defined by Equation~\eqref{eq:pure-spca-GQ}, let
\(M_Q=\Sigma_Q+\mu_Q\mu_Q^\top\), and define the coordinate-sparse PCA relevance
matrix
\begin{equation}
\label{eq:pure-spca-relevance-matrix}
    H_Q=G_Q\circ M_Q,
\end{equation}
where \(\circ\) denotes the Hadamard product.  Then
\begin{equation}
\label{eq:pure-spca-error-binary-quadratic}
    \mathcal E_Q(z)
    =
    (\mathbf 1-z)^\top H_Q(\mathbf 1-z).
\end{equation}
Moreover, since \(G_Q\succeq 0\) and \(M_Q\succeq 0\), we have
\(H_Q\succeq 0\) by the Schur product theorem.
\end{lemma}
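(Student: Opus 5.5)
The plan is to expand the squared omitted-coordinate contribution as a quadratic form in the complementary mask, and then use independence of \(\theta\) from \((C,a)\) to factor the double expectation into the Hadamard product of the two second-moment matrices. Write \(w=\mathbf 1-z\) and \(W=I_{d_f}-Z=\operatorname{diag}(w)\). Abbreviate \(\phi=\phi_{\mathrm{fac}}(C,a)\). Then \(\eta_z(C,a,\theta)=\phi^\top W\theta=\sum_i w_i\phi_i\theta_i\), so
\[
\eta_z(C,a,\theta)^2=\sum_{i,k} w_iw_k\,(\phi_i\phi_k)(\theta_i\theta_k).
\]

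First take expectations term by term. In Equation~\eqref{eq:pure-spca-omitted-error} the two expectations are iterated: \(\theta\sim\Normal(\mu_Q,\Sigma_Q)\) is drawn independently of \((C,a)\sim\nu_Q\otimes\pi_0\). The expectation of each product therefore factors as
\[
\E[\phi_i\phi_k]\,\E[\theta_i\theta_k]=(G_Q)_{ik}(M_Q)_{ik}.
\]
Here \(\E[\theta\theta^\top]=\Sigma_Q+\mu_Q\mu_Q^\top=M_Q\) by Equation~\eqref{eq:pure-spca-second-moment}, and \(G_Q\) is as in Equation~\eqref{eq:pure-spca-GQ}. Summing gives \(\sum_{i,k}w_iw_k(G_Q\circ M_Q)_{ik}=w^\top H_Qw\), which is Equation~\eqref{eq:pure-spca-error-binary-quadratic}. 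An equivalent matrix route is to write \(\eta^2=\theta^\top W\phi\phi^\top W\theta=\operatorname{tr}(W\phi\phi^\top W\theta\theta^\top)\), take expectations to get \(\operatorname{tr}(WG_QWM_Q)\), and then use the identity \(\operatorname{tr}(D_wAD_wB^\top)=w^\top(A\circ B)w\) with \(M_Q\) symmetric.

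The only technical point is justifying the interchange of the finite sum with the expectations, that is, finiteness of moments. Gaussian \(\theta\) has all moments. The feature vector is bounded by Assumption~\ref{ass:bounded-residual}, or the context distribution is finitely supported, as with the empirical \(\widehat G_Q\). Hence every term is integrable and Fubini applies. This is the step I would state explicitly, though it is not a real obstacle.

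Finally, positive semidefiniteness. \(G_Q\) is an expectation of rank-one outer products, so \(v^\top G_Qv=\E[(\phi^\top v)^2]\ge0\). \(M_Q\) is a sum of the PSD matrix \(\Sigma_Q\) and the rank-one PSD matrix \(\mu_Q\mu_Q^\top\). The Schur product theorem then gives \(H_Q\succeq0\). Alternatively, directly, \(v^\top H_Qv=\E[(\phi^\top\operatorname{diag}(v)\theta)^2]\ge0\) by the same computation with \(w\) replaced by an arbitrary \(v\), which avoids citing the theorem at all.
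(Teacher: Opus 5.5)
Your proposal is correct and follows the paper's proof. The paper uses the matrix route you list as an alternative: it reduces \(\mathcal E_Q(z)\) to \(\operatorname{tr}(A_zG_QA_zM_Q)\) with \(A_z=I_{d_f}-Z\), expands this trace entrywise because \(A_z\) is diagonal, and cites the Schur product theorem for \(H_Q\succeq0\). Your coordinatewise expansion, integrability remark and direct PSD argument are minor refinements of the same computation; the integrability remark really only needs \(G_Q\) to be finite, since Assumption~\ref{ass:bounded-residual} bounds features on realized feasible sets rather than under \(\nu_Q\).
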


\begin{proof}
Let \(A_z=I_{d_f}-Z\) and \(\varphi=\phi_{\mathrm{fac}}(C,a)\).  The squared
omitted contribution is
\[
    (\varphi^\top A_z\theta)^2
    =
    \theta^\top A_z\varphi\varphi^\top A_z\theta.
\]
Taking expectation over \((C,a)\) gives
\[
    \E_{C,a}[(\varphi^\top A_z\theta)^2]
    =
    \theta^\top A_zG_QA_z\theta.
\]
Taking expectation over \(\theta\mid Q\) and using
\(\E[\theta\theta^\top\mid Q]=M_Q\) yields
\[
    \mathcal E_Q(z)
    =
    \operatorname{tr}(A_zG_QA_zM_Q).
\]
Because \(A_z\) is diagonal with diagonal entries \(1-z_i\), this trace equals
\[
\begin{aligned}
    \mathcal E_Q(z)
    &=
    \sum_{i=1}^{d_f}\sum_{j=1}^{d_f}
    (1-z_i)(1-z_j)H_Q[i,j]\\
    &=
    (\mathbf 1-z)^\top H_Q(\mathbf 1-z).
\end{aligned}
\]
The positive semidefiniteness of \(H_Q\) follows from the Schur product theorem.
\end{proof}

\begin{proposition}[Pure coordinate-sparse PCA selection]
\label{prop:pure-coordinate-sparse-pca-selection}
For a fixed active dimension \(k\), define the admissible mask family
\[
    \mathfrak Z_k(Q)
    \subseteq
    \left\{
        z\in\{0,1\}^{d_f}:\mathbf 1^\top z=k
    \right\},
\]
where \(\mathfrak Z_k(Q)\) may include must-keep constraints for trusted
onboarding coordinates, forbidden coordinates from hard user constraints, and
parent-child constraints requiring main-effect blocks before higher-order
interaction blocks.  The pure coordinate-sparse PCA subfactorization is
\begin{equation}
\label{eq:pure-coordinate-sparse-pca-problem}
    z_k^\star
    \in
    \operatorname*{arg\,min}_{z\in\mathfrak Z_k(Q)}
    \mathcal E_Q(z).
\end{equation}
Equivalently, \(z_k^\star\) maximizes the retained onboarding-weighted
variance
\begin{equation}
\label{eq:pure-spca-retained-gain}
    \operatorname{Gain}_Q(z)
    =
    \mathcal E_Q(0)-\mathcal E_Q(z)
    =
    2z^\top H_Q\mathbf 1-z^\top H_Qz.
\end{equation}
Thus selecting \(S_z\) is a cardinality-constrained coordinate-sparse PCA problem
inside the already-factorized FABLE coordinate system.  If \(H_Q\) is diagonal,
then \(z_k^\star\) keeps the \(k\) admissible coordinates with the largest
scores \(H_Q[i,i]=G_Q[i,i]M_Q[i,i]\).
This objective is a coordinate-restricted form of sparse principal component
selection~\citep{zou2006sparse,daspremont2007direct}.
\end{proposition}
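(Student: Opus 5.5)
The proposition has three parts: the equivalence of minimizing \(\mathcal E_Q\) with maximizing \(\operatorname{Gain}_Q\), the closed form of the gain, and the diagonal special case. All three follow from Lemma~\ref{lem:pure-spca-hadamard-error}. Fix \(k\) and the admissible family \(\mathfrak Z_k(Q)\). The constant \(\mathcal E_Q(0)\) does not depend on \(z\). Hence minimizing \(\mathcal E_Q(z)\) over \(\mathfrak Z_k(Q)\) is the same as maximizing \(\mathcal E_Q(0)-\mathcal E_Q(z)\) over the same family, so the two argmin/argmax sets coincide. The constraints in \(\mathfrak Z_k(Q)\) (must-keep, forbidden, parent--child) play no role here, since they only restrict the common feasible set.

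For the closed form, I would substitute Equation~\eqref{eq:pure-spca-error-binary-quadratic} with \(z=0\) and with general \(z\). This gives
\[
\mathcal E_Q(0)-\mathcal E_Q(z)=\mathbf 1^\top H_Q\mathbf 1-(\mathbf 1-z)^\top H_Q(\mathbf 1-z).
\]
Expanding and using the symmetry of \(H_Q\) yields \(2z^\top H_Q\mathbf 1-z^\top H_Qz\). Symmetry holds because \(G_Q\) and \(M_Q\) are symmetric, and the Hadamard product of symmetric matrices is symmetric. The interpretation as cardinality-constrained coordinate-sparse PCA follows from the definitions. The retained quantity is a quadratic form in a binary support indicator of fixed cardinality, applied to the PSD relevance matrix \(H_Q\) (PSD by Lemma~\ref{lem:pure-spca-hadamard-error}). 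The coordinates are never rotated, as guaranteed by Lemma~\ref{lem:pure-spca-coordinate-preservation}.

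For the diagonal case, write \(h_i=H_Q[i,i]\). Because \(z_i^2=z_i\) for binary \(z\), the gain becomes
\[
2\sum_i z_ih_i-\sum_i z_i^2h_i=\sum_i z_ih_i,
\]
which is additive over kept coordinates. Here \(h_i=G_Q[i,i]M_Q[i,i]\ge0\) by the definition of the Hadamard product. Maximizing \(\sum_{i}z_ih_i\) subject to \(\mathbf 1^\top z=k\) is solved by keeping the \(k\) largest scores, by an exchange argument: swapping a kept smaller score for an unkept larger one cannot decrease the sum.

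The main obstacle is interpreting this claim when \(\mathfrak Z_k(Q)\) is a general admissible family. With parent--child constraints, top-\(k\) selection need not be feasible. I would therefore state the diagonal claim as holding when the admissibility constraints are only must-keep and forbidden ones. In that case, one first includes the must-keep coordinates, excludes the forbidden ones, and fills the remaining slots by the largest \(h_i\) among free coordinates, so the exchange argument is run only among the free coordinates. I would also note that feasibility requires \(\mathfrak Z_k(Q)\) to be nonempty. The remaining steps are routine algebra.
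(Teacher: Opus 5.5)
Your proposal is correct and follows the paper's proof essentially step for step: you drop the $z$-independent constant $\mathcal E_Q(0)$, expand the binary quadratic form from Lemma~\ref{lem:pure-spca-hadamard-error} using the symmetry of $H_Q$, and reduce the diagonal case to an additive top-$k$ selection (the paper writes $\mathcal E_Q(z)=\sum_i(1-z_i)H_Q[i,i]$, which is the same reduction). Your caveat is a legitimate sharpening of the paper's looser phrase ``subject to admissibility constraints.'' Parent--child constraints can make plain top-$k$ selection infeasible, so the diagonal claim should be read as keeping the must-keep coordinates and filling the remaining slots with the largest free scores.
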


\begin{proof}
For fixed \(k\), minimizing \(\mathcal E_Q(z)\) is equivalent to maximizing
\(\mathcal E_Q(0)-\mathcal E_Q(z)\), because \(\mathcal E_Q(0)\) does not depend
on \(z\).  Using Equation~\eqref{eq:pure-spca-error-binary-quadratic},
\[
\begin{aligned}
    \mathcal E_Q(0)-\mathcal E_Q(z)
    &=
    \mathbf 1^\top H_Q\mathbf 1
    -
    (\mathbf 1-z)^\top H_Q(\mathbf 1-z) \\
    &=
    2z^\top H_Q\mathbf 1-z^\top H_Qz.
\end{aligned}
\]
If \(H_Q\) is diagonal, then
\[
    \mathcal E_Q(z)=\sum_{i=1}^{d_f}(1-z_i)H_Q[i,i],
\]
so the best budget-\(k\) mask keeps the \(k\) largest diagonal entries subject to
admissibility constraints.
\end{proof}

\begin{remark}[Coordinate-preserving subfactorization]
Any selection matrix \(S\in\{0,1\}^{d_f\times d}\) produces the representation
\(S^\top\phi_{\mathrm{fac}}(C,a)\). FABLE-SPCA remains in this representation
class and specifies its selection criterion by minimizing the
onboarding-weighted omitted personalized prediction variance
\(\mathcal E_Q(S)\).
\end{remark}

\begin{remark}[Relation to standard sparse PCA]
Standard sparse PCA usually allows a loading matrix \(U\in\R^{d_f\times r}\) with
row sparsity,
\[
    U^\top U=I_r,
    \qquad
    \|U\|_{2,0}\le k,
\]
and maximizes a variance objective such as \(\operatorname{tr}(U^\top A_QU)\).
This can reduce approximation error because it allows rotations inside the
selected support. FABLE-SPCA uses the more restrictive choice
\(U=S_z\), so each active dimension remains one original semantic coordinate.
Therefore FABLE-SPCA is a coordinate-restricted sparse PCA method rather than a
latent sparse PCA method.
The distinction follows the regression and semidefinite formulations of sparse
PCA~\citep{zou2006sparse,daspremont2007direct}.
\end{remark}

\paragraph{Regret of a fixed sparse-PCA subfactorization.}
For the original full-factorized FABLE model, the expected reward is
\[
\begin{aligned}
    f_t(a)
    &=
    \bar b_t(a)
    +\phi_{\mathrm{fac}}(C_t,a)^\top\theta_\star.
\end{aligned}
\]
Here \(\bar b_t(a)=b(C_t,a)-\lambda\cost(C_t,a)\).
For a mask \(z\), define the projected reward
\[
\begin{aligned}
    f_{t,z}(a)
    &=
    \bar b_t(a)
    +\phi_{\mathrm{fac}}(C_t,a)^\top Z\theta_\star\\
    &=
    \bar b_t(a)+\phi_z(C_t,a)^\top\theta_{\star,z}.
\end{aligned}
\]
Let \(a_t^\star\) be the best feasible action under \(f_t\), and let
\(a_{t,z}^\star\) be the best feasible action under \(f_{t,z}\). For this
fixed-mask analysis, \(a_t\) denotes the action selected by the learner using
mask \(z\), and
\[
    R_n(z)
    =
    \sum_{t=1}^n
    \left[
        f_t(a_t^\star)-f_t(a_t)
    \right].
\]

\begin{assumption}[Onboarding-calibrated omitted coordinates]
\label{ass:pure-spca-calibration}
There exists \(\rho_Q\ge 1\) such that, for every candidate mask \(z\) and every
predictable feasible-action sequence considered by the algorithm or the
feasible oracle,
\[
    \E\left[
        \left(
            \phi_{\mathrm{fac}}(C_t,a)^\top(I_{d_f}-Z)\theta_\star
        \right)^2
        \middle| Q
    \right]
    \le
    \rho_Q\mathcal E_Q(z).
\]
\end{assumption}

\begin{theorem}[Bias--estimation regret decomposition for a fixed mask]
\label{thm:pure-spca-fixed-regret}
Suppose the residual-feedback model has conditionally centered sub-Gaussian
noise, the feasible sets are predictable and nonempty, and
Assumption~\ref{ass:pure-spca-calibration} holds.  Fix a
coordinate-preserving mask \(z\).  Suppose the FABLE-SPCA learner run in
\(\R^{d_z}\) has projected
feasible-oracle regret bounded by
\[
    B_n(z)
    \ge
    \E\left[
        \sum_{t=1}^n
        \left(
            f_{t,z}(a_{t,z}^\star)-f_{t,z}(a_t)
        \right)
        \middle| Q
    \right].
\]
Then its regret against the original full-factorized feasible oracle satisfies
\begin{equation}
\label{eq:pure-spca-fixed-regret}
\begin{aligned}
    \E[R_n(z)\mid Q]
    &\le
    n b_Q(z)+B_n(z),\\
    b_Q(z)
    &=2\sqrt{\rho_Q\mathcal E_Q(z)}.
\end{aligned}
\end{equation}
For a UCB-style subfactorized learner, one may take
\begin{equation}
\label{eq:pure-spca-UCB-B}
    B_n^{\mathrm{UCB}}(z)
    =
    c_0\beta_{n,z}^{\mathrm{UCB}}
    \sqrt{
        n d_z
        \log\left(
            1+
            \frac{nL_z^2}{\lambda_0d_z\sigma^2}
        \right)
    },
\end{equation}
where
\[
    L_z
    =
    \sup_{t\ge1,\ a\in\calA_t}
    \|\phi_z(C_t,a)\|_2,
\]
\(\beta_{n,z}^{\mathrm{UCB}}\) is the standard self-normalized UCB confidence
radius at horizon \(n\), and \(c_0>0\) is a universal constant. For
Thompson-style exploration, one may take
\[
    B_n^{\mathrm{TS}}(z)=\widetilde O(d_z^{3/2}\sqrt n).
\]
The UCB term follows the standard self-normalized analysis
\citep{abbasi2011improved}. The Thompson-sampling order follows
\citet{abeille2017linear}.
\end{theorem}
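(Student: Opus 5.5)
The proof rests on a per-round three-term decomposition of the full-model regret. Write \(\Delta_t(a)=\phi_{\mathrm{fac}}(C_t,a)^\top(I_{d_f}-Z)\theta_\star\), so that \(f_t(a)=f_{t,z}(a)+\Delta_t(a)\) for every feasible \(a\). Then
\[
f_t(a_t^\star)-f_t(a_t)
=\bigl[f_t(a_t^\star)-f_{t,z}(a_t^\star)\bigr]
+\bigl[f_{t,z}(a_t^\star)-f_{t,z}(a_t)\bigr]
+\bigl[f_{t,z}(a_t)-f_t(a_t)\bigr].
\]
The first and third brackets equal \(\Delta_t(a_t^\star)\) and \(-\Delta_t(a_t)\). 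Since \(a_{t,z}^\star\) maximizes \(f_{t,z}\) over the same set \(\calA_t\), the middle bracket is at most \(f_{t,z}(a_{t,z}^\star)-f_{t,z}(a_t)\). No case split is needed: because the projected oracle is at least as good as \(a_t^\star\) under \(f_{t,z}\), the comparator swap costs nothing.

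Next, take conditional expectations given \(Q\) and sum over \(t\). The middle terms sum to at most \(B_n(z)\) by hypothesis. For the two omitted-coordinate terms, I would use \(\E|X|\le\sqrt{\E X^2}\) together with Assumption~\ref{ass:pure-spca-calibration}. The action \(a_t^\star\) is a deterministic function of \(\calA_t\) and \(C_t\) under the fixed tie-break. The action \(a_t\) is the algorithm's predictable choice. Both therefore fall under ``feasible-action sequences considered by the algorithm or the feasible oracle.'' This gives \(\E[|\Delta_t(a)|\mid Q]\le\sqrt{\rho_Q\mathcal E_Q(z)}\) for each of the two actions, so each round contributes at most \(2\sqrt{\rho_Q\mathcal E_Q(z)}=b_Q(z)\). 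Summing over \(n\) rounds yields \(\E[R_n(z)\mid Q]\le n\,b_Q(z)+B_n(z)\), which is Equation~\eqref{eq:pure-spca-fixed-regret}.

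It remains to justify the two admissible choices of \(B_n(z)\). The projected problem has mean reward \(\bar b_t(a)+\phi_z(C_t,a)^\top\theta_{\star,z}\), but the observations \(y_t\) also contain the omitted part \(\Delta_t(a_t)\). This is the main obstacle: the learner run in \(\R^{d_z}\) faces a misspecified model, so the standard self-normalized confidence set around \(\theta_{\star,z}\) does not hold verbatim. I would handle this in one of two ways. The first is to read the theorem as a conditional statement, where \(B_n(z)\) is any valid bound on projected regret, and to note that the stated \(B_n^{\mathrm{UCB}}\) and \(B_n^{\mathrm{TS}}\) are the well-specified rates from \citet{abbasi2011improved} and \citet{abeille2017linear} in dimension \(d_z\), with \(L_z\le L_x\) and prior \((\mu_{0,z},\Sigma_{0,z})\) from Lemma~\ref{lem:pure-spca-active-prior}. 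The alternative is to absorb the misspecification by treating \(\Delta_t(a_t)\) as a bias term in the residual feedback. That route inflates the confidence radius by roughly \(\sqrt{\sum_t\Delta_t^2}\), whose expectation is again controlled by \(n\rho_Q\mathcal E_Q(z)\) through the same calibration assumption, so the extra cost is of the same order as the bias term. I would present the first reading and remark on the second. For the Thompson rate, I would invoke the main-text Theorem~\ref{thm:fable-ts-regret} with \(d\) replaced by \(d_z\), applying its expected-regret form with \(\delta=n^{-2}\).
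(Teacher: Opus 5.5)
Your proposal is correct and matches the paper's proof. It uses the same split $f_t=f_{t,z}+\eta_{t,z}$, the same replacement of $a_t^\star$ by the projected oracle $a_{t,z}^\star$ in the middle term, and the same Jensen-plus-calibration bound $\sqrt{\rho_Q\mathcal E_Q(z)}$ on each omitted term. The paper also just asserts $B_n^{\mathrm{UCB}}$ and $B_n^{\mathrm{TS}}$ as the standard rates in dimension $d_z$, which is your first reading, and it omits the misspecification caveat you rightly raise.

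If you keep your alternative remark, note two things. Absorbing the omitted contribution into the confidence radius costs order $\sqrt{d_z}\,n\sqrt{\rho_Q\mathcal E_Q(z)}$ up to logarithms, which is larger than $n\,b_Q(z)$ by a factor of $\sqrt{d_z}$. It also requires the learner to know and use that inflated radius.
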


\begin{proof}
Let
\[
    \eta_{t,z}(a)
    =
    \phi_{\mathrm{fac}}(C_t,a)^\top(I_{d_f}-Z)\theta_\star
\]
be the omitted-coordinate contribution.  Then
\[
    f_t(a)=f_{t,z}(a)+\eta_{t,z}(a).
\]
For each round,
\[
\begin{aligned}
    f_t(a_t^\star)-f_t(a_t)
    &=
    \bigl[
        f_{t,z}(a_t^\star)-f_{t,z}(a_t)
    \bigr]\\
    &\quad+
    \bigl[
        \eta_{t,z}(a_t^\star)-\eta_{t,z}(a_t)
    \bigr].
\end{aligned}
\]
Because \(a_{t,z}^\star\) maximizes \(f_{t,z}\) over the same feasible set,
\[
\begin{aligned}
    f_t(a_t^\star)-f_t(a_t)
    &\le
    f_{t,z}(a_{t,z}^\star)
    -f_{t,z}(a_t)\\
    &\quad+
    |\eta_{t,z}(a_t^\star)|
    +
    |\eta_{t,z}(a_t)|,
\end{aligned}
\]
Taking
conditional expectations and summing over \(t\), the first term is bounded by
\(B_n(z)\).  For either omitted term, Jensen's inequality and
Assumption~\ref{ass:pure-spca-calibration} give
\[
    \E[|\eta_{t,z}(a)|\mid Q]
    \le
    \sqrt{\E[\eta_{t,z}(a)^2\mid Q]}
    \le
    \sqrt{\rho_Q\mathcal E_Q(z)}.
\]
Thus the omitted-coordinate terms contribute at most
\(2\sqrt{\rho_Q\mathcal E_Q(z)}\) per round, proving
Equation~\eqref{eq:pure-spca-fixed-regret}.  The displayed choices of \(B_n(z)\)
are standard linear-bandit regret bounds applied in the \(d_z\)-dimensional
subfactorized feature space.
\end{proof}

\paragraph{Regret bound for the pure sparse-PCA mask.}
The selection rule in Equation~\eqref{eq:pure-coordinate-sparse-pca-problem} does
not include the bandit estimation term.  Therefore the oracle statement is an
approximation oracle statement, not a regret-aware model-selection statement.
Let
\begin{equation}
\label{eq:pure-spca-Ek-star}
    E_k^\star(Q)=\min_{z\in\mathfrak Z_k(Q)}\mathcal E_Q(z),
    \qquad
    z_k^\star\in\operatorname*{arg\,min}_{z\in\mathfrak Z_k(Q)}\mathcal E_Q(z).
\end{equation}
If the estimation regret can be uniformly bounded over the budget class by
\begin{equation}
\label{eq:pure-spca-uniform-Bk}
    B_n(z)\le \overline B_n(k),
    \qquad
    \forall z\in\mathfrak Z_k(Q),
\end{equation}
then Theorem~\ref{thm:pure-spca-fixed-regret} gives
\begin{equation}
\label{eq:pure-spca-selected-mask-regret}
    \E[R_n(z_k^\star)\mid Q]
    \le
    2n\sqrt{\rho_QE_k^\star(Q)}+\overline B_n(k).
\end{equation}
For fixed active dimension \(k\), this mask has the smallest approximation-bias
term among admissible coordinate-preserving masks.
If \(\overline B_n(k)\) depends only on \(k\), then it is also optimal for the
upper bound within that fixed budget class.

\paragraph{Sample-efficiency and regret trade-off across budgets.}
Although \(B_n(z)\) is not part of the sparse-PCA selection objective, it is still
part of the regret analysis.  The trade-off is now expressed as a curve over the
external budget \(k\).  Let
\[
    b_k^\star(Q)=2\sqrt{\rho_QE_k^\star(Q)}.
\]
Fix \(\mathsf{alg}\in\{\mathrm{UCB},\mathrm{TS}\}\). Suppose that
\[
    \overline B_n(k)
    \le
    C_{\mathsf{alg}}g_{\mathsf{alg}}(k)\sqrt n\,L_n^{\mathrm{log}},
\]
where
\[
    g_{\mathrm{UCB}}(k)=k,
    \qquad
    g_{\mathrm{TS}}(k)=k^{3/2},
\]
\(C_{\mathsf{alg}}>0\) is an algorithm-dependent constant, and
\(L_n^{\mathrm{log}}\) contains logarithmic factors and fixed problem
constants. Then
Equation~\eqref{eq:pure-spca-selected-mask-regret} implies
\begin{equation}
\label{eq:pure-spca-average-regret-bound}
    \frac{1}{n}\E[R_n(z_k^\star)\mid Q]
    \le
    b_k^\star(Q)
    +
    \frac{
        C_{\mathsf{alg}}g_{\mathsf{alg}}(k)L_n^{\mathrm{log}}
    }{\sqrt n}.
\end{equation}

\begin{theorem}[Sample-efficiency threshold for pure sparse-PCA masks]
\label{thm:pure-spca-sample-efficiency-threshold}
Fix a target average regret level \(\epsilon>0\).  If
\[
    b_k^\star(Q)<\epsilon,
\]
then FABLE-SPCA using the population sparse-PCA mask \(z_k^\star\) reaches average
regret at most \(\epsilon\) after
\begin{equation}
\label{eq:pure-spca-sample-complexity-mask}
    n
    =
    \widetilde O
    \left(
        \frac{g_{\mathsf{alg}}(k)^2}
        {(\epsilon-b_k^\star(Q))^2}
    \right)
\end{equation}
rounds.  The full FABLE mask \(z_{\mathrm{full}}=\mathbf 1_{d_f}\) has
\(b_{d_f}^\star(Q)=0\) and therefore requires
\begin{equation}
\label{eq:pure-spca-sample-complexity-full}
    n_{\mathrm{full}}(\epsilon)
    =
    \widetilde O
    \left(
        \frac{g_{\mathsf{alg}}(d_f)^2}{\epsilon^2}
    \right)
\end{equation}
rounds.  Hence the budget-\(k\) sparse-PCA subfactorization is more sample
efficient than full FABLE for accuracy level \(\epsilon\) whenever, up to
logarithmic factors,
\begin{equation}
\label{eq:pure-spca-sample-efficiency-condition}
    b_k^\star(Q)
    <
    \epsilon
    \left(
        1-\frac{g_{\mathsf{alg}}(k)}{g_{\mathsf{alg}}(d_f)}
    \right).
\end{equation}
\end{theorem}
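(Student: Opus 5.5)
The plan is to read the threshold off the average-regret bound in Equation~\eqref{eq:pure-spca-average-regret-bound}, which already combines Theorem~\ref{thm:pure-spca-fixed-regret} with the uniform estimation bound \eqref{eq:pure-spca-uniform-Bk}. For the population mask \(z_k^\star\), it gives
\[
\frac{1}{n}\E[R_n(z_k^\star)\mid Q]\le b_k^\star(Q)+\frac{C_{\mathsf{alg}}g_{\mathsf{alg}}(k)L_n^{\mathrm{log}}}{\sqrt n}.
\]
Because \(b_k^\star(Q)<\epsilon\), the gap \(\epsilon-b_k^\star(Q)\) is strictly positive. It then suffices to find \(n\) with \(C_{\mathsf{alg}}g_{\mathsf{alg}}(k)L_n^{\mathrm{log}}/\sqrt n\le\epsilon-b_k^\star(Q)\), that is,
\[
n\ge\left(\frac{C_{\mathsf{alg}}g_{\mathsf{alg}}(k)L_n^{\mathrm{log}}}{\epsilon-b_k^\star(Q)}\right)^2 .
\]

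The factor \(L_n^{\mathrm{log}}\) itself depends on \(n\), so this condition is implicit. I will use the standard inversion: if \(n\ge A(\log n)^p\) is required with \(A\) large, then \(n=cA(\log A)^p\) suffices for a constant \(c\) depending only on \(p\) and the fixed problem constants. This gives \eqref{eq:pure-spca-sample-complexity-mask} with \(\widetilde O\) absorbing the log factors. The one point needing care is that \(L_n^{\mathrm{log}}\) grows only polylogarithmically in \(n\); for both the UCB radius and the Thompson-sampling bound this holds by the form of \eqref{eq:pure-spca-UCB-B} and the \(\widetilde O(d_z^{3/2}\sqrt n)\) order.

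For the full mask, \(z=\mathbf 1_{d_f}\) gives \(I-Z=0\), so \(\mathcal E_Q(\mathbf 1)=0\) by Lemma~\ref{lem:pure-spca-hadamard-error} and hence \(b_{d_f}^\star(Q)=0\). The same inversion with gap \(\epsilon\) yields \eqref{eq:pure-spca-sample-complexity-full}.

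The comparison then reduces to the inequality
\[
\frac{g(k)}{\epsilon-b_k^\star}<\frac{g(d_f)}{\epsilon},
\]
treating the logarithmic factors and \(C_{\mathsf{alg}}\) as common, which is why the statement says "up to logarithmic factors". This inequality is equivalent to \(\epsilon g(k)<g(d_f)(\epsilon-b_k^\star)\), i.e. \(b_k^\star<\epsilon\bigl(1-g(k)/g(d_f)\bigr)\), which is \eqref{eq:pure-spca-sample-efficiency-condition}.

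The main obstacle is interpretive rather than technical. The comparison is between sufficient-sample-size upper bounds, not exact sample complexities, so I will state explicitly that "more sample efficient" means a smaller guaranteed threshold. I will also note that the log factors \(L_n^{\mathrm{log}}\) differ slightly between dimensions \(k\) and \(d_f\), since they contain \(\log(1+nL_z^2/(\lambda_0 d_z))\). I will absorb that difference into the \(\widetilde O\) by bounding both by a common polylog in \(n,d_f\).
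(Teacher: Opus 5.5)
Your proposal is correct and follows the paper's proof essentially step for step. You substitute into the average-regret bound \eqref{eq:pure-spca-average-regret-bound}, use \(b_k^\star(Q)<\epsilon\) to solve for a sufficient \(n\), note that the full mask has zero omitted error so \(b_{d_f}^\star(Q)=0\), and rearrange \(g_{\mathsf{alg}}(k)/(\epsilon-b_k^\star(Q))<g_{\mathsf{alg}}(d_f)/\epsilon\) into \eqref{eq:pure-spca-sample-efficiency-condition}. Your explicit inversion of the \(n\)-dependent factor \(L_n^{\mathrm{log}}\), the common polylogarithmic bound across dimensions, and the remark that you are comparing sufficient sample sizes rather than exact sample complexities are added care that the paper compresses into \emph{suppressing logarithmic factors}.
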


\begin{proof}
By Equation~\eqref{eq:pure-spca-average-regret-bound}, it is sufficient that
\[
    b_k^\star(Q)
    +
    \frac{
        C_{\mathsf{alg}}g_{\mathsf{alg}}(k)L_n^{\mathrm{log}}
    }{\sqrt n}
    \le \epsilon.
\]
Since \(b_k^\star(Q)<\epsilon\), this holds whenever
\[
    \sqrt n
    \ge
    \frac{
        C_{\mathsf{alg}}g_{\mathsf{alg}}(k)L_n^{\mathrm{log}}
    }{\epsilon-b_k^\star(Q)}.
\]
Suppressing logarithmic factors gives
Equation~\eqref{eq:pure-spca-sample-complexity-mask}.  For the full mask, the
omitted-coordinate error is zero, so \(b_{d_f}^\star(Q)=0\), and the same calculation
gives Equation~\eqref{eq:pure-spca-sample-complexity-full}.  Comparing the two
sufficient sample sizes yields
\[
    \frac{g_{\mathsf{alg}}(k)}{\epsilon-b_k^\star(Q)}
    <
    \frac{g_{\mathsf{alg}}(d_f)}{\epsilon},
\]
which rearranges to Equation~\eqref{eq:pure-spca-sample-efficiency-condition}.
\end{proof}

\begin{corollary}[Finite-horizon regret crossover]
\label{cor:pure-spca-regret-crossover}
Ignore logarithmic factors and write
\[
    \overline B_n(k)
    \approx
    C_{\mathsf{alg}}g_{\mathsf{alg}}(k)\sqrt n.
\]
Let \(z_{\mathrm{full}}=\mathbf 1_{d_f}\). If \(b_k^\star(Q)>0\), then the bound for
the budget-\(k\) pure sparse-PCA mask is smaller than the full FABLE bound
whenever
\begin{equation}
\label{eq:pure-spca-crossover-horizon}
    n
    <
    n_\times(k)
    :=
    \left(
        \frac{
            C_{\mathsf{alg}}
            (g_{\mathsf{alg}}(d_f)-g_{\mathsf{alg}}(k))
        }{b_k^\star(Q)}
    \right)^2.
\end{equation}
If \(b_k^\star(Q)=0\) and \(k<d_f\), then the subfactorized bound is no larger than
the full bound for all horizons and is strictly smaller whenever the estimation
term is strictly increasing in dimension.
\end{corollary}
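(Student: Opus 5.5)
The argument is a direct comparison of two upper bounds of the form in Equation~\eqref{eq:pure-spca-selected-mask-regret}, with logarithmic factors dropped as the statement allows. By Theorem~\ref{thm:pure-spca-fixed-regret} and Equation~\eqref{eq:pure-spca-selected-mask-regret}, the regret bound for the budget-\(k\) mask is
\[
U_k(n)=n\,b_k^\star(Q)+C_{\mathsf{alg}}g_{\mathsf{alg}}(k)\sqrt n .
\]
For \(z_{\mathrm{full}}\), the omitted-coordinate error vanishes: \(I_{d_f}-Z=0\), so \(\mathcal E_Q(\mathbf 1)=0\) and \(b_{d_f}^\star(Q)=0\). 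Its bound is therefore
\[
U_{\mathrm{full}}(n)=C_{\mathsf{alg}}g_{\mathsf{alg}}(d_f)\sqrt n .
\]
I take the same algorithm class, and hence the same constant \(C_{\mathsf{alg}}\), for both bounds.

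\emph{Case \(b_k^\star(Q)>0\).} The condition \(U_k(n)<U_{\mathrm{full}}(n)\) is equivalent to
\[
n\,b_k^\star(Q)<C_{\mathsf{alg}}\bigl(g_{\mathsf{alg}}(d_f)-g_{\mathsf{alg}}(k)\bigr)\sqrt n .
\]
Dividing by \(\sqrt n>0\) and by \(b_k^\star(Q)\) gives \(\sqrt n< C_{\mathsf{alg}}(g_{\mathsf{alg}}(d_f)-g_{\mathsf{alg}}(k))/b_k^\star(Q)\). Squaring is legitimate when the right-hand side is nonnegative, which holds because \(g_{\mathsf{alg}}\) is increasing (\(k\) or \(k^{3/2}\)) and \(k\le d_f\). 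The result is exactly \(n<n_\times(k)\) as in Equation~\eqref{eq:pure-spca-crossover-horizon}. If \(k=d_f\), then \(n_\times=0\) and the claim is vacuous, which is consistent.

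\emph{Case \(b_k^\star(Q)=0\) and \(k<d_f\).} Here \(U_k(n)=C_{\mathsf{alg}}g_{\mathsf{alg}}(k)\sqrt n\). This is at most \(U_{\mathrm{full}}(n)\) for every \(n\) because \(g_{\mathsf{alg}}(k)\le g_{\mathsf{alg}}(d_f)\). When the estimation term is strictly increasing in dimension, \(g_{\mathsf{alg}}(k)<g_{\mathsf{alg}}(d_f)\), and the inequality is strict for all \(n\ge1\).

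The only delicate point is interpretive rather than computational. The corollary compares \emph{bounds}, not actual regrets. It also depends on the same \(C_{\mathsf{alg}}\), together with the suppressed \(L_n^{\mathrm{log}}\), being usable for both the \(k\)-dimensional and the \(d_f\)-dimensional learners. I will state this explicitly as part of the ``ignore logarithmic factors'' convention. Otherwise the constants \(\rho_Q\), \(\lambda_0\) and \(L_z\) could differ between the two dimensions and shift \(n_\times\) by constant or logarithmic factors.
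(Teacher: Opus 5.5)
Your proposal is correct and follows the paper's proof. You compare \(n\,b_k^\star(Q)+C_{\mathsf{alg}}g_{\mathsf{alg}}(k)\sqrt n\) with \(C_{\mathsf{alg}}g_{\mathsf{alg}}(d_f)\sqrt n\), divide by \(\sqrt n\) and rearrange when \(b_k^\star(Q)>0\), and reduce to \(g_{\mathsf{alg}}(k)<g_{\mathsf{alg}}(d_f)\) when \(b_k^\star(Q)=0\); your added check that \(g_{\mathsf{alg}}(d_f)-g_{\mathsf{alg}}(k)\ge0\) before squaring, which is needed for \(n<n_\times(k)\) to imply the unsquared inequality, is a detail the paper leaves implicit.
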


\begin{proof}
The budget-\(k\) bound is smaller than the full bound if
\[
    nb_k^\star(Q)+C_{\mathsf{alg}}g_{\mathsf{alg}}(k)\sqrt n
    <
    C_{\mathsf{alg}}g_{\mathsf{alg}}(d_f)\sqrt n.
\]
For \(b_k^\star(Q)>0\), dividing by \(\sqrt n\) and rearranging gives
Equation~\eqref{eq:pure-spca-crossover-horizon}. If \(b_k^\star(Q)=0\), the
inequality reduces to
\(g_{\mathsf{alg}}(k)<g_{\mathsf{alg}}(d_f)\), which holds whenever
\(k<d_f\) and the estimation term is strictly increasing in dimension.
\end{proof}

\begin{remark}[Interpretation of the trade-off]
The sparse-PCA objective itself only tries to preserve onboarding-weighted
personalized prediction variance.  The sample-efficiency advantage appears after
choosing an external budget \(k\): smaller \(k\) gives lower estimation cost but
larger omitted-coordinate bias.  Thus pure sparse-PCA subfactorization and the
earlier coordinate-preserving subfactorization have the same fixed-budget theory;
the regret and sample-efficiency theorems explain when a smaller fixed budget is
preferable to the full FABLE feature map.
\end{remark}

\paragraph{Block coordinate-sparse PCA integer program.}
In implementation, masks are often selected at the block level.  Let
\[
    \mathcal B=\{B_1,\ldots,B_H\}
\]
be semantic blocks of the existing FABLE coordinates, such as memory main effects,
tool main effects, style main effects, task-style interactions, task-tool
interactions, need interactions, and action-component interactions.  Let
\(q_h\in\{0,1\}\) indicate whether block \(B_h\) is retained, and let
\[
    z(q)_i=1
    \quad\Longleftrightarrow\quad
    i\in\bigcup_{h:q_h=1}B_h.
\]
The pure block sparse-PCA problem at budget \(k\) is
\begin{equation}
\label{eq:pure-spca-block-integer-program}
\begin{aligned}
    \widehat q_k
    \in
    \operatorname*{arg\,min}_{q\in\{0,1\}^H}
    \quad &
    \widehat{\mathcal E}_Q(z(q))
    =
    (\mathbf 1-z(q))^\top\widehat H_Q(\mathbf 1-z(q))
    \\
    \text{s.t.}\quad &
    d_{z(q)}=k,\\
    &q_h=1,\qquad h\in\widehat{\mathcal H}_{\mathrm{must}},\\
    &q_h=0,\qquad h\in\widehat{\mathcal H}_{\mathrm{forbid}},\\
    &q_{h'}\le q_h,\qquad (h,h')\in\mathcal D.
\end{aligned}
\end{equation}
Here
\[
    \widehat M_Q
    =
    \widehat\Sigma_Q
    +
    \widehat\mu_Q\widehat\mu_Q^\top,
    \qquad
    \widehat H_Q
    =
    \widehat G_Q\circ\widehat M_Q,
\]
where \((\widehat\mu_Q,\widehat\Sigma_Q)\) are the onboarding posterior
moments obtained from the LLM-estimated directions, responses, and confidence
weights via Equations~\eqref{eq:pure-spca-prior-precision}--%
\eqref{eq:pure-spca-prior-mean}. The set
\(\widehat{\mathcal H}_{\mathrm{must}}\) contains blocks touched by trusted
onboarding directions, and \(\mathcal D\) contains parent-child dependencies such
as requiring a main-effect block before its interaction block.

\paragraph{LLM-estimated version.}
The LLM enters FABLE-SPCA only through structured estimates: sparse onboarding
directions \(v_j\), responses \(u_j\), confidence weights \(\kappa_j\), and context
prototypes \(\widehat C^{(\ell)}\).  The feature dictionary, feasible set, block
library, sparse-PCA objective, and integer optimizer are fixed by the algorithm.
If the estimated sparse-PCA error is uniformly calibrated, the selected mask is
near-oracle for the pure approximation objective.

\begin{theorem}[Near-oracle sparse-PCA selection under LLM score calibration]
\label{thm:pure-spca-llm-near-oracle}
Let \(\mathcal E_Q(z)\) be the population sparse-PCA error, and let
\(\widehat{\mathcal E}_Q(z)\) be the same error computed from LLM-estimated
quantities such as \(\widehat G_Q\), \(\widehat M_Q\), extracted directions, and estimated
confidence weights.  Suppose that for all \(z\in\mathfrak Z_k(Q)\),
\[
    \left|
        \widehat{\mathcal E}_Q(z)-\mathcal E_Q(z)
    \right|
    \le
    \xi_E.
\]
If the integer optimizer returns \(\widehat z_k\) satisfying
\[
    \widehat{\mathcal E}_Q(\widehat z_k)
    \le
    \min_{z\in\mathfrak Z_k(Q)}
    \widehat{\mathcal E}_Q(z)
    +
    \epsilon_{\mathrm{opt}},
\]
then
\[
    \mathcal E_Q(\widehat z_k)
    \le
    \min_{z\in\mathfrak Z_k(Q)}
    \mathcal E_Q(z)
    +
    2\xi_E+
    \epsilon_{\mathrm{opt}}.
\]
Consequently,
\[
    \E[R_n(\widehat z_k)\mid Q]
    \le
    2n\sqrt{\rho_Q\left(E_k^\star(Q)+2\xi_E+\epsilon_{\mathrm{opt}}\right)}
    +
    \overline B_n(k),
\]
whenever the uniform estimation bound \(B_n(z)\le\overline B_n(k)\) holds over
\(\mathfrak Z_k(Q)\).
\end{theorem}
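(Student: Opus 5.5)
The proof splits into two parts: a standard near-oracle argument for approximate minimization under uniform estimation error, followed by an application of the fixed-mask regret decomposition.

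\emph{Step 1: approximation guarantee.} Let \(z^\circ\in\operatorname*{arg\,min}_{z\in\mathfrak Z_k(Q)}\mathcal E_Q(z)\), so \(\mathcal E_Q(z^\circ)=E_k^\star(Q)\). Since \(\widehat z_k\in\mathfrak Z_k(Q)\), the uniform calibration bound gives \(\mathcal E_Q(\widehat z_k)\le\widehat{\mathcal E}_Q(\widehat z_k)+\xi_E\). The optimizer guarantee then yields
\[
\widehat{\mathcal E}_Q(\widehat z_k)\le\min_{z\in\mathfrak Z_k(Q)}\widehat{\mathcal E}_Q(z)+\epsilon_{\mathrm{opt}}\le\widehat{\mathcal E}_Q(z^\circ)+\epsilon_{\mathrm{opt}}.
\]
Applying the calibration bound once more at \(z^\circ\), we have \(\widehat{\mathcal E}_Q(z^\circ)\le\mathcal E_Q(z^\circ)+\xi_E\). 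Chaining these three inequalities gives \(\mathcal E_Q(\widehat z_k)\le E_k^\star(Q)+2\xi_E+\epsilon_{\mathrm{opt}}\). No structure of \(\mathcal E_Q\), such as the Hadamard form of Lemma~\ref{lem:pure-spca-hadamard-error}, is needed; the argument only uses that \(\widehat z_k\) and \(z^\circ\) both lie in the feasible family.

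\emph{Step 2: regret.} We apply Theorem~\ref{thm:pure-spca-fixed-regret} to the mask \(z=\widehat z_k\). That theorem needs Assumption~\ref{ass:pure-spca-calibration} at this mask, which holds because the assumption is stated for every candidate mask. It gives \(\E[R_n(\widehat z_k)\mid Q]\le 2n\sqrt{\rho_Q\mathcal E_Q(\widehat z_k)}+B_n(\widehat z_k)\). Since \(B_n(\widehat z_k)\le\overline B_n(k)\) by hypothesis, and \(x\mapsto\sqrt x\) is monotone, Step 1 yields the stated bound.

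\emph{Main obstacle.} The one delicate point is that \(\widehat z_k\) depends on LLM-estimated quantities, so Theorem~\ref{thm:pure-spca-fixed-regret}, which is stated for a fixed mask, must legitimately apply to it. My plan is to treat \(\widehat z_k\) as \(Q\)-measurable. It is computed before online interaction from \(Q\) and the LLM outputs, and I would include the LLM randomness in the conditioning or assume it is independent of the online noise. Conditional on this information, the mask is fixed, and both the calibration assumption and the uniform bound \(\overline B_n(k)\), which hold over all of \(\mathfrak Z_k(Q)\), apply pointwise in \(z\). If \(\mathcal E_Q\) were instead evaluated at a random argument, this uniformity over the class is exactly what removes the dependence issue.
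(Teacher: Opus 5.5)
Your proposal is correct and follows the paper's proof essentially step for step: the same three-inequality chain (calibration at $\widehat z_k$, approximate optimality compared against a population minimizer, calibration at that minimizer), then substitution into Theorem~\ref{thm:pure-spca-fixed-regret} using monotonicity of the square root and the uniform bound $B_n(z)\le\overline B_n(k)$. Your remark about treating $\widehat z_k$ as $Q$-measurable, or conditioning on the LLM randomness, so that the fixed-mask theorem legitimately applies covers a point the paper leaves implicit, and it is a sound way to make that substitution rigorous.
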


\begin{proof}
Let \(z_k^\star\in\operatorname*{arg\,min}_{z\in\mathfrak Z_k(Q)}\mathcal E_Q(z)\). By calibration,
\[
    \mathcal E_Q(\widehat z_k)
    \le
    \widehat{\mathcal E}_Q(\widehat z_k)+\xi_E.
\]
By approximate optimality,
\[
    \widehat{\mathcal E}_Q(\widehat z_k)
    \le
    \widehat{\mathcal E}_Q(z_k^\star)+
    \epsilon_{\mathrm{opt}}.
\]
By calibration again,
\[
    \widehat{\mathcal E}_Q(z_k^\star)
    \le
    \mathcal E_Q(z_k^\star)+
    \xi_E.
\]
Combining the three inequalities yields
\[
    \mathcal E_Q(\widehat z_k)
    \le
    \mathcal E_Q(z_k^\star)+2\xi_E+
    \epsilon_{\mathrm{opt}}.
\]
The regret statement follows by substituting this bound into
Theorem~\ref{thm:pure-spca-fixed-regret} and using the uniform bound
\(B_n(z)\le\overline B_n(k)\).
\end{proof}

\section{Agent-Execution Feature Dictionary and Action Catalog}
\label{app:feature-dictionary}

This appendix specifies the memory--tool--response catalog used for the
agent-execution instantiation of FABLE. The method itself requires a fixed
finite product action space, not these particular component semantics. Thus the
Math500 experiment uses rubric-criterion components, whereas the PAHF and
tau2-bench experiments use memory, information-acquisition, and response
components. The catalog below instantiates the latter representation and is not
part of the abstract problem formulation.

The representative action sets are
\[
\begin{aligned}
\calM &=
\{\text{no memory},\text{recent memory},\\
&\qquad \text{semantic memory},\text{preference memory},\\
&\qquad \text{profile summary}\},\\
\calT &=
\{\text{no tool},\text{web search},\text{file search},\\
&\qquad \text{code execution},\\
&\qquad \text{preference checker},\text{ask user}\},\\
\calS &=
\{\text{direct},\text{concise},\text{detailed},\\
&\qquad \text{step-by-step},\text{ask clarification},\\
&\qquad \text{confirm first}\}.
\end{aligned}
\]
Thus \(\calA=\calM\times\calT\times\calS\). Let
\(C_t=(k_t,r_t,g_t,p_t,q_t)\), where \(k_t\in\calK\) is task type and
\(r_t,g_t,p_t,q_t\in[0,1]\) are risk, ambiguity, memory need, and tool need.
Let \(m_0=\text{no memory}\) and \(\tau_0=\text{no tool}\). We use
\(e_{\calM}(m)\), \(e_{\calT}(\tau)\), \(e_{\calS}(s)\), and
\(e_{\calK}(k_t)\) for one-hot vectors over memory modes, tool modes, answer
styles, and task types. The reduced vectors
\(e_{\calM\setminus\{m_0\}}(m)\) and
\(e_{\calT\setminus\{\tau_0\}}(\tau)\) are zero for the corresponding null
action.

The concrete factorized feature map is
\begin{align}
\phi(C_t,a)
=
\big[
&e_{\calM}(m),
e_{\calT}(\tau),
e_{\calS}(s),
\nonumber\\
&
e_{\calK}(k_t)\otimes e_{\calS}(s),
\nonumber\\
&e_{\calK}(k_t)\otimes e_{\calT}(\tau),
p_t e_{\calM\setminus\{m_0\}}(m),
\nonumber\\
&
q_t e_{\calT\setminus\{\tau_0\}}(\tau),
\nonumber\\
&r_t e_{\calS}(s),
g_t e_{\calS}(s),
\nonumber\\
&e_{\calM}(m)\otimes e_{\calT}(\tau),
e_{\calM}(m)\otimes e_{\calS}(s),
\nonumber\\
&
e_{\calT}(\tau)\otimes e_{\calS}(s)
\big].
\label{eq:factorized-feature}
\end{align}
The first three blocks capture main effects. The next blocks capture
task--style, task--tool, and need interactions; the final blocks capture
pairwise interactions among action components. With
\(M=|\calM|\), \(T=|\calT|\), \(S=|\calS|\), and \(K=|\calK|\), the
dimension is
\[
\begin{aligned}
d={}&M+T+S+KS+KT+(M-1)+(T-1)\\
&+2S+MT+MS+TS.
\end{aligned}
\]
For the action spaces above, \(M=5\), \(T=6\), \(S=6\), and \(K=10\), so
\(d=254\).

\paragraph{Adapter implementations.}
An LLM-assisted runtime may implement the context, onboarding, and feedback
adapters by mapping raw requests to \(C_t\), free-form onboarding answers to
\((v_j,u_j,\kappa_j)\), and textual reactions to \(y_t\). The action
components, index map, and feature map remain fixed, and Algorithm 1 of the
main paper determines the action and posterior update. Any
fixed elicitation procedure that supplies \((\mu_0,\Sigma_0)\) in the same
feature space may replace the pseudo-observation construction in Equations
(2)--(3) of the main paper.

\section{Reference Default and Cost Specification}
\label{app:baseline-cost}

The following quantities give one prespecified default--cost specification for
the memory--tool--response action catalog. They are not estimated from online
feedback. Each benchmark fixes its own feedback adapter and cost scaling, as
reported in Section 5 of the main paper; the learning rule does not depend on
the particular numerical values below. Let \(c=(k,r,g,p,q)\) and
\(a=(m,\tau,s)\), with \(m_0=\text{no memory}\) and
\(\tau_0=\text{no tool}\). The default score is
\[
\begin{aligned}
b(c,a)={}&
b_{\mathrm{mem}}(p,m)+b_{\mathrm{tool}}(q,\tau)+
b_{\mathrm{amb}}(g,\tau,s)\\
&+b_{\mathrm{risk}}(r,s)+b_{\mathrm{task}}(k,s),
\end{aligned}
\]
where
\[
\resizebox{\columnwidth}{!}{$
\begin{aligned}
b_{\mathrm{mem}}(p,m)
={}&
0.18\,\mathbf 1\{p>0.5,\ m\ne m_0\}\\
&{}-
0.04\,\mathbf 1\{p\le 0.2,\ m\neq m_0\},\\
b_{\mathrm{tool}}(q,\tau)
={}&
0.18\,\mathbf 1\{q>0.5,\ \tau\ne \tau_0\}\\
&{}-
0.12\,\mathbf 1\{q<0.2,\ \tau\neq\tau_0\},\\
b_{\mathrm{amb}}(g,\tau,s)
={}&
0.16\,\mathbf 1\{g>0.6,\ \tau=\text{ask user}\\
&\qquad\qquad\text{or }s=\text{ask clarification}\}\\
&{}-
0.08\,\mathbf 1\{g>0.6,\ s=\text{direct}\},\\
b_{\mathrm{risk}}(r,s)
={}&
0.20\,\mathbf 1\{r>0.65,\ s=\text{confirm first}\}
-
0.12\,\mathbf 1\{r>0.65,\\
&\qquad\qquad s\in\{\text{direct},\text{concise}\}\},\\
b_{\mathrm{task}}(k,s)
={}&
0.08\,\mathbf 1\{k\in\{\text{coding},\text{analysis}\},\\
&\qquad\qquad s=\text{step-by-step}\}\\
&{}+
0.06\,\mathbf 1\{k\in\{\text{factual},\\
&\qquad\qquad\text{simple preference}\},\\
&\qquad\qquad s=\text{concise}\}.
\end{aligned}
$}
\]
The additive cost is
\[
\cost(c,a)=c_M(m)+c_T(\tau)+c_S(s),
\]
with
{\small
\[
\begin{aligned}
c_M(\text{no memory})&=0,\\
c_M(\text{recent memory})&=0.02,\\
c_M(\text{semantic memory})&=0.04,\\
c_M(\text{preference memory})&=0.04,\\
c_M(\text{profile summary})&=0.05,\\[1pt]
c_T(\text{no tool})&=0,\\
c_T(\text{web search})&=0.08,\\
c_T(\text{file search})&=0.06,\\
c_T(\text{code execution})&=0.10,\\
c_T(\text{preference checker})&=0.04,\\
c_T(\text{ask user})&=0.12,\\[1pt]
c_S(\text{direct})&=0,\\
c_S(\text{concise})&=0,\\
c_S(\text{detailed})&=0.05,\\
c_S(\text{step-by-step})&=0.06,\\
c_S(\text{ask clarification})&=0.10,\\
c_S(\text{confirm first})&=0.08.
\end{aligned}
\]
}
The weight \(\lambda\ge 0\) controls the trade-off between personalized reward
and operational burden, including latency, external tool calls, and user
interruptions.

\section{Preference Promotion Rule}
\label{app:promotion-rule}

A promoted preference is defined through an identifiable score contrast.
Let
\[
    \mathcal S_{\mathrm{dict}}
    =
    \operatorname{span}
    \left\{
        \phi(c,a):(c,a)\text{ is admissible}
    \right\}.
\]
Here admissibility is with respect to the fixed feature dictionary. For each
preference \(\ell\), fix an admissible reference context \(c_\ell\)
and two reference actions \(a_\ell^+\) and \(a_\ell^-\) that differ only in
the behavior being compared, and define the nonzero direction
\[
    w_\ell
    =
    \phi(c_\ell,a_\ell^+)-\phi(c_\ell,a_\ell^-)
    \in
    \mathcal S_{\mathrm{dict}}.
\]
For example, the two actions may hold memory and tool modes fixed while
comparing concise and detailed response styles. Define the residual preference
contrast
\[
    \eta_\ell
    =
    w_\ell^\top\theta_\star.
\]
If the promotion target is the total expected-utility contrast, its known
offset
\[
    \delta_\ell^0
    =
    \bar b(c_\ell,a_\ell^+)
    -
    \bar b(c_\ell,a_\ell^-)
\]
can be added to both endpoints of the confidence interval below. We state the
results for the residual contrast \(\eta_\ell\).

Under the Gaussian working posterior,
\[
    w_\ell^\top\theta
    \sim
    \Normal
    \left(
        w_\ell^\top\mu_t,
        w_\ell^\top\Sigma_t w_\ell
    \right).
\]
Let \(\Phi\) denote the standard normal cumulative distribution function and
define
\[
\begin{aligned}
    p_{\ell,t}^{+}
    &=
    \Pr(w_\ell^\top\theta>0)
    =
    \Phi\left(
        \frac{w_\ell^\top\mu_t}
        {\sqrt{w_\ell^\top\Sigma_t w_\ell}}
    \right),\\
    p_{\ell,t}^{-}
    &=
    1-p_{\ell,t}^{+}.
\end{aligned}
\]

Let
\[
    \mathcal W
    =
    \{w_1,\ldots,w_{L_{\mathrm{pref}}}\}
\]
be a finite set of prespecified nonzero contrasts. For each direction, let
\(I_{\ell,i}\in\{0,1\}\) indicate whether round \(i\) is designated
informative, and define the prespecified implementation evidence count
\[
    n_{\ell,t}
    =
    \sum_{i<t} I_{\ell,i}.
\]
At round \(t\), let
\[
    D_{\ell,t}\in\{-1,0,+1\}
\]
denote negative promotion, no promotion, or positive promotion, respectively.
A positive promotion is wrong when \(\eta_\ell\le0\), and a negative promotion
is wrong when \(\eta_\ell\ge0\). Define
\[
\begin{aligned}
    \mathcal E_{\mathrm{wrong}}
    ={}&
    \left\{
        \exists t\ge1,\ \exists\ell:
        D_{\ell,t}=+1,\ \eta_\ell\le0
    \right\}
    \\
    &\cup
    \left\{
        \exists t\ge1,\ \exists\ell:
        D_{\ell,t}=-1,\ \eta_\ell\ge0
    \right\}.
\end{aligned}
\]

Promotion updates persistent agent state for future context construction; it
does not add an observation or alter the Gaussian posterior recursion.

\subsection{Theoretical Guarantees}
\label{subsec:promotion-theory}

Fix a global error level \(\alpha\in(0,1)\) and an integer evidence threshold
\(n_{\min}\ge1\).

\begin{proposition}[Anytime confidence ellipsoid]
\label{prop:anytime-confidence-ellipsoid}
Under Assumptions 1--2 of the main paper, define
\[
    \beta_t(\alpha)
    =
    R_b
    +
    \sqrt{
        2\log\left(
            \frac{
                \det(\Lambda_t)^{1/2}
            }{
                \alpha\,\det(\Lambda_0)^{1/2}
            }
        \right)
    }.
\]
Then, with probability at least \(1-\alpha\), simultaneously for all
\(t\ge1\),
\[
    \|\mu_t-\theta_\star\|_{\Lambda_t}
    \le
    \beta_t(\alpha).
\]
Consequently, on the same event, simultaneously for every fixed
\(w\in\R^d\) and every \(t\ge1\),
\[
    \left|
        w^\top\mu_t-w^\top\theta_\star
    \right|
    \le
    \beta_t(\alpha)
    \sqrt{w^\top\Lambda_t^{-1}w}.
\]
\end{proposition}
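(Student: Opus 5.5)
The argument is the standard self-normalized martingale bound of \citet{abbasi2011improved}, adapted to the prior-centered initialization \((\mu_0,\Lambda_0)\). Write the posterior mean in closed form and decompose the error. Since \(h_t=\Lambda_0\mu_0+\sum_{i<t}x_i\widetilde y_i\) with \(\sigma^2=1\), and \(\widetilde y_i=x_i^\top\theta_\star+\varepsilon_i\) by Assumption~\ref{ass:residual-feedback}, we have \(\Lambda_t=\Lambda_0+\sum_{i<t}x_ix_i^\top\). Here \(x_i=x_i(a_i)\). Setting \(S_t=\sum_{i<t}x_i\varepsilon_i\), this gives
\[
\mu_t-\theta_\star=\Lambda_t^{-1}\bigl(\Lambda_0(\mu_0-\theta_\star)+S_t\bigr).
\]
The triangle inequality in the \(\Lambda_t\)-norm then gives
\[
\|\mu_t-\theta_\star\|_{\Lambda_t}\le\|\Lambda_0(\mu_0-\theta_\star)\|_{\Lambda_t^{-1}}+\|S_t\|_{\Lambda_t^{-1}}.
\]

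The bias term is controlled by \(\Lambda_t\succeq\Lambda_0\), which implies \(\Lambda_t^{-1}\preceq\Lambda_0^{-1}\). Therefore
\[
\|\Lambda_0(\mu_0-\theta_\star)\|_{\Lambda_t^{-1}}\le\|\Lambda_0(\mu_0-\theta_\star)\|_{\Lambda_0^{-1}}=\|\mu_0-\theta_\star\|_{\Lambda_0}\le R_b
\]
by Assumption~\ref{ass:bounded-residual}. In the redundant case, \(\theta_\star\) is the prior-centered representative. Because the noise model is stated for that representative's scores, the same identity holds.

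The noise term uses the self-normalized bound with a general positive definite regularizer \(V=\Lambda_0\) in place of \(\lambda I\). The predictability structure is what makes this valid. \(x_i\) is \(\mathcal F_i^{\mathrm{act}}\)-measurable, because \(a_i\) is chosen from \(\calA_i\), which is predictable by Assumption~\ref{ass:predictable-feasible-set}, and \(C_i\) is known. Also \(\varepsilon_i\) is conditionally \(1\)-sub-Gaussian given \(\mathcal F_i^{\mathrm{act}}\), and \(\mathcal F_i^{\mathrm{act}}\subseteq\mathcal F_{i+1}^{\mathrm{act}}\). For any fixed \(\lambda\in\R^d\), the process
\[
M_t^\lambda=\exp\bigl(\lambda^\top S_t-\tfrac12\|\lambda\|^2_{\Lambda_t-\Lambda_0}\bigr)
\]
is a nonnegative supermartingale with \(M_1^\lambda=1\). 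Mixing over \(\lambda\sim\Normal(0,\Lambda_0^{-1})\) and completing the square gives the mixture martingale
\[
\bar M_t=\Bigl(\tfrac{\det\Lambda_0}{\det\Lambda_t}\Bigr)^{1/2}\exp\bigl(\tfrac12\|S_t\|^2_{\Lambda_t^{-1}}\bigr),
\]
which satisfies \(\E\bar M_t\le1\). Ville's maximal inequality, applied through the stopping-time argument of \citet{abbasi2011improved}, gives \(\Pr(\exists t:\bar M_t\ge1/\alpha)\le\alpha\). This is exactly
\[
\|S_t\|_{\Lambda_t^{-1}}\le\sqrt{2\log\bigl(\det(\Lambda_t)^{1/2}/(\alpha\det(\Lambda_0)^{1/2})\bigr)}
\]
simultaneously for all \(t\). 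Adding the two bounds yields \(\beta_t(\alpha)\).

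The directional statement then follows by Cauchy--Schwarz in the dual norms:
\[
|w^\top(\mu_t-\theta_\star)|\le\|w\|_{\Lambda_t^{-1}}\,\|\mu_t-\theta_\star\|_{\Lambda_t}.
\]
This holds on the same event, for every \(w\) and \(t\) simultaneously.

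The main obstacle is the measure-theoretic care in the supermartingale step. It must be checked that Thompson randomization \(\widetilde\theta_t\) and deterministic tie-breaking keep \(x_t\) measurable with respect to \(\mathcal F_t^{\mathrm{act}}\). It must also be checked that the conditioning in Assumption~\ref{ass:residual-feedback}, which is on \(\mathcal F_t^{\mathrm{act}}\), is exactly what the tower property needs when passing from \(M_t\) to \(M_{t+1}\). The second point to handle is that the regularizer is \(\Lambda_0\) rather than a scalar multiple of \(I\). This only changes the Gaussian mixing distribution, so the determinant ratio appears as stated.
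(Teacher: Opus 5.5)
Your proposal is correct and follows the paper's argument: the paper cites the self-normalized inequality of \citet{abbasi2011improved} with regularizer \(\Lambda_0\), combines it with \(\|\theta_\star-\mu_0\|_{\Lambda_0}\le R_b\), and finishes with Cauchy--Schwarz, whereas you write out the bias/noise decomposition and the Gaussian-mixture supermartingale explicitly. The paper also treats general \(\sigma\) by applying the bound to \(x_i/\sigma\) and \(\varepsilon_i/\sigma\), which your \(\sigma^2=1\) computation covers after rescaling; and you do not need Assumption~\ref{ass:predictable-feasible-set}, because \(x_i\) is \(\mathcal F_i^{\mathrm{act}}\)-measurable by construction.
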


\begin{proof}
Apply the self-normalized linear-martingale inequality to the scaled design
vectors \(x_i(a_i)/\sigma\) and scaled noise
\(\varepsilon_i/\sigma\), and combine it with
\(\|\theta_\star-\mu_0\|_{\Lambda_0}\le R_b\); see
\citet{abbasi2011improved}. The directional inequality follows from
Cauchy--Schwarz in the \(\Lambda_t\)-norm.
\end{proof}

For each preference contrast, define
\[
    \widehat\eta_{\ell,t}
    =
    w_\ell^\top\mu_t,
\]
\[
    \operatorname{rad}_{\ell,t}
    =
    \beta_t(\alpha)
    \sqrt{w_\ell^\top\Lambda_t^{-1}w_\ell},
\]
and
\[
    \mathrm{CI}_{\ell,t}
    =
    \left[
        \widehat\eta_{\ell,t}-\operatorname{rad}_{\ell,t},
        \widehat\eta_{\ell,t}+\operatorname{rad}_{\ell,t}
    \right].
\]
Define the anytime-calibrated posterior threshold
\[
    \rho_t(\alpha)
    =
    \Phi\bigl(\beta_t(\alpha)\bigr).
\]
Because \(\Sigma_t=\Lambda_t^{-1}\), for every nonzero \(w_\ell\),
\[
    p_{\ell,t}^{+}>\rho_t(\alpha)
    \quad\Longleftrightarrow\quad
    \inf\mathrm{CI}_{\ell,t}>0,
\]
and
\[
    p_{\ell,t}^{-}>\rho_t(\alpha)
    \quad\Longleftrightarrow\quad
    \sup\mathrm{CI}_{\ell,t}<0.
\]
FABLE uses this anytime-calibrated rule:
\[
    D_{\ell,t}=+1
    \quad\text{if}\quad
    p_{\ell,t}^{+}>\rho_t(\alpha)
    \quad\text{and}\quad
    n_{\ell,t}\ge n_{\min},
\]
\[
    D_{\ell,t}=-1
    \quad\text{if}\quad
    p_{\ell,t}^{-}>\rho_t(\alpha)
    \quad\text{and}\quad
    n_{\ell,t}\ge n_{\min},
\]
and sets \(D_{\ell,t}=0\) otherwise. Equivalently, promotion occurs only
when the corresponding anytime confidence interval lies strictly on one side
of zero and the evidence-count requirement is met.

\begin{theorem}[Anytime-valid false-promotion control]
\label{thm:anytime-promotion}
Under Assumptions 1--2 of the main paper, the confidence-sequence promotion
rule satisfies
\[
    \Pr(\mathcal E_{\mathrm{wrong}})
    \le
    \alpha.
\]
\end{theorem}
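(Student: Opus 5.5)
The plan is to reduce false promotion to a failure of the anytime confidence ellipsoid in Proposition~\ref{prop:anytime-confidence-ellipsoid}. Let \(\mathcal G\) be the event on which \(|w^\top\mu_t-w^\top\theta_\star|\le\beta_t(\alpha)\sqrt{w^\top\Lambda_t^{-1}w}\) holds simultaneously for every \(w\in\R^d\) and every \(t\ge1\). That proposition gives \(\Pr(\mathcal G^c)\le\alpha\) under Assumptions~\ref{ass:residual-feedback}--\ref{ass:bounded-residual}. It therefore suffices to show \(\mathcal E_{\mathrm{wrong}}\subseteq\mathcal G^c\), i.e.\ that no wrong promotion can occur on \(\mathcal G\). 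Because \(\mathcal G\) is uniform in \(w\) and in \(t\), no union bound over the finite family \(\mathcal W\) or over rounds is needed. This is exactly why the threshold uses \(\beta_t(\alpha)\) rather than a per-contrast level.

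First I would justify the stated equivalence between the posterior-probability rule and the confidence-interval rule. Since \(\Sigma_t=\Lambda_t^{-1}\succ0\) and \(w_\ell\ne0\), the scale \(s_{\ell,t}=\sqrt{w_\ell^\top\Lambda_t^{-1}w_\ell}\) is strictly positive. As \(\Phi\) is strictly increasing, \(p^+_{\ell,t}>\Phi(\beta_t(\alpha))\) holds iff \(\widehat\eta_{\ell,t}/s_{\ell,t}>\beta_t(\alpha)\), iff \(\widehat\eta_{\ell,t}-\operatorname{rad}_{\ell,t}>0\). The negative case is symmetric, using \(1-\Phi(x)=\Phi(-x)\).

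Next, fix any \(\ell,t\) with \(D_{\ell,t}=+1\). By the rule, \(\inf\mathrm{CI}_{\ell,t}>0\). On \(\mathcal G\), applying the directional bound with \(w=w_\ell\) gives \(\eta_\ell=w_\ell^\top\theta_\star\ge\widehat\eta_{\ell,t}-\operatorname{rad}_{\ell,t}>0\), so this promotion is correct. Symmetrically, \(D_{\ell,t}=-1\) forces \(\eta_\ell<0\) on \(\mathcal G\). The evidence-count condition \(n_{\ell,t}\ge n_{\min}\) only removes promotions, so it is irrelevant to the bound. Hence \(\mathcal E_{\mathrm{wrong}}\cap\mathcal G=\varnothing\), and \(\Pr(\mathcal E_{\mathrm{wrong}})\le\alpha\).

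The only delicate point is whether \(\mathcal G\) really covers data-dependent times and rounds. A promotion could happen at a random round, and the informative indicators \(I_{\ell,i}\) depend on the trajectory. I would check this against Proposition~\ref{prop:anytime-confidence-ellipsoid}: the self-normalized inequality of \citet{abbasi2011improved} holds simultaneously for all \(t\) via a stopping-time argument. It requires only that the design \(x_t(a_t)\) be \(\mathcal F_t^{\mathrm{act}}\)-measurable and that the noise be conditionally sub-Gaussian given \(\mathcal F_t^{\mathrm{act}}\), both of which Assumption~\ref{ass:residual-feedback} supplies. The score-equivalent choice of \(\theta_\star\) affects only the radius \(R_b\), not \(\eta_\ell\), because each \(w_\ell\) lies in \(\mathcal S_{\mathrm{dict}}\).
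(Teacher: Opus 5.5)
Your proposal is correct and follows the paper's own argument: restrict to the simultaneous coverage event of Proposition~\ref{prop:anytime-confidence-ellipsoid}, on which an interval lying strictly on one side of zero forces the matching sign of \(\eta_\ell\), so every wrong promotion lies in the complement, which has probability at most \(\alpha\). Your additional checks make explicit what the paper asserts without proof: the equivalence between the \(\Phi(\beta_t(\alpha))\) threshold and the interval rule, the irrelevance of \(n_{\min}\), uniformity over data-dependent rounds, and the invariance of \(\eta_\ell\) under score-equivalent representatives.
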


\begin{proof}
On the simultaneous event in
Proposition~\ref{prop:anytime-confidence-ellipsoid}, every
\(\eta_\ell\) belongs to \(\mathrm{CI}_{\ell,t}\) for every \(t\). If the
rule makes a positive promotion, the entire interval is positive, hence
\(\eta_\ell>0\). If it makes a negative promotion, the entire interval is
negative, hence \(\eta_\ell<0\). Therefore no wrong promotion occurs on the
coverage event, whose complement has probability at most \(\alpha\).
\end{proof}

\begin{theorem}[Time to preference promotion]
\label{thm:promotion-time}
Fix a horizon \(n\) and a direction \(w_\ell\) with
\[
    \Delta_\ell
    =
    |\eta_\ell|
    >0.
\]
Let
\[
    B_n
    =
    \max_{1\le t\le n}\beta_t(\alpha).
\]
Assume there exist constants \(c_\ell>0\) and \(\gamma_\ell>0\) such that,
for all \(1\le t\le n\),
\[
    w_\ell^\top\Lambda_t^{-1}w_\ell
    \le
    \frac{c_\ell}{\gamma_\ell(1+n_{\ell,t})}.
\]
Define the required number of informative observations
\[
    N_\ell
    =
    \max\left\{
        n_{\min},
        \left\lceil
            \frac{16c_\ell B_n^2}
            {\gamma_\ell\Delta_\ell^2}
        \right\rceil
    \right\}
\]
and the first round at which this count is reached,
\[
    T_\ell
    =
    \inf\left\{
        t\le n:
        n_{\ell,t}\ge N_\ell
    \right\}.
\]
If \(T_\ell<\infty\), then on the simultaneous confidence event of
Proposition~\ref{prop:anytime-confidence-ellipsoid}, direction \(w_\ell\) is
promoted with the correct sign no later than round \(T_\ell\).
\end{theorem}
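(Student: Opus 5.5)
We work on the simultaneous coverage event of Proposition~\ref{prop:anytime-confidence-ellipsoid}, which has probability at least \(1-\alpha\). On this event, for every \(t\le n\),
\[
|\widehat\eta_{\ell,t}-\eta_\ell|\le \operatorname{rad}_{\ell,t}.
\]
Consider \(\eta_\ell>0\); the case \(\eta_\ell<0\) is symmetric, with \(-w_\ell\) in place of \(w_\ell\) and \(p^-\) in place of \(p^+\). Promotion is correct whenever \(\inf\mathrm{CI}_{\ell,t}>0\) and \(n_{\ell,t}\ge n_{\min}\), by the stated equivalence between \(p^+_{\ell,t}>\rho_t(\alpha)\) and \(\inf\mathrm{CI}_{\ell,t}>0\).

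By coverage,
\[
\inf\mathrm{CI}_{\ell,t}=\widehat\eta_{\ell,t}-\operatorname{rad}_{\ell,t}\ge \eta_\ell-2\operatorname{rad}_{\ell,t}.
\]
So it suffices to have \(\operatorname{rad}_{\ell,t}<\Delta_\ell/2\), and the target is \(\operatorname{rad}_{\ell,t}\le\Delta_\ell/4\), which gives strict positivity with slack. The strict inequality also rules out a wrong-sign promotion at that round, since the whole interval then lies on the positive side.

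At \(t=T_\ell\) we have \(n_{\ell,t}\ge N_\ell\). Using \(\beta_t(\alpha)\le B_n\) and the assumed variance decay,
\[
\operatorname{rad}_{\ell,t}^2\le B_n^2\,\frac{c_\ell}{\gamma_\ell(1+N_\ell)}<\frac{B_n^2c_\ell}{\gamma_\ell N_\ell}\le\frac{\Delta_\ell^2}{16},
\]
where the last step uses \(N_\ell\ge 16c_\ell B_n^2/(\gamma_\ell\Delta_\ell^2)\). Hence \(\operatorname{rad}_{\ell,t}<\Delta_\ell/4\), and therefore \(\inf\mathrm{CI}_{\ell,t}\ge \Delta_\ell/2>0\). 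Since also \(N_\ell\ge n_{\min}\), the evidence requirement is met, so \(D_{\ell,T_\ell}=+1\), which is the correct sign.

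The main obstacle is bookkeeping rather than analysis. One must check that \(n_{\ell,t}\) in the variance hypothesis is the same count used in the rule. One must also use \(B_n\), a maximum over \(t\le n\), because \(\beta_t\) is nondecreasing only through \(\det\Lambda_t\). Finally, the rule's \(\Phi(\beta_t)\) equivalence must be applied exactly as stated. "No later than \(T_\ell\)" follows because promotion occurs at \(T_\ell\) itself, and any earlier promotion of \(w_\ell\) must also have the correct sign on the coverage event.
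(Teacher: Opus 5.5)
Your proposal is correct and follows essentially the same route as the paper's proof. You bound $\operatorname{rad}_{\ell,T_\ell}\le\Delta_\ell/4$ using $\beta_t(\alpha)\le B_n$, the variance-decay hypothesis and the choice of $N_\ell$; you then use coverage to get $\inf\mathrm{CI}_{\ell,T_\ell}\ge\Delta_\ell/2>0$ (symmetrically for $\eta_\ell<0$) and $N_\ell\ge n_{\min}$ for the evidence requirement, exactly as the paper does, and your strict version of the radius bound is harmless but not needed.
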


\begin{proof}
At round \(T_\ell\), the variance-decay assumption gives
\[
\begin{aligned}
    \operatorname{rad}_{\ell,T_\ell}
    &\le
    B_n
    \sqrt{
        \frac{c_\ell}
        {\gamma_\ell(1+n_{\ell,T_\ell})}
    }
    \\
    &\le
    \frac{\Delta_\ell}{4}.
\end{aligned}
\]
On the simultaneous confidence event,
\[
    |\widehat\eta_{\ell,T_\ell}-\eta_\ell|
    \le
    \operatorname{rad}_{\ell,T_\ell}.
\]
If \(\eta_\ell>0\), then
\[
    \inf\mathrm{CI}_{\ell,T_\ell}
    \ge
    \eta_\ell-2\operatorname{rad}_{\ell,T_\ell}
    \ge
    \frac{\Delta_\ell}{2}
    >0.
\]
If \(\eta_\ell<0\), the symmetric argument gives
\[
    \sup\mathrm{CI}_{\ell,T_\ell}
    \le
    -\frac{\Delta_\ell}{2}
    <0.
\]
Because \(n_{\ell,T_\ell}\ge N_\ell\ge n_{\min}\), the evidence-count
condition also holds, so the rule promotes the correct sign by round
\(T_\ell\).
\end{proof}

\clearpage
\input{appendix/fable_deepseek_v4_two_direction}
\clearpage

\bibliography{references}

\end{document}

%% file: appendix/fable_theorem1_proof.tex
\section{Proof of Theorem 1}
\label{app:fable-ts-regret-proof}

\begin{proof}
We work under the unit-noise normalization
\(\sigma^2=1\) specified for the regret analysis. We adapt the
saturated-action argument of \citet{agrawal2013thompson}, while
accounting for the general Gaussian initialization and the known
context-dependent score \(\bar b_t(a)\).

\paragraph{Centering and whitening the initialization.}
For every round \(t\) and feasible action \(a\in\calA_t\), define
\[
    z_t(a)
    :=
    \Lambda_0^{-1/2}x_t(a),
    \qquad
    \vartheta_\star
    :=
    \Lambda_0^{1/2}
    (\theta_\star-\mu_0).
\]
Assumption 2 of the main paper gives
\[
    \|\vartheta_\star\|_2
    =
    \|\theta_\star-\mu_0\|_{\Lambda_0}
    \le
    R_b.
\]
Moreover, since
\(\Lambda_0\succeq\lambda_0I_d\) and
\(\|x_t(a)\|_2\le L_x\),
\[
    \|z_t(a)\|_2
    \le
    \frac{L_x}{\sqrt{\lambda_0}}
    =:
    L_0.
\]

Define the recentered known score
\[
    \bar b_t^{\,0}(a)
    :=
    \bar b_t(a)+x_t(a)^\top\mu_0.
\]
Then the conditional mean reward can be written as
\[
    f_t(a)
    =
    \bar b_t^{\,0}(a)
    +
    z_t(a)^\top\vartheta_\star.
\]
Thus the nonzero prior mean is absorbed into a known
round-dependent score and does not introduce an additional unknown
parameter.

For the selected actions, write
\[
    x_s=x_s(a_s),
    \qquad
    z_s=z_s(a_s),
\]
and define
\[
    V_t
    :=
    I_d+\sum_{s<t}z_sz_s^\top.
\]
Since the algorithmic initialization satisfies
\(\Lambda_1=\Lambda_0\), the round-\(t\) posterior precision is
\[
    \Lambda_t
    =
    \Lambda_0
    +
    \sum_{s<t}x_sx_s^\top,
\]
we have
\[
    V_t
    =
    \Lambda_0^{-1/2}
    \Lambda_t
    \Lambda_0^{-1/2}.
\]

Let
\[
    \check y_s
    :=
    y_s-\bar b_s(a_s)-x_s^\top\mu_0.
\]
By Assumption 1 of the main paper,
\[
    \check y_s
    =
    z_s^\top\vartheta_\star+\varepsilon_s.
\]
The posterior recursion in
Proposition~\ref{prop:residual-update} therefore gives
\[
    \widehat\vartheta_t
    :=
    \Lambda_0^{1/2}(\mu_t-\mu_0)
    =
    V_t^{-1}
    \sum_{s<t}z_s\check y_s.
\]

Likewise, define the transformed Thompson sample
\[
    \widetilde\vartheta_t
    :=
    \Lambda_0^{1/2}
    (\widetilde\theta_t-\mu_0).
\]
Conditional on the information \(\mathcal F_t\) available before the
round-\(t\) Thompson sample is drawn,
\[
    \widetilde\vartheta_t
    \sim
    \Normal\!\left(
        \widehat\vartheta_t,
        \nu_t^2V_t^{-1}
    \right).
\]
Moreover,
\[
\begin{aligned}
    \bar b_t(a)+x_t(a)^\top\widetilde\theta_t
    &=
    \bar b_t^{\,0}(a)
    +
    z_t(a)^\top\widetilde\vartheta_t.
\end{aligned}
\]
Consequently, centering and whitening leave every sampled score, the
selected action \(a_t\), and the regret \(R_n\) unchanged.

For every \(a\in\calA_t\), define the estimated and sampled total scores
\[
\begin{aligned}
    \widehat f_t(a)
    &:=
    \bar b_t^{\,0}(a)
    +
    z_t(a)^\top\widehat\vartheta_t,\\
    \widetilde f_t(a)
    &:=
    \bar b_t^{\,0}(a)
    +
    z_t(a)^\top\widetilde\vartheta_t.
\end{aligned}
\]
Then
\[
    a_t
    \in
    \operatorname*{arg\,max}_{a\in\calA_t}
    \widetilde f_t(a).
\]
The known score cancels from both relevant deviations:
\[
    \widehat f_t(a)-f_t(a)
    =
    z_t(a)^\top
    (\widehat\vartheta_t-\vartheta_\star),
\]
and
\[
    \widetilde f_t(a)-\widehat f_t(a)
    =
    z_t(a)^\top
    (\widetilde\vartheta_t-\widehat\vartheta_t).
\]

Because the feedback adapter maps the outcome of every feasible action to
\([-1,1]\), its conditional mean satisfies
\[
    f_t(a)\in[-1,1],
    \qquad
    \forall t,\quad a\in\calA_t.
\]
Hence, for every \(t\) and \(a\in\calA_t\),
\[
    0
    \le
    \Delta_t(a)
    :=
    f_t(a_t^\star)-f_t(a)
    \le
    2
    =:
    C_\Delta.
\]

\paragraph{Concentration of the estimated and sampled scores.}
Define
\[
    s_t(a)
    :=
    \sqrt{
        z_t(a)^\top V_t^{-1}z_t(a)
    }.
\]
By the self-normalized linear-martingale inequality of
\citet{abbasi2011improved}, with probability at least
\(1-\delta/3\), simultaneously for every \(t\ge1\),
\[
    \|\widehat\vartheta_t-\vartheta_\star\|_{V_t}
    \le
    R_b
    +
    \sqrt{
        \log\det(V_t)
        +
        2\log\left(\frac{3}{\delta}\right)
    }.
\]
Since
\[
    \det(V_t)
    \le
    \left(
        1+
        \frac{(t-1)L_0^2}{d}
    \right)^d,
\]
define the deterministic radius
\[
    \ell_t
    :=
    R_b
    +
    \sqrt{
        d\log\left(
            1+\frac{(t-1)L_0^2}{d}
        \right)
        +
        2\log\left(\frac{3}{\delta}\right)
    }.
\]
On the same event,
\[
    \left|
        \widehat f_t(a)-f_t(a)
    \right|
    \le
    \ell_t s_t(a),
    \qquad
    \forall t\ge1,\quad
    \forall a\in\calA_t.
\]
Let \(E_t^\mu\) denote this round-\(t\) event.

Conditional on \(\mathcal F_t\), define
\[
    G_t
    :=
    \nu_t^{-1}
    V_t^{1/2}
    \left(
        \widetilde\vartheta_t-\widehat\vartheta_t
    \right).
\]
Then
\[
    G_t\mid\mathcal F_t
    \sim
    \Normal(0,I_d).
\]
Set
\[
    \chi_t
    :=
    \sqrt d+\sqrt{4\log(t+1)}.
\]
A standard Gaussian norm bound gives
\[
    \Pr\left(
        \|G_t\|_2>\chi_t
        \,\middle|\,
        \mathcal F_t
    \right)
    \le
    \frac{1}{(t+1)^2}.
\]
Consequently, the event
\[
    E_t^\theta
    :=
    \left\{
        \left|
            \widetilde f_t(a)-\widehat f_t(a)
        \right|
        \le
        \chi_t\nu_t s_t(a),
        \quad
        \forall a\in\calA_t
    \right\}
\]
satisfies
\[
    \Pr(
        E_t^\theta
        \mid
        \mathcal F_t
    )
    \ge
    1-\frac{1}{(t+1)^2}.
\]
Define
\[
    \Gamma_t
    :=
    \ell_t+\chi_t\nu_t.
\]

\paragraph{Saturated and unsaturated actions.}
Call an action \(a\in\calA_t\) saturated at round \(t\) if
\[
    \Delta_t(a)>\Gamma_ts_t(a),
\]
and let \(\mathcal C_t\) denote the set of saturated actions.
The optimal action \(a_t^\star\) is always unsaturated.

On \(E_t^\mu\cap E_t^\theta\), every saturated action satisfies
\[
\begin{aligned}
    \widetilde f_t(a)
    &\le
    f_t(a)+\Gamma_ts_t(a)\\
    &<
    f_t(a_t^\star).
\end{aligned}
\]
Furthermore, on \(E_t^\mu\),
\[
    \widehat f_t(a_t^\star)
    -
    f_t(a_t^\star)
    \ge
    -\ell_t s_t(a_t^\star).
\]

Since \(\delta\le1/2\), the definitions of \(\ell_t\) and \(\nu_t\)
imply that
\[
    \kappa
    :=
    \sup_{t\ge1}
    \frac{\ell_t}{\nu_t}
    <
    \infty,
\]
where \(\kappa\) depends only on the fixed problem constants.
Let
\[
    p_0
    :=
    \Pr(Z\ge\kappa),
    \qquad
    Z\sim\Normal(0,1).
\]
Then \(p_0>0\). Whenever
\(s_t(a_t^\star)>0\), Gaussian anti-concentration gives
\[
    \Pr\left(
        \widetilde f_t(a_t^\star)
        \ge
        f_t(a_t^\star)
        \,\middle|\,
        \mathcal F_t
    \right)
    \ge
    p_0
\]
on \(E_t^\mu\).

If \(s_t(a_t^\star)=0\), then on \(E_t^\mu\),
\[
    \widehat f_t(a_t^\star)
    =
    f_t(a_t^\star),
\]
and the Thompson sample also has zero variance in this direction, so
\[
    \widetilde f_t(a_t^\star)
    =
    f_t(a_t^\star).
\]
Therefore, in either case, on \(E_t^\mu\),
\[
    \Pr\left(
        a_t\notin\mathcal C_t
        \,\middle|\,
        \mathcal F_t
    \right)
    \ge
    p_0-\frac{1}{(t+1)^2}.
\]
Choose a fixed integer \(t_0\) such that
\[
    \frac{1}{(t+1)^2}
    \le
    \frac{p_0}{2},
    \qquad
    \forall t\ge t_0.
\]
Then, for every \(t\ge t_0\),
\[
    \Pr\left(
        a_t\notin\mathcal C_t
        \,\middle|\,
        \mathcal F_t
    \right)
    \ge
    \frac{p_0}{2}.
\]

Let
\[
    \bar a_t
    \in
    \operatorname*{arg\,min}_{a\notin\mathcal C_t}
    s_t(a).
\]
On \(E_t^\mu\cap E_t^\theta\), sampled-score optimality gives
\[
    \widetilde f_t(a_t)
    \ge
    \widetilde f_t(\bar a_t).
\]
Since \(\bar a_t\) is unsaturated,
\[
\begin{aligned}
    \Delta_t(a_t)
    &=
    \Delta_t(\bar a_t)
    +
    f_t(\bar a_t)-f_t(a_t)\\
    &\le
    \Gamma_ts_t(\bar a_t)
    +
    \widetilde f_t(\bar a_t)
    -
    \widetilde f_t(a_t)\\
    &\qquad
    +
    \Gamma_ts_t(\bar a_t)
    +
    \Gamma_ts_t(a_t)\\
    &\le
    2\Gamma_ts_t(\bar a_t)
    +
    \Gamma_ts_t(a_t).
\end{aligned}
\]

Moreover,
\[
\begin{aligned}
    \E[
        s_t(a_t)
        \mid
        \mathcal F_t
    ]
    &\ge
    s_t(\bar a_t)
    \Pr\left(
        a_t\notin\mathcal C_t
        \,\middle|\,
        \mathcal F_t
    \right)\\
    &\ge
    \frac{p_0}{2}
    s_t(\bar a_t),
\end{aligned}
\]
and hence
\[
    s_t(\bar a_t)
    \le
    \frac{2}{p_0}
    \E[
        s_t(a_t)
        \mid
        \mathcal F_t
    ].
\]

Let
\[
    \Delta_t'
    :=
    \Delta_t(a_t)
    \mathbf 1\{E_t^\mu\}.
\]
Because \(E_t^\mu\) is \(\mathcal F_t\)-measurable, the preceding
bounds, together with
\(\Delta_t(a_t)\le C_\Delta\), imply that for \(t\ge t_0\),
\[
\begin{aligned}
    \E[
        \Delta_t'
        \mid
        \mathcal F_t
    ]
    \le{}&
    \left(
        1+\frac{4}{p_0}
    \right)
    \Gamma_t
    \E[
        s_t(a_t)
        \mid
        \mathcal F_t
    ]\\
    &+
    \frac{C_\Delta}{(t+1)^2}.
\end{aligned}
\]
The finitely many rounds \(t<t_0\) contribute at most
\(t_0C_\Delta\).

\paragraph{Summing the conditional regret.}
Since
\[
    0\le \Delta_t'\le C_\Delta,
\]
the martingale-difference form of the Azuma--Hoeffding inequality gives,
with probability at least \(1-\delta/3\),
\[
    \sum_{t=1}^n\Delta_t'
    \le
    \sum_{t=1}^n
    \E[
        \Delta_t'
        \mid
        \mathcal F_t
    ]
    +
    C_\Delta
    \sqrt{
        2n\log\left(\frac{3}{\delta}\right)
    }.
\]

Similarly, since
\[
    0
    \le
    s_t(a_t)
    \le
    L_0,
\]
another martingale-difference application gives, with probability at
least \(1-\delta/3\),
\[
\begin{aligned}
    \sum_{t=1}^n
    \E[
        s_t(a_t)
        \mid
        \mathcal F_t
    ]
    \le{}&
    \sum_{t=1}^n s_t(a_t)\\
    &+
    L_0
    \sqrt{
        2n\log\left(\frac{3}{\delta}\right)
    }.
\end{aligned}
\]

The sequence \(\Gamma_t\) is nondecreasing. Therefore, on the intersection
of these two martingale events and the simultaneous event
\(\bigcap_{t\ge1}E_t^\mu\), whose probability is at least
\(1-\delta\), we have
\[
\begin{aligned}
    R_n
    \le{}&
    t_0C_\Delta\\
    &+
    \left(
        1+\frac{4}{p_0}
    \right)
    \Gamma_n
    \left[
        \sum_{t=1}^n s_t(a_t)
        +
        L_0
        \sqrt{
            2n\log\left(\frac{3}{\delta}\right)
        }
    \right]\\
    &+
    C_\Delta
    \sum_{t=1}^n\frac{1}{(t+1)^2}\\
    &+
    C_\Delta
    \sqrt{
        2n\log\left(\frac{3}{\delta}\right)
    }.
\end{aligned}
\]

\paragraph{Elliptical potential and final rate.}
By the matrix determinant lemma,
\[
    \log\det(V_{n+1})
    =
    \sum_{t=1}^n
    \log\left(
        1+s_t(a_t)^2
    \right).
\]
Since \(s_t(a_t)^2\le L_0^2\),
\[
    s_t(a_t)^2
    \le
    (1+L_0^2)
    \log\left(
        1+s_t(a_t)^2
    \right).
\]
Consequently,
\[
\begin{aligned}
    \sum_{t=1}^n s_t(a_t)
    &\le
    \sqrt{
        n\sum_{t=1}^n s_t(a_t)^2
    }\\
    &\le
    \sqrt{
        n(1+L_0^2)
        \log\det(V_{n+1})
    }\\
    &\le
    \sqrt{
        n(1+L_0^2)d
        \log\left(
            1+\frac{nL_0^2}{d}
        \right)
    }.
\end{aligned}
\]

Finally,
\[
    \ell_n
    =
    \widetilde O(\sqrt d),
    \qquad
    \chi_n
    =
    \widetilde O(\sqrt d),
    \qquad
    \nu_n
    =
    \widetilde O(\sqrt d),
\]
and therefore
\[
    \Gamma_n
    =
    \ell_n+\chi_n\nu_n
    =
    \widetilde O(d).
\]
Substituting the elliptical-potential bound into the preceding regret
inequality yields
\[
    R_n
    =
    \widetilde O\!\left(
        d^{3/2}\sqrt n
    \right)
\]
with probability at least \(1-\delta\).

For the expected-regret statement, note that
\[
    R_n
    \le
    nC_\Delta.
\]
When the algorithm is run with \(\delta=n^{-2}\), the failure event
therefore contributes at most
\[
    nC_\Delta\delta
    =
    \frac{C_\Delta}{n}
\]
to the expectation. Hence
\[
    \E[R_n]
    =
    \widetilde O\!\left(
        d^{3/2}\sqrt n
    \right).
\]
\end{proof}

%% file: appendix/additional_theory.tex
\section{Additional Theoretical Results}
\label{app:additional-theory}

\subsection{Residual working-posterior update}

Define the baseline-adjusted residual observation
\[
    \widetilde y_t
    =
    y_t-b(C_t,a_t)+\lambda\,\cost(C_t,a_t)
    =
    y_t-\bar b_t(a_t).
\]

\begin{proposition}[Gaussian working-posterior update for residual rewards]
\label{prop:residual-update}
Suppose Assumption 1 of the main paper holds. Under the conditional
Gaussian working model,
\[
    \widetilde y_t
    =
    x_t(a_t)^\top\theta_\star+\varepsilon_t,
    \qquad
    \varepsilon_t\mid\mathcal F_t^{\mathrm{act}}
    \sim\Normal(0,\sigma^2).
\]
Starting from
\[
    \theta\sim\Normal(\mu_0,\Sigma_0),
    \qquad
    \Lambda_0=\Sigma_0^{-1},
    \qquad
    h_0=\Lambda_0\mu_0,
\]
the working posterior after observing rounds \(1,\ldots,t\) is
\[
    \theta\mid\mathcal F_{t+1}
    \sim
    \Normal(\mu_{t+1},\Sigma_{t+1}),
\]
where
\[
    \Lambda_{t+1}
    =
    \Lambda_0
    +
    \sigma^{-2}
    \sum_{i=1}^t
    x_i(a_i)x_i(a_i)^\top,
\]
\[
    h_{t+1}
    =
    h_0
    +
    \sigma^{-2}
    \sum_{i=1}^t
    x_i(a_i)\widetilde y_i,
\]
and
\[
    \Sigma_{t+1}=\Lambda_{t+1}^{-1},
    \qquad
    \mu_{t+1}=\Sigma_{t+1}h_{t+1}.
\]
Equivalently,
\[
    \Lambda_{t+1}
    =
    \Lambda_t
    +
    \sigma^{-2}x_t(a_t)x_t(a_t)^\top,
\]
\[
    h_{t+1}
    =
    h_t
    +
    \sigma^{-2}x_t(a_t)\widetilde y_t.
\]
Under only the sub-Gaussian part of Assumption 1 of the main paper, the same
recursions define the
Gaussian working posterior used by FABLE; \(\mu_t\) is then the corresponding
regularized least-squares center rather than an exact Bayesian posterior mean.
\end{proposition}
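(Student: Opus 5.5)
The statement "immediately above" is Proposition~\ref{prop:residual-update}, the Gaussian working-posterior update for residual rewards. The plan is to prove it by induction on \(t\), using Gaussian conjugacy in information form, and then to reinterpret the same recursion as regularized least squares for the sub-Gaussian case.

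\textbf{Base case and reduction to a scalar observation.} The base case is the prior \(\theta\sim\Normal(\mu_0,\Sigma_0)\), written with precision \(\Lambda_0\) and information vector \(h_0=\Lambda_0\mu_0\). This matches the algorithmic initialization \(\Lambda_1=\Lambda_0\), \(h_1=h_0\) when no rounds have been observed. For the inductive step, assume \(\theta\mid\mathcal F_t\sim\Normal(\mu_t,\Sigma_t)\). The new information between \(\mathcal F_t\) and \(\mathcal F_{t+1}\) consists of three kinds of quantities.
\begin{itemize}
\item The Thompson draw \(\widetilde\theta_t\) and the action \(a_t\). Their conditional law given \(\mathcal F_t\) does not involve \(\theta\), so they contribute a factor constant in \(\theta\) to the likelihood.
\item The residual \(\widetilde y_t=y_t-\bar b_t(a_t)\). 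Given \(\mathcal F_t^{\mathrm{act}}\) and \(\theta\), it has density proportional to \(\exp\{-(\widetilde y_t-x_t(a_t)^\top\theta)^2/(2\sigma^2)\}\). Here \(\bar b_t(a_t)\) is known and \(\mathcal F_t^{\mathrm{act}}\)-measurable, so subtracting it changes nothing.
\item The next context \(C_{t+1}\) and feasible set \(\calA_{t+1}\). Under the working model these are generated without reference to \(\theta\), so they too contribute a factor constant in \(\theta\).
\end{itemize}
The working assumption that the action and context mechanisms carry no information about \(\theta\) is the one point that must be stated explicitly rather than derived.

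\textbf{Completing the square.} Multiply the prior density by the Gaussian likelihood factor and collect the terms quadratic and linear in \(\theta\). This gives
\[
-\tfrac12\theta^\top\!\left(\Lambda_t+\sigma^{-2}x_tx_t^\top\right)\theta+\theta^\top\!\left(h_t+\sigma^{-2}x_t\widetilde y_t\right)+\text{const},
\]
where \(x_t=x_t(a_t)\). The posterior is therefore Gaussian with precision \(\Lambda_{t+1}=\Lambda_t+\sigma^{-2}x_tx_t^\top\) and information vector \(h_{t+1}=h_t+\sigma^{-2}x_t\widetilde y_t\). Since \(\Lambda_{t+1}\succeq\Lambda_0\succ0\), it is invertible, so \(\Sigma_{t+1}=\Lambda_{t+1}^{-1}\) and \(\mu_{t+1}=\Sigma_{t+1}h_{t+1}\) are well defined. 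Unrolling the recursion gives the summed forms of \(\Lambda_{t+1}\) and \(h_{t+1}\) stated in the proposition.

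\textbf{Sub-Gaussian case.} Under only the sub-Gaussian part of Assumption~\ref{ass:residual-feedback}, there is no conjugacy claim to prove. It suffices to show that \(\mu_{t+1}\) is the regularized least-squares center. Take the objective
\[
\|\theta-\mu_0\|_{\Lambda_0}^2+\sigma^{-2}\sum_{i\le t}\left(\widetilde y_i-x_i^\top\theta\right)^2 .
\]
Setting its gradient to zero gives \(\Lambda_{t+1}\theta=h_{t+1}\). The objective is strictly convex because \(\Lambda_{t+1}\succ0\), so the unique minimizer is \(\mu_{t+1}\).

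The main obstacle is the step that isolates the likelihood: justifying that adaptively chosen actions and the subsequent contexts contribute no \(\theta\)-dependent factor. The rest is routine algebra.
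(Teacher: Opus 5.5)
Your proposal is correct and follows the paper's route. The paper's proof also subtracts the known offset \(\bar b_t(a_t)\), multiplies the Gaussian prior by the Gaussian working likelihoods, and completes the square to read off \(\Lambda_{t+1}\) and \(h_{t+1}\); it does this for all rounds at once rather than by your one-step induction, which makes no difference. Your remarks that the Thompson draw, the action, and later contexts contribute \(\theta\)-free factors, and your normal-equation argument identifying \(\mu_{t+1}\) as the regularized least-squares minimizer in the sub-Gaussian case, make explicit points the paper leaves unstated rather than change the argument.
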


\begin{proof}
Subtracting the known term \(\bar b_t(a_t)\) from the feedback model gives
\[
    \widetilde y_t
    =
    x_t(a_t)^\top\theta_\star+\varepsilon_t.
\]
Under the Gaussian working likelihood, multiplying the Gaussian prior by the
conditional likelihoods and completing the square gives precision
\[
    \Lambda_0
    +
    \sigma^{-2}\sum_{i=1}^t x_i(a_i)x_i(a_i)^\top
\]
and information vector
\[
    h_0
    +
    \sigma^{-2}\sum_{i=1}^t x_i(a_i)\widetilde y_i.
\]
The displayed posterior and one-step recursions follow.
\end{proof}

\subsection{Effect of the rule-based baseline}

\begin{remark}[Conditional effect of a smaller prior-centered radius]
\label{cor:baseline-residual-benefit}
The proof of Theorem 1 in the main paper depends on the
prior-centered radius
\[
    R_b
    =
    \|\theta_\star-\mu_0\|_{\Lambda_0}
\]
through the confidence radius \(\ell_t\) and the resulting
anti-concentration constant. Holding the transformed feature bound, Gaussian
initialization geometry, sampling schedule, and all remaining problem
constants fixed, a smaller value of \(R_b\) weakly improves the corresponding
radius-dependent terms in the upper bound. Thus residualization can sharpen
the bound when it reduces the prior-centered residual radius, but no
improvement is automatic if it simultaneously worsens the other problem
constants.
\end{remark}

\subsection{Safe-set identifiability and inactive coordinates}

Fix a realized predictable sequence of contexts and supplied feasible sets,
and define the safe feature span
\[
    \mathcal S_{\mathrm{safe}}
    =
    \operatorname{span}
    \left\{
        x_t(a):t\ge1,\ a\in\calA_t
    \right\}.
\]

\begin{proposition}[Identification under persistent feasibility constraints]
\label{prop:safe-span-identifiability}
A linear functional \(w^\top\theta_\star\) is identified from the complete
conditional-mean surface on the supplied feasible actions if and only if
\[
    w\in\mathcal S_{\mathrm{safe}}.
\]
More precisely, if \(\theta\) and \(\theta'\) satisfy
\[
    x_t(a)^\top\theta
    =
    x_t(a)^\top\theta',
    \qquad
    \forall t\ge1,\ \forall a\in\calA_t,
\]
then
\[
    w^\top\theta=w^\top\theta'
\]
for every \(w\in\mathcal S_{\mathrm{safe}}\). Conversely, if
\(w\notin\mathcal S_{\mathrm{safe}}\), there exist two parameters with the
same conditional means on every supplied feasible action but different values
of \(w^\top\theta\).
\end{proposition}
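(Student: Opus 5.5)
The statement to prove is Proposition~\ref{prop:safe-span-identifiability}. The argument is pure linear algebra on the fixed realized design; no probability is involved. The key observation is that the complete conditional-mean surface on supplied feasible actions is the collection of values \(f_t(a)=\bar b_t(a)+x_t(a)^\top\theta\). Because \(\bar b_t\) is known and common to all candidate parameters, two parameters generate the same surface exactly when \(x_t(a)^\top(\theta-\theta')=0\) for all \(t\ge1\) and \(a\in\calA_t\). This holds exactly when \(\theta-\theta'\in\mathcal S_{\mathrm{safe}}^{\perp}\), since \(\mathcal S_{\mathrm{safe}}\) is a subspace of \(\R^d\) spanned by these vectors. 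Identification of \(w^\top\theta_\star\) then means that \(w^\top\) vanishes on this null space, and both directions follow from the orthogonal decomposition \(\R^d=\mathcal S_{\mathrm{safe}}\oplus\mathcal S_{\mathrm{safe}}^\perp\).

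For the forward direction, take \(w\in\mathcal S_{\mathrm{safe}}\). Because the span is finite-dimensional, \(w\) is a finite linear combination \(w=\sum_{k}\alpha_k x_{t_k}(a_k)\) with \(a_k\in\calA_{t_k}\), even though the generating family may be infinite. Then
\[
w^\top\theta-w^\top\theta'=\sum_k\alpha_k\,x_{t_k}(a_k)^\top(\theta-\theta')=0
\]
by the hypothesis.

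For the converse, take \(w\notin\mathcal S_{\mathrm{safe}}\) and decompose \(w=w_S+w_\perp\) with \(w_S\in\mathcal S_{\mathrm{safe}}\) and \(0\ne w_\perp\in\mathcal S_{\mathrm{safe}}^\perp\). Set \(\theta'=\theta+w_\perp\) for any \(\theta\), for instance \(\theta_\star\). Every feasible score is unchanged, since \(x_t(a)^\top w_\perp=0\). At the same time,
\[
w^\top\theta'-w^\top\theta=w^\top w_\perp=\|w_\perp\|_2^2>0.
\]

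The only step needing care is making explicit what "identified" means here. I would state it as invariance of \(w^\top\theta\) over the equivalence class of parameters that induce the same feasible mean surface; with that definition both directions are routine. I would also remark that \(\mathcal S_{\mathrm{safe}}\) is defined for the fixed realized sequence, so the result is conditional on the trajectory. Assumption~\ref{ass:predictable-feasible-set} is needed only so that this sequence is well defined.
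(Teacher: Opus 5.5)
Your proof is correct and matches the paper's argument essentially step for step. Both use a finite linear combination of safe feature vectors for the forward direction, and for the converse the perturbation \(\theta'=\theta+v\), where \(v\neq0\) is the projection of \(w\) onto \(\mathcal S_{\mathrm{safe}}^\perp\), so that \(w^\top v=\|v\|_2^2>0\); your remarks that the known offset cancels and on what ``identified'' means are clarifications the paper leaves implicit.
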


\begin{proof}
If \(w\in\mathcal S_{\mathrm{safe}}\), write \(w\) as a finite linear
combination of safe feature vectors. Equality of all safe scores then implies
\(w^\top\theta=w^\top\theta'\).

If \(w\notin\mathcal S_{\mathrm{safe}}\), let \(v\neq0\) be the orthogonal
projection of \(w\) onto \(\mathcal S_{\mathrm{safe}}^\perp\). For any
\(\theta\), set \(\theta'=\theta+v\). Every safe feature vector is
orthogonal to \(v\), so the two parameters have the same safe scores, while
\[
    w^\top\theta'-w^\top\theta
    =
    w^\top v
    =
    \|v\|_2^2
    >0.
\]
\end{proof}

\begin{remark}[Identification versus consistent estimation]
Membership in \(\mathcal S_{\mathrm{safe}}\) is necessary for identification
from feasible actions. Consistent estimation from the actions actually
selected by the learner additionally requires persistent information in the
relevant direction, for example
\[
    w^\top\Lambda_t^{-1}w\longrightarrow0.
\]
\end{remark}

\begin{proposition}[Inactive coordinates are not updated by online feedback]
\label{prop:inactive-coordinates}
Let \(\mathcal I\subseteq\{1,\ldots,d\}\) satisfy
\[
    x_t(a)_{\mathcal I}=0,
    \qquad
    \forall t,\ \forall a\in\calA_t.
\]
Suppose the initial precision is block separated:
\[
    (\Lambda_0)_{\mathcal I,\mathcal I^c}=0,
    \qquad
    (\Lambda_0)_{\mathcal I^c,\mathcal I}=0.
\]
With \(\Lambda_1=\Lambda_0\), \(h_1=h_0\),
\(\Sigma_1=\Sigma_0\), and \(\mu_1=\mu_0\), for every \(t\ge1\),
\[
    (\Lambda_t)_{\mathcal I,\mathcal I}
    =
    (\Lambda_0)_{\mathcal I,\mathcal I},
\]
\[
    (\Lambda_t)_{\mathcal I,\mathcal I^c}=0,
    \qquad
    (\Lambda_t)_{\mathcal I^c,\mathcal I}=0,
\]
\[
    (h_t)_{\mathcal I}=(h_0)_{\mathcal I},
\]
and consequently
\[
    (\mu_t)_{\mathcal I}=(\mu_0)_{\mathcal I},
    \qquad
    (\Sigma_t)_{\mathcal I,\mathcal I}
    =
    (\Sigma_0)_{\mathcal I,\mathcal I}.
\]
Thus online feedback leaves the posterior marginal on the inactive block
unchanged. In particular, if \(w_{\mathcal I^c}=0\), then
\[
    w^\top\mu_t=w^\top\mu_0,
    \qquad
    w^\top\Sigma_t w=w^\top\Sigma_0w.
\]
\end{proposition}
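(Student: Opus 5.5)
The plan is an induction on \(t\) over the recursions in Equations~\eqref{eq:precision-update}--\eqref{eq:info-update} (equivalently Proposition~\ref{prop:residual-update}), followed by a block-inversion argument to move from the natural parameters \((\Lambda_t,h_t)\) to the moment parameters \((\mu_t,\Sigma_t)\).

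\textbf{Base case.} At \(t=1\) all claims hold by the stated initialization \(\Lambda_1=\Lambda_0\), \(h_1=h_0\). The block-separation hypothesis supplies the zero off-diagonal blocks.

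\textbf{Induction step.} Write \(\phi_t=x_t(a_t)\). By Assumption~\ref{ass:predictable-feasible-set} and Equation~\eqref{eq:ts-score}, the selected action satisfies \(a_t\in\calA_t\). The hypothesis then gives \((\phi_t)_{\mathcal I}=0\). Two consequences follow.
\begin{itemize}
\item The rank-one increment \(\sigma^{-2}\phi_t\phi_t^\top\) has zero \((\mathcal I,\mathcal I)\), \((\mathcal I,\mathcal I^c)\), and \((\mathcal I^c,\mathcal I)\) blocks. This is because every entry \((\phi_t)_i(\phi_t)_j\) with \(i\in\mathcal I\) or \(j\in\mathcal I\) vanishes.
\item The information increment \(\sigma^{-2}\phi_t\widetilde y_t\) has zero \(\mathcal I\)-block, whatever the value of \(\widetilde y_t\).
\end{itemize}
Adding these increments to \(\Lambda_t\) and \(h_t\) therefore preserves the three block identities for \(\Lambda_{t+1}\) and the identity \((h_{t+1})_{\mathcal I}=(h_0)_{\mathcal I}\). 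This closes the induction. No probabilistic argument is needed: the statement holds pathwise for every realization of the samples and the feedback.

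\textbf{Passing to moments.} Permute coordinates so that \(\Lambda_t=\operatorname{diag}\bigl((\Lambda_0)_{\mathcal I,\mathcal I},(\Lambda_t)_{\mathcal I^c,\mathcal I^c}\bigr)\). Both diagonal blocks are principal submatrices of the positive definite matrix \(\Lambda_t\succeq\Lambda_0\succ0\), so both are invertible. The inverse of a block-diagonal matrix is block diagonal, which gives \((\Sigma_t)_{\mathcal I,\mathcal I}=(\Lambda_0)_{\mathcal I,\mathcal I}^{-1}\).

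The same formula at \(t=1\), using block separation of \(\Lambda_0\), shows \((\Sigma_0)_{\mathcal I,\mathcal I}=(\Lambda_0)_{\mathcal I,\mathcal I}^{-1}\), so \((\Sigma_t)_{\mathcal I,\mathcal I}=(\Sigma_0)_{\mathcal I,\mathcal I}\). Because \(\Sigma_t\) is also block diagonal, \((\mu_t)_{\mathcal I}=(\Sigma_t)_{\mathcal I,\mathcal I}(h_t)_{\mathcal I}=(\Sigma_0)_{\mathcal I,\mathcal I}(h_0)_{\mathcal I}=(\mu_0)_{\mathcal I}\).

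\textbf{Final claim.} For \(w\) supported on \(\mathcal I\) (that is, \(w_{\mathcal I^c}=0\)), we have \(w^\top\mu_t=w_{\mathcal I}^\top(\mu_t)_{\mathcal I}\) and \(w^\top\Sigma_tw=w_{\mathcal I}^\top(\Sigma_t)_{\mathcal I,\mathcal I}w_{\mathcal I}\). Both quantities are therefore unchanged from round \(0\).

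\textbf{Main obstacle.} The only step needing care is the moment step. Without block separation of \(\Lambda_0\), the Schur complement would couple \(\Sigma_{\mathcal I,\mathcal I}\) to the active block that is being updated, and the conclusion would fail. I would therefore check explicitly that block diagonality is preserved at every \(t\), not only at \(t=1\), before inverting.
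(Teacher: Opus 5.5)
Your proposal is correct and follows the paper's proof essentially step for step. Because \(a_t\in\calA_t\), the increments \(\sigma^{-2}\phi_t\phi_t^\top\) and \(\sigma^{-2}\phi_t\widetilde y_t\) vanish on the \(\mathcal I\)-rows and columns and on the \(\mathcal I\)-block; induction then preserves the block structure of \((\Lambda_t,h_t)\), and block-diagonal inversion yields the claims for \(\Sigma_t\) and \(\mu_t=\Sigma_t h_t\). Your explicit check that \((\Sigma_t)_{\mathcal I,\mathcal I}=(\Lambda_0)_{\mathcal I,\mathcal I}^{-1}=(\Sigma_0)_{\mathcal I,\mathcal I}\) just spells out the moment step that the paper states in one line.
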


\begin{proof}
Because the selected action satisfies \(a_t\in\calA_t\), the assumption gives
\(x_t(a_t)_{\mathcal I}=0\). The rank-one precision increment
\[
x_t(a_t)x_t(a_t)^\top
\]
therefore has zero \(\mathcal I\)-rows and \(\mathcal I\)-columns, and the
information-vector increment
\[
x_t(a_t)\widetilde y_t
\]
has zero \(\mathcal I\)-block. Induction preserves the stated precision and
information-vector blocks. The precision remains block diagonal, so its
inverse does as well; the claims for \(\Sigma_t\) and
\(\mu_t=\Sigma_th_t\) follow.
\end{proof}

\begin{corollary}[Memory-disabled users]
\label{cor:memory-disabled}
Suppose \(Q_t^{\mathrm{hard}}\) disables memory at every round, so every
feasible action has
\[
    m=m_0=\text{no memory}.
\]
Let \(\mathcal I_{\mathrm{mem}}\) contain the feature coordinates activated
only by non-null memory modes and their interactions. Then
\[
    x_t(a)_{\mathcal I_{\mathrm{mem}}}=0,
    \qquad
    \forall t,\ \forall a\in\calA_t.
\]
If the initial precision is block separated between
\(\mathcal I_{\mathrm{mem}}\) and its complement, online feedback leaves the
posterior marginal on \(\theta_{\mathcal I_{\mathrm{mem}}}\) unchanged.
Hence preferences that require comparing non-null memory modes cannot be
learned from online interaction data for a user who persistently disables
memory.
\end{corollary}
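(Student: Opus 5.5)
The corollary is a direct specialization of Proposition~\ref{prop:inactive-coordinates}. The plan is to verify its hypothesis for the index set \(\mathcal I=\mathcal I_{\mathrm{mem}}\) and then read off the conclusion.

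\textbf{Step 1: memory coordinates vanish on feasible actions.} I first make \(\mathcal I_{\mathrm{mem}}\) concrete using the dictionary in Equation~\eqref{eq:factorized-feature}. A coordinate is activated only by non-null memory modes if it is
\begin{itemize}
\item the entry of \(e_{\calM}(m)\) for \(m\neq m_0\),
\item an entry of \(p_t e_{\calM\setminus\{m_0\}}(m)\),
\item an entry of \(e_{\calM}(m)\otimes e_{\calT}(\tau)\) with \(m\neq m_0\), or
\item an entry of \(e_{\calM}(m)\otimes e_{\calS}(s)\) with \(m\neq m_0\).
\end{itemize}
Assumption~\ref{ass:predictable-feasible-set} gives \(\calA_t=\calA_{\feas}(C_t,Q_t^{\mathrm{hard}})\), and the hard state forces \(m=m_0\) for every \(a\in\calA_t\). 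Then \(e_{\calM}(m_0)\) has its only nonzero entry at the \(m_0\) slot. The reduced vector \(e_{\calM\setminus\{m_0\}}(m_0)\) is zero by definition. Every Kronecker product \(e_{\calM}(m_0)\otimes v\) is supported only on the \(m_0\) rows. Hence \(x_t(a)_{\mathcal I_{\mathrm{mem}}}=0\) for all \(t\) and all \(a\in\calA_t\).

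The only point needing care is the definition itself. In another dictionary a coordinate might also be reachable by \(m_0\); such a coordinate is excluded from \(\mathcal I_{\mathrm{mem}}\) by the phrase ``activated only by non-null memory modes.'' So the vanishing holds for any dictionary, not just this one.

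\textbf{Step 2: apply the proposition.} With the block-separation assumption on \(\Lambda_0\) between \(\mathcal I_{\mathrm{mem}}\) and its complement, Proposition~\ref{prop:inactive-coordinates} applies verbatim. It gives, for every \(t\),
\[
(\mu_t)_{\mathcal I_{\mathrm{mem}}}=(\mu_0)_{\mathcal I_{\mathrm{mem}}},
\qquad
(\Sigma_t)_{\mathcal I_{\mathrm{mem}},\mathcal I_{\mathrm{mem}}}=(\Sigma_0)_{\mathcal I_{\mathrm{mem}},\mathcal I_{\mathrm{mem}}}.
\]
Since the working posterior is Gaussian, fixing these two blocks fixes the marginal law of \(\theta_{\mathcal I_{\mathrm{mem}}}\), so that marginal is unchanged.

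\textbf{Step 3: preferences between memory modes cannot be learned.} Take two non-null modes \(m\neq m'\), an admissible context, and fixed \(\tau,s\). The contrast \(w=\phi(c,(m,\tau,s))-\phi(c,(m',\tau,s))\) cancels every coordinate that does not depend on \(m\). The remaining entries of \(w\) lie in memory-indexed coordinates with non-null mode, so \(w_{\mathcal I_{\mathrm{mem}}^c}=0\).

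By the last clause of Proposition~\ref{prop:inactive-coordinates}, both \(w^\top\mu_t\) and \(w^\top\Sigma_t w\) stay at their initial values. The posterior belief about any such contrast is therefore never revised by online data.

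A stronger, model-level reading is also available. Every feasible \(x_t(a)\) is orthogonal to \(w\), so \(w\notin\mathcal S_{\mathrm{safe}}\) whenever \(w\neq0\). Proposition~\ref{prop:safe-span-identifiability} then says \(w^\top\theta_\star\) is not even identified from feasible data. I expect no real obstacle here; the only delicate point is the coordinate bookkeeping in Step 1.
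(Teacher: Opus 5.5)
Your proposal is correct and follows the paper's proof. You verify that every coordinate in \(\mathcal I_{\mathrm{mem}}\) vanishes on all feasible actions, spelling out the dictionary bookkeeping that the paper leaves implicit, and then invoke Proposition~\ref{prop:inactive-coordinates}. Your Step~3 is a correct elaboration of the final ``Hence'' sentence, which the paper states without argument: it shows that a contrast between two non-null modes is supported on \(\mathcal I_{\mathrm{mem}}\), and notes via Proposition~\ref{prop:safe-span-identifiability} that such a contrast is not even identified, a conclusion that does not need block separation.
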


\begin{proof}
Every coordinate activated only by a non-null memory mode is zero on every
feasible action. Proposition~\ref{prop:inactive-coordinates} applies.
\end{proof}

%% file: appendix/additional_experiments.tex
\section{Additional Experiments and Analyses}
\label{app:additional-experiments}

\subsection{Additional tau2-bench Paired Analysis}

The full formal run contains 4,800 episodes in 120 completed
seed--domain--policy shards with zero errors. The recorded host and customer
endpoint is \texttt{opus-4-8}, and simulations use tau2's 200-step default.

The paired comparisons in Table 1 of the main paper isolate
the matched contrasts narrowly. Relative to flat complete-action LinTS, which
matches full FABLE on onboarding, promotion, cost, and feedback but treats the
18 actions independently, factorization improves alignment by \(0.022\) with
a positive interval; personalized reward and task success are unresolved.
This supports transfer across action components for the preference-sensitive
metric, not uniform improvement across metrics. Relative to FABLE (frozen),
which shares the factorization, onboarding prior, default score, and cost but
does not update online, the complete feedback-enabled path improves alignment
by \(0.024\) and has identical aggregate task success. This contrast combines
posterior updating with any promotion it triggers and does not identify an
update-only effect: the no-promotion arm still updates but has alignment
\(0.686\), below frozen FABLE's \(0.693\), while full exceeds no-promotion by
\(0.031\) (\([0.012,0.048]\)).

Removing onboarding produces the largest measured loss: full minus
no-onboarding is \(+0.163\) alignment and \(+0.071\) personalized reward, both
with positive intervals. Removing promotion reduces alignment by \(0.031\)
with a positive interval, whereas its personalized-reward interval includes
zero. Full also exceeds no-cost by \(0.036\) personalized reward and \(0.042\)
task success, but both intervals include zero; this is a favorable,
statistically unresolved cost trend.

Task success is heterogeneous: FABLE (full) is lower than the host on Airline
(\(0.733\) versus \(0.850\)), tied on Retail (\(0.883\)), and higher on
Telecom (\(0.617\) versus \(0.483\)) and Banking Knowledge (\(0.267\) versus
\(0.217\)). For full minus host, the prespecified noninferiority margin is
\(-0.02\), while \(\Delta=+0.0167\) with 95\% CI
\([-0.0458,0.0792]\). Because the lower endpoint is below the margin, the test
fails: FABLE achieves the highest observed personalized reward and synthetic
verbosity alignment while tying the highest observed native task-success
rate, but the available sample does not statistically certify two-point
task-success noninferiority to the host. The experiment supports controlled
synthetic verbosity adaptation and executable integration with unmodified
tau2 tools, environment, and evaluator; it does not establish general human
preference alignment, inference from hidden simulator preferences, dynamic
risk filtering, an update-only causal benefit, significant task-success
improvement, noninferiority, or uniform improvement across domains.

\subsection{PAHF: Repeated Online Personalization}
\label{subsec:experiments-pahf}

We first evaluate FABLE on PAHF-Embodied Manipulation and PAHF-Online
Shopping~\citep{liang2026pahf}. Each round selects an execution action
\[
a_t=(m_t,\tau_t,s_t)
\in
\calM\times\calT\times\calS,
\]
where \(\lvert\calM\rvert=5\), \(\lvert\calT\rvert=6\), and
\(\lvert\calS\rvert=6\). The resulting action space contains 180 complete
combinations before feasibility filtering.

In Embodied Manipulation, the quality score assigns weight \(0.6\) to object
correctness and \(0.4\) to location correctness when both labels are
available; otherwise, it uses object correctness alone. In Online Shopping,
the quality score is
\[
\mathrm{Qual}_t
=
0.75\,\mathrm{Exact}_t
+
0.25\,\mathrm{FeatMatch}_t,
\]
where \(\mathrm{Exact}_t\) indicates whether the selected product is acceptable
for the user persona and \(\mathrm{FeatMatch}_t\) measures attribute-level
partial credit.

The experiment uses 20 synthetic personas and 30 rounds per user, with 80\%
of the rounds used for online learning and 20\% held out for evaluation. The
cost weight is \(\lambda=1\), the reward-noise standard deviation is
\(\sigma=0.05\), the clarification penalty is \(0.08\), and the
post-correction penalty is \(0.15\).

\paragraph{Compared methods.}
The retained methods are defined as follows.
\paragraph{Rule-Only.} It selects actions using only the fixed default score
\(b(c,a)\) and the action-cost penalty. It performs no online posterior
update.

\paragraph{Per-User LinTS.} It maintains a separate Thompson-sampling
posterior for each user but does not use the factorized action
representation.

\paragraph{FABLE (no onboarding).} It uses the factorized FABLE policy but
removes the onboarding pseudo-observations. Each user therefore starts
from a zero-mean prior.


\paragraph{FABLE (no cost).} It uses the complete factorized policy,
onboarding, online updates, but sets \(\lambda=0\), so the
learning signal does not include the action-cost penalty.

\paragraph{FABLE (full).} It uses factorized features, an onboarding prior,
per-user online Bayesian updates, the action-cost term.

\begin{table}[!htbp]
\centering
\scriptsize
\setlength{\tabcolsep}{2.8pt}
\begin{tabular}{lrrrr}
\toprule
& \multicolumn{2}{c}{Embodied Manipulation}
& \multicolumn{2}{c}{Online Shopping} \\
\cmidrule(lr){2-3}\cmidrule(lr){4-5}
Policy
& Reward \(\uparrow\)
& Success \(\uparrow\)
& Reward \(\uparrow\)
& Success \(\uparrow\) \\
\midrule
Rule-Only
& 0.3216 & 0.055
& -0.5893 & 0.077 \\
Per-User LinTS
& 0.5866 & 0.023
& 0.4612 & 0.010 \\
FABLE (no onboarding)
& 0.6470 & 0.052
& 0.4432 & 0.082 \\
FABLE (no cost)
& 0.5801 & 0.078
& 0.4223 & 0.023 \\
FABLE (full)
& 0.6214 & 0.078
& 0.5193 & 0.133 \\
\bottomrule
\end{tabular}
\caption{PAHF held-out performance. The first two metric columns correspond
to Embodied Manipulation and the last two correspond to Online Shopping.
Metrics are averaged over users and held-out rounds.}
\label{tab:experiments-pahf-main}
\end{table}

Table~\ref{tab:experiments-pahf-main} reports the stationary held-out results.
On Embodied Manipulation, FABLE (full) has higher reward and success than
Rule-Only: reward changes from \(0.3216\) to \(0.6214\), and success from
\(0.055\) to \(0.078\). On Online Shopping, FABLE (full) has the highest
reward and success among the retained methods, with reward \(0.5193\) and
success \(0.133\).

The onboarding ablation produces different rankings across the two
environments. On Embodied Manipulation, FABLE (no onboarding) has the highest
reward, \(0.6470\), but its success, \(0.052\), is below that of FABLE (full),
\(0.078\). On Online Shopping, FABLE (no onboarding) has lower reward and
success than FABLE (full). These comparisons show that reward and success do
not induce the same method ranking in Embodied Manipulation.

Removing the cost term is also associated with lower held-out performance on
Online Shopping. FABLE (no cost) obtains reward \(0.4223\) and success
\(0.023\), compared with \(0.5193\) and \(0.133\) for FABLE (full). The
reported results establish this performance difference but do not identify
which action component is responsible for it.


\begin{table}[!htbp]
\centering
\scriptsize
\setlength{\tabcolsep}{1.8pt}
\begin{tabular}{lrrrrrr}
\toprule
& \multicolumn{3}{c}{Embodied Manipulation}
& \multicolumn{3}{c}{Online Shopping} \\
\cmidrule(lr){2-4}\cmidrule(lr){5-7}
Policy
& Acc-M \(\uparrow\)
& Acc-T \(\uparrow\)
& Acc-S \(\uparrow\)
& Acc-M \(\uparrow\)
& Acc-T \(\uparrow\)
& Acc-S \(\uparrow\) \\
\midrule
Rule-Only
& 0.337 & 0.265 & 0.302
& 0.292 & 0.282 & 0.270 \\
Per-User LinTS
& 0.197 & 0.198 & 0.213
& 0.328 & 0.115 & 0.137 \\
FABLE (no onboarding)
& 0.172 & 0.208 & 0.142
& 0.253 & 0.190 & 0.130 \\
FABLE (no cost)
& 0.203 & 0.295 & 0.355
& 0.347 & 0.157 & 0.287 \\
FABLE (full)
& 0.182 & 0.322 & 0.432
& 0.408 & 0.305 & 0.313 \\
\bottomrule
\end{tabular}
\caption{PAHF action-component accuracy on held-out rounds. Acc-M, Acc-T, and
Acc-S denote memory-mode, information-acquisition-mode, and response-style
accuracy, respectively.}
\label{tab:experiments-pahf-components}
\end{table}

The component accuracies in
Table~\ref{tab:experiments-pahf-components} show different rankings across
the two environments. On Embodied Manipulation, FABLE (full) has the highest
information-acquisition and response-style accuracy among the retained
methods, while Rule-Only has the highest memory-mode accuracy. On Online
Shopping, FABLE (full) has the highest accuracy for all three action
components. These results indicate that the relative benefit of the
factorized policy varies across action components and environments.

\subsection{Personalized Reasoning on Math500}
\label{subsec:experiments-math500}

Finally, we evaluate personalized mathematical reasoning on
Math500~\citep{hendrycks2021math,lightman2023verify}. A PrefDisco-style
pipeline~\citep{li2025prefdisco} generates a synthetic user profile containing
a persona, sparse context-dependent preferences, and a response-evaluation
rubric. The experiment uses one fixed synthetic user with \(K=3\) rubric
criteria. Each criterion admits three response levels scored in
\(\{1,3,5\}\), yielding \(3^3=27\) complete actions. A selected action fixes
one level per criterion and is compiled into a system instruction specifying
the requested explanation properties.

The answer model is~\texttt{deepseek-v4-flash}, and the judge model
is~\texttt{deepseek-v4-pro}. The judge independently returns binary
mathematical correctness and one score in \(\{1,3,5\}\) for each rubric
criterion. After normalizing criterion scores to \(\{0,0.5,1\}\), their
rubric-weighted average gives preference alignment. Response quality and
adaptive feedback are
\[
\begin{aligned}
\mathrm{Qual}_t
&=
0.5\,\mathrm{Acc}_t+0.5\,\mathrm{Align}_t,\\
y_t
&=
\operatorname{clip}_{[-1,1]}
\left(
2\mathrm{Qual}_t-1-\lambda_{\mathrm{orc}}\cost(a_t)
\right).
\end{aligned}
\]
Cost-aware methods use \(\lambda_{\mathrm{orc}}=1\), while FABLE (no cost)
uses \(\lambda_{\mathrm{orc}}=0\). The two baseline methods select no action
and incur no action cost. Levels \(1\), \(3\), and \(5\) have costs \(0.1\),
\(0.3\), and \(0.5\), respectively, and a complete action has the mean cost of
its selected levels.

For each problem--action pair, an LLM estimates a pre-answer default score.
These scores are computed once, cached, and shared across methods and seeds.
Each action-selection method receives 30 non-overlapping training problems and
50 held-out problems; the two baselines are evaluated only on the held-out
split. Each training problem produces one posterior update for adaptive
methods. During held-out evaluation, posterior states are frozen and Thompson
sampling is disabled, so test feedback cannot affect later decisions. We use
five matched random seeds and report the mean and sample standard deviation of
the five seed-level means, each computed over 50 held-out problems.

Answers are generated at temperature \(0\) with a maximum of 2500 tokens. The
Gaussian bandit uses base precision \(1.0\), observation-noise variance
\(0.25\), onboarding precision \(1.0\), and cost weight \(1.0\). Promotion
uses \(\alpha=0.05\), a minimum informative count of 5, and prior radius
\(1.0\).

\paragraph{Compared methods.}
Baseline (Host Default) uses the unpersonalized host, while Baseline (Known
Preference) supplies the full synthetic profile directly to the host.
Rule-Only selects from the rubric action space using only the cached default
score and cost. Non-Factorized LinTS maintains one coordinate per complete
action. FABLE (frozen posterior) uses the same factorization, onboarding,
default scores, and costs as FABLE (full) but disables posterior updates. The
remaining ablations remove onboarding, promotion, or cost, respectively.

\begin{table*}[t]
\centering
\small
\setlength{\tabcolsep}{5pt}
\begin{tabular}{@{}lccc@{}}
\toprule
Method &
Accuracy \(\uparrow\) &
Alignment \(\uparrow\) &
Reward \(y\) \(\uparrow\) \\
\midrule
Baseline (Host Default)
& \(0.960 \pm 0.014\)
& \(0.643 \pm 0.004\)
& \(0.603 \pm 0.017\) \\
Baseline (Known Preference)
& \(0.944 \pm 0.009\)
& \(\mathbf{0.795} \pm 0.018\)
& \(0.739 \pm 0.011\) \\
Rule-Only
& \(0.952 \pm 0.011\)
& \(0.784 \pm 0.011\)
& \(0.295 \pm 0.016\) \\
Non-Factorized LinTS
& \(0.960 \pm 0.014\)
& \(0.736 \pm 0.015\)
& \(0.361 \pm 0.040\) \\
FABLE (frozen posterior)
& \(0.956 \pm 0.030\)
& \(0.779 \pm 0.027\)
& \(0.235 \pm 0.051\) \\
FABLE (no onboarding)
& \(\mathbf{0.972} \pm 0.011\)
& \(0.725 \pm 0.040\)
& \(0.399 \pm 0.040\) \\
FABLE (no promotion)
& \(0.964 \pm 0.022\)
& \(0.758 \pm 0.028\)
& \(0.356 \pm 0.079\) \\
FABLE (no cost)
& \(0.956 \pm 0.026\)
& \(0.784 \pm 0.011\)
& \(0.740 \pm 0.022\) \\
FABLE (full)
& \(\mathbf{0.972} \pm 0.011\)
& \(0.711 \pm 0.038\)
& \(0.366 \pm 0.068\) \\
\bottomrule
\end{tabular}
\caption{Math500 held-out results for one synthetic user. Each entry is the
mean \(\pm\) sample standard deviation across five seeds; within each seed,
metrics are averaged over 50 held-out problems. The no-cost ablation and the
two baselines do not incur the action-cost penalty.}
\label{tab:experiments-math500}
\end{table*}

\paragraph{Accuracy and alignment.}
FABLE (full) and FABLE (no onboarding) attain the highest mean accuracy,
\(0.972\), compared with \(0.960\) for the host default and \(0.944\) for
Baseline (Known Preference). Their identical mean accuracies show no observable
accuracy benefit from onboarding in this experiment. Baseline (Known
Preference) achieves the highest mean alignment, \(0.795\), as expected from
giving the host the complete profile. Rule-Only and FABLE (no cost) each obtain
\(0.784\), whereas FABLE (full) obtains \(0.711\); the method with the highest
accuracy therefore does not also have the highest alignment.

\paragraph{Online adaptation and cost.}
The full--frozen comparison isolates posterior updating: the two methods share
the factorization, onboarding, default scores, costs, and evaluation protocol.
Updating the posterior raises mean cost-aware reward from \(0.235\) to
\(0.366\), a difference of \(0.131\). FABLE (no cost) obtains mean reward
\(0.740\), but this value is not directly comparable to cost-aware rewards
because it omits the action-cost penalty.

\paragraph{Factorization.}
FABLE (no promotion) obtains accuracy \(0.964\), alignment \(0.758\), and
reward \(0.356\), compared with \(0.960\), \(0.736\), and \(0.361\) for
Non-Factorized LinTS. The two representations therefore have comparable
performance here, while factorization reduces the feature dimension from 27
complete-action coordinates to 9 criterion-level coordinates and dense
posterior storage from \(27^2\) to \(9^2\) entries per matrix.

\subsection{Cross-Benchmark Findings and Limitations}
\label{subsec:experiments-summary}

The PAHF results show different ablation patterns across the two environments.
FABLE (full) has higher reward and success than Rule-Only in both environments.
The component-accuracy table exhibits a corresponding cross-domain difference.

On tau2-bench's four-domain evaluation, FABLE (full) has the highest observed
personalized reward and alignment and ties the highest task-success mean. Its
paired reward and alignment gains over the host have positive 95\% CIs,
whereas task success remains unresolved; domain-level success is lower on
Airline, tied on Retail, and higher on Telecom and Banking Knowledge. On
Math500, FABLE (full) and FABLE (no onboarding) tie
for the highest mean accuracy, while Baseline (Known Preference) has the
highest mean alignment. Within the matched cost-aware comparison, FABLE (full)
has higher mean reward than FABLE (frozen posterior); the no-cost reward is not
directly comparable because it omits the action-cost penalty. These findings
describe the rankings in the reported tables and do not establish that one
policy uniformly dominates across metrics or benchmarks.

Several limitations constrain stronger conclusions. PAHF contains \(30\)
rounds per user and does not report repeated-seed uncertainty. Tau2-bench
reports three-seed variation and cluster-bootstrap intervals over 80 unique
domain--task--profile clusters, but evaluates only two synthetic verbosity
profiles. Math500 uses one synthetic user; its five-seed standard deviations
quantify experimental variation rather than population-level user variability.
Accordingly, the experiments support benchmark-specific comparisons of reward,
success, alignment, and component accuracy rather than general causal claims
about the effects of individual algorithm components.

%% file: appendix/fable_deepseek_v4_two_direction.tex
\section{Controlled Two-Direction DeepSeek V4 Tool-Use Calibration}
\label{app:fable-v4-two-direction}

\paragraph{Experimental question and design.}
We test whether FABLE can learn context-dependent tool-use preferences when the
same pair of actions has opposite target orderings across contexts. The
experiment holds memory (\texttt{no\_memory}) and response style
(\texttt{concise}) fixed and restricts the feasible action set to
\texttt{web\_search} and \texttt{no\_tool}. The 20-round curriculum contains
ten current-information prompts, for which \texttt{web\_search} is preferred,
and ten stable factual prompts, for which \texttt{no\_tool} is preferred. Their
order and direction labels were fixed before execution. Both online and frozen
policies use the same prior, seed, fixed contexts, default score, cost, sampling
scale, and exact provider-verified model \texttt{deepseek-v4-flash}. The frozen
policy never updates its posterior.

\paragraph{Reward and measurement.}
For each context direction, a target action \(a_t^\star\) is fixed by the
tool-use policy. The scalar selected-action reward is
\(y_t=2\mathbb{I}[a_t=a_t^\star]-1\) and is the only feedback used for the
posterior update. Actual host-model tool invocation is recorded as an auxiliary
execution measure, but it is not used to update the action-score posterior.
This separation keeps the learned quantity aligned with the policy decision
evaluated by the two canonical probes.

\begin{table}[t]
\centering
\scriptsize
\begin{tabular}{@{}l rrrr@{}}
\toprule
Policy & Overall & Current & Stable & Cumulative \(y_t\) \\
\midrule
Online & 18/20 & 9/10 & 9/10 & 16 \\
Frozen & 14/20 & 8/10 & 6/10 & 8 \\
\bottomrule
\end{tabular}
\caption{Selected-action accuracy in the controlled experiment. Current and
stable each contain ten prompts.}
\label{tab:fable-v4-two-dir-summary}
\end{table}

\paragraph{Results.}
The online policy selected the target action in 18/20 rounds (90\%), compared
with 14/20 (70\%) for the frozen control
(Table~\ref{tab:fable-v4-two-dir-summary}). Online accuracy was 9/10 in each
direction; frozen accuracy was 8/10 for current information and 6/10 for stable
facts. The online mean reward increased from 0.6 in rounds 1--10 to 1.0 in
rounds 11--20, with all of the final ten selected actions correct. The
cumulative online and frozen rewards were 16 and 8, respectively.

\begin{table}[t]
\centering
\scriptsize
\begin{tabular}{@{}p{0.52\columnwidth} rrr@{}}
\toprule
Canonical action-score comparison & Initial & Final & Change \\
\midrule
Web over no tool for current information & 0.529 & 0.885 & +0.356 \\
No tool over web for stable information & 0.511 & 0.999 & +0.488 \\
\bottomrule
\end{tabular}
\caption{Online Gaussian-posterior probabilities that the preferred complete
action score exceeds its comparison action under the fixed canonical context.}
\label{tab:fable-v4-two-dir-probes}
\end{table}

The current-information probe increased from 0.529 to 0.885; the
stable-information probe increased from 0.511 to 0.999. Thus both focused
comparisons moved in their specified direction. These quantities are posterior
action-score comparisons, not probabilities of population-level human
preferences.

\begin{figure}[t]
\centering
\includegraphics[width=\linewidth]{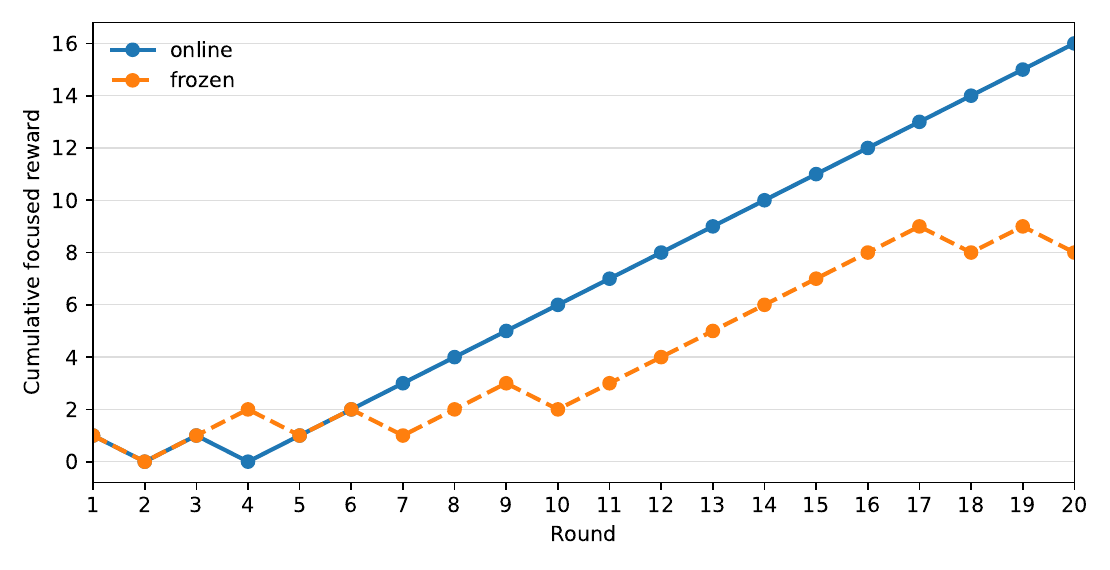}
\small (a) Cumulative focused reward.\par\medskip
\includegraphics[width=\linewidth]{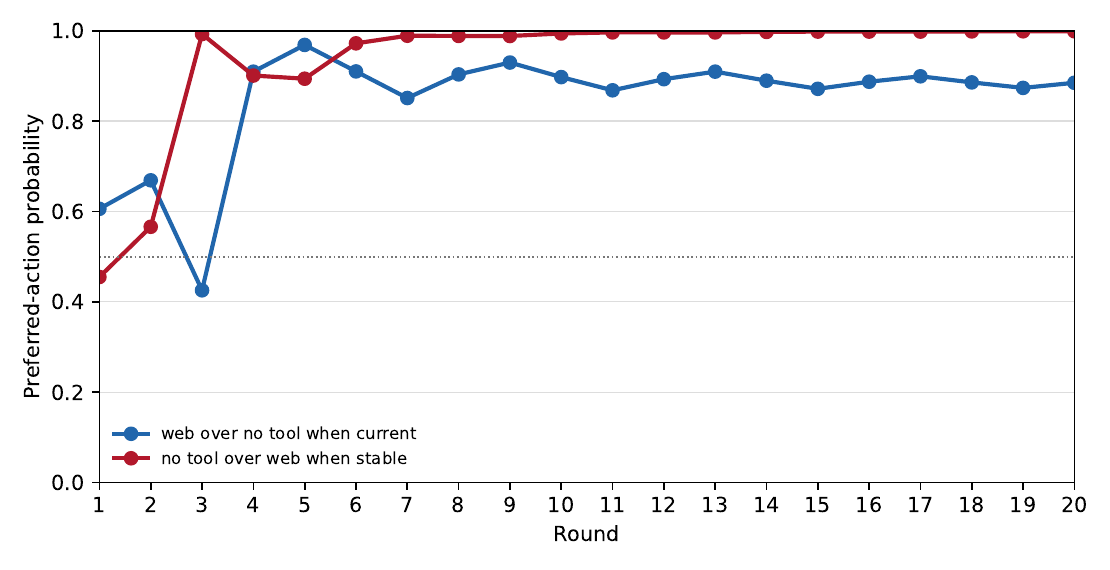}
\small (b) Online posterior action-score comparisons.
\caption{Controlled two-direction experiment. Curves are generated directly
from trajectory logs.}
\label{fig:fable-v4-two-dir-curves}
\end{figure}

\paragraph{Uncertainty and scope.}
Of the 6 discordant paired rounds, 5 favored online learning and 1 favored the
frozen control. The exact two-sided McNemar/binomial test gives \(p=0.2188\), so
this 20-round, one-seed run does not establish a statistically significant
policy-level difference at the 0.05 threshold. It provides mechanism-level
evidence that repeated direction-specific feedback can move both corresponding
posterior comparisons together. The audit log contains 90 execution and 40
independent judge API calls, with no context-model calls by design because the
two policy contexts were fixed. This synthetic run is not a real-user study and
supports no population-level claim.

\begin{table*}[t]
\centering
\scriptsize
\setlength{\tabcolsep}{2.5pt}
\begin{tabular}{@{}c p{0.38\textwidth} l cc cc@{}}
\toprule
R & Exact prompt & Target & Online \(a_t\) & \(y_t\) & Frozen \(a_t\) & \(y_t\) \\
\midrule
1 & As of today, what is the latest stable Python release? Answer in one sentence. & current (web) & \texttt{web\_search} & +1 & \texttt{web\_search} & +1 \\
2 & At standard atmospheric pressure, at what temperature does pure water freeze? Answer in one sentence. & stable (no tool) & \texttt{web\_search} & -1 & \texttt{web\_search} & -1 \\
3 & What does an HTTP 404 status mean? Answer in one sentence. & stable (no tool) & \texttt{no\_tool} & +1 & \texttt{no\_tool} & +1 \\
4 & As of today, which Node.js release line is the active LTS? Answer in one sentence. & current (web) & \texttt{no\_tool} & -1 & \texttt{web\_search} & +1 \\
5 & As of today, what is the latest stable Rust release? Answer in one sentence. & current (web) & \texttt{web\_search} & +1 & \texttt{no\_tool} & -1 \\
6 & What is the time complexity of binary search on a sorted array? Answer in one sentence. & stable (no tool) & \texttt{no\_tool} & +1 & \texttt{no\_tool} & +1 \\
7 & Is a Python tuple mutable or immutable? Answer in one sentence. & stable (no tool) & \texttt{no\_tool} & +1 & \texttt{web\_search} & -1 \\
8 & As of today, what is the latest Ubuntu LTS point release? Answer in one sentence. & current (web) & \texttt{web\_search} & +1 & \texttt{web\_search} & +1 \\
9 & As of today, what is the latest stable Django release? Answer in one sentence. & current (web) & \texttt{web\_search} & +1 & \texttt{web\_search} & +1 \\
10 & What is Earth's natural satellite called? Answer in one sentence. & stable (no tool) & \texttt{no\_tool} & +1 & \texttt{web\_search} & -1 \\
\bottomrule
\end{tabular}
\caption{Exact controlled prompts and selected tool actions, rounds 1--10. The
target column gives the action preferred by the fixed direction label;
\(y_t=+1\) iff the selected action matches that target.}
\label{tab:fable-v4-two-dir-rounds-1-10}
\end{table*}

\begin{table*}[t]
\centering
\scriptsize
\setlength{\tabcolsep}{2.5pt}
\begin{tabular}{@{}c p{0.38\textwidth} l cc cc@{}}
\toprule
R & Exact prompt & Target & Online \(a_t\) & \(y_t\) & Frozen \(a_t\) & \(y_t\) \\
\midrule
11 & What does CSV stand for? Answer in one sentence. & stable (no tool) & \texttt{no\_tool} & +1 & \texttt{no\_tool} & +1 \\
12 & As of today, what is the current stable Google Chrome major version? Answer in one sentence. & current (web) & \texttt{web\_search} & +1 & \texttt{web\_search} & +1 \\
13 & As of today, what is the latest stable npm version? Answer in one sentence. & current (web) & \texttt{web\_search} & +1 & \texttt{web\_search} & +1 \\
14 & What is the purpose of a Git commit? Answer in one sentence. & stable (no tool) & \texttt{no\_tool} & +1 & \texttt{no\_tool} & +1 \\
15 & What is the main difference between TCP and UDP? Answer in one sentence. & stable (no tool) & \texttt{no\_tool} & +1 & \texttt{no\_tool} & +1 \\
16 & As of today, what is the latest stable TypeScript release? Answer in one sentence. & current (web) & \texttt{web\_search} & +1 & \texttt{web\_search} & +1 \\
17 & As of today, what is the latest stable PyTorch release? Answer in one sentence. & current (web) & \texttt{web\_search} & +1 & \texttt{web\_search} & +1 \\
18 & What is a prime number? Answer in one sentence. & stable (no tool) & \texttt{no\_tool} & +1 & \texttt{web\_search} & -1 \\
19 & What does RAM stand for in computing? Answer in one sentence. & stable (no tool) & \texttt{no\_tool} & +1 & \texttt{no\_tool} & +1 \\
20 & As of today, what is the latest stable Go release? Answer in one sentence. & current (web) & \texttt{web\_search} & +1 & \texttt{no\_tool} & -1 \\
\bottomrule
\end{tabular}
\caption{Exact controlled prompts and selected tool actions, rounds 11--20. The
target column gives the action preferred by the fixed direction label;
\(y_t=+1\) iff the selected action matches that target.}
\label{tab:fable-v4-two-dir-rounds-11-20}
\end{table*}